\crefname{subproblem}{Subproblem}{Subproblems}
\Crefname{subproblem}{Subproblem}{Subproblems}
\newcommand{\KwInput}[1]{\textbf{Input:} #1\\}
\newcommand{\KwOutput}[1]{\textbf{Output:} #1\\}
\newtheorem{theorem}{Theorem}[section]
\newtheorem{lemma}[theorem]{Lemma}
\newtheorem{definition}[theorem]{Definition}
\newtheorem{proposition}[theorem]{Proposition}
\newtheorem{corollary}[theorem]{Corollary}
\newcommand{\scr}{\mathscr}
\DeclareMathOperator*{\argmax}{argmax}
\newcommand{\Ascr}{\mathscr{A}}
\newcommand{\wt}{\widetilde}
\renewcommand{\H}{\mathcal{H}}
\newcommand{\cH}{\mathcal{H}}
\newcommand{\bt}{\boldsymbol{t}}
\newcommand{\bx}{\boldsymbol{x}}
\newcommand{\bs}{\boldsymbol{s}}
\newcommand{\N}{\mathbb{N}}
\newcommand{\D}{\mathcal{D}}
\newcommand{\G}{\mathcal{G}}
\newcommand{\Log}{\operatorname{Log}}
\newcommand{\M}{\mathcal{M}}
\newcommand{\DS}{\mathrm{DS}}
\newcommand{\n}{\mathrm{N}}
\newcommand{\opt}{h^{\star}}
\renewcommand{\deg}{\mathsf{deg}}
\renewcommand{\d}{\mathsf{d}}
\newcommand{\outdeg}{\mathsf{outdeg}}
\newcommand{\RE}{\mathrm{RE}}
\newcommand{\Maj}{\mathrm{Maj}}
\newcommand{\A}{\mathcal{A}}
\newcommand{\br}{\boldsymbol{r}}
\newcommand{\unif}{\mathrm{Unif}}
\renewcommand{\epsilon}{\varepsilon}
\newcommand{\eps}{\varepsilon}
\newcommand{\X}{\mathcal{X}}
\newcommand{\Y}{\mathcal{Y}}
\newcommand{\Z}{\mathcal{Z}}
\newcommand{\er}{\mathrm{er}}
\newcommand{\ind}{\boldsymbol{1}}
\newcommand{\F}{\mathcal{F}}
\renewcommand{\P}{{\mathbb{P}}}
\newcommand{\R}{\mathbb{R}}
\newcommand{\E}{{{\mathbb{E}}}}
\newcommand{\power}{\mathbbm{2}}
\newcommand{\size}{\mathsf{k}}
\newcommand{\tsum}{\textstyle\sum}
\newcommand{\oig}{\mathrm{OIG}}
\newcommand{\supp}{\mathrm{supp}}
\newcommand{\MW}{\mathsf{MW}}
\newcommand{\AG}{\mathrm{AG}}
\newcommand{\md}{\mathsf{md}}
\newcommand{\avgdeg}{\mathsf{avgdeg}}
\newcommand{\avgoutdeg}{\mathsf{avgoutdeg}}
\newcommand{\learner}{\mathsf{MAPL}}
\newcommand{\dir}{\mathrm{dir}}
\newcommand\nnfootnote[2]{%
  \begin{NoHyper}
  \renewcommand\thefootnote{#1}\footnotetext{#2}%
  \end{NoHyper}
}
\title{Sample Complexity of Agnostic Multiclass Classification:\\ 
Natarajan Dimension Strikes Back }
\author{ }
\date{}
\begin{document}

\author{
Alon Cohen$^{*,\dag}$
\and
Liad Erez$^{*}$
\and
Steve Hanneke$^{\S}$
\and
Tomer Koren$^{*,\dag}$
\and
Yishay Mansour$^{*,\dag}$
\and
Shay Moran$^{\ddag,\dag}$
\and
Qian Zhang$^{\S}$
}

\maketitle

\nnfootnote{*}{Tel Aviv University, Tel Aviv, Israel}
\nnfootnote{\textdagger}{ Google Research Tel Aviv, Israel}
\nnfootnote{\S}{Purdue University, Indiana, USA}
\nnfootnote{\textdaggerdbl}{Technion---Israel Institute of Technology, Haifa, Israel}

\begin{abstract}
The fundamental theorem of statistical learning establishes that binary PAC learning is governed by a single parameter---the Vapnik-Chervonenkis ($\mathtt{VC}$) dimension---which controls both learnability and sample complexity. Extending this characterization to multiclass classification has long been challenging, since the early work of Natarajan in the late 80's that proposed the Natarajan dimension ($\mathtt{Nat}$) as a natural analogue of the VC dimension. 
Daniely and Shalev-Shwartz (2014) introduced the $\mathtt{DS}$ dimension, later shown by Brukhim et al.\ (2022) to characterize multiclass \emph{learnability}.  
Brukhim et al.\ (2022) also demonstrated that the Natarajan and $\mathtt{DS}$ dimensions can diverge arbitrarily, so that multiclass learning appears to be governed by $\mathtt{DS}$ rather than $\mathtt{Nat}$.
We show that the agnostic multiclass PAC sample complexity is in fact governed by \emph{two distinct dimensions}. Specifically, we prove nearly tight agnostic sample complexity bounds that, up to logarithmic factors, take the form
$$
\frac{\mathtt{DS}^{1.5}}{\epsilon} + \frac{\mathtt{Nat}}{\epsilon^2}
$$
where $\epsilon$ is the excess risk. This bound is tight up to a $\sqrt{\mathtt{DS}}$ factor in the first lower-order term, nearly matching known $\mathtt{Nat}/\epsilon^2$ and $\mathtt{DS}/\epsilon$ lower bounds. 
The first term reflects the DS-controlled regime, while the second reveals that the Natarajan dimension still dictates asymptotic behavior for small $\epsilon$. Thus, unlike in binary or online classification---where a single dimension (VC or Littlestone) controls both phenomena---multiclass learning inherently involves \emph{two structural parameters}.

Our technical approach departs significantly from traditional agnostic learning methods based on uniform convergence or reductions-to-realizable techniques. A key ingredient is a novel online procedure, based on a self-adaptive multiplicative-weights algorithm which performs a label-space reduction. This approach may be of independent interest and find further applications.
\end{abstract}

\section{Introduction}
Most theoretical work in PAC learning has focused on the binary case, yet many natural learning problems are multiclass, with a label space that can be vast or continually expanding.  
In recommender systems, for instance, new products or restaurants are constantly added, and in language modeling, new words emerge over time—so the label space is never truly fixed. Such settings challenge existing theory and reveal phenomena absent from the binary case,\footnote{
For instance, empirical risk minimization (ERM)—which underlies much of binary PAC theory—can fail to learn certain multiclass hypothesis classes; this breakdown parallels the departure from ERM principles often observed in practical learning algorithms.} making multiclass learning both practically relevant and theoretically compelling.

In the binary classification setting, a cornerstone of statistical learning theory is the \emph{fundamental theorem of PAC learning}, which links learnability to a single combinatorial parameter—the Vapnik--Chervonenkis (VC) dimension~\cite{Vapnik68}.  
For a hypothesis class $\mathcal{H} \subseteq \{0,1\}^\mathcal{X}$, the VC dimension is the largest integer $d$ for which there exist points $x_1, \dots, x_d \in \mathcal{X}$ that can be labeled in all $2^d$ possible ways by functions in $\mathcal{H}$.  
The theorem asserts that a class is PAC learnable if and only if its VC dimension is finite.  
Moreover, the sample complexity of learning $\mathcal{H}$ 
with error $\epsilon$ is $\Theta(\mathtt{VC}/\epsilon)$ in the realizable case and is $\Theta(\mathtt{VC}/\epsilon^2)$ in the agnostic case.\footnote{The agnostic sample complexity is defined as the smallest sample size $m$ such that there exists an algorithm guaranteeing misclassification error rate at most $\epsilon$-larger than the best $h^\star \in \mathcal{H}$, with probability at least $1-\delta$, when given as input  
$m$ i.i.d.\ examples from an unknown arbitrary distribution over $\mathcal{X} \times \mathcal{Y}$.  The realizable case restricts to the special case where $h^\star$ has error rate zero.  See Section~\ref{sec:definitions-of-sample-complexity} for formal definitions.}

A natural question is whether there exists an analogue of the fundamental theorem of PAC learning in the multiclass setting.  
That is:
\begin{center}
\emph{Can multiclass learnability be characterized through a single combinatorial dimension,\\
together with corresponding sample complexity bounds?}
\end{center}
This problem has attracted considerable attention since the late 1980s, beginning with the pioneering work of Natarajan and Tadepalli~\citep{natarajan1988two,natarajan:89}.  

Natarajan introduced a natural multiclass analogue of the VC dimension, now called the \emph{Natarajan dimension}.  
Given a hypothesis class $\mathcal{H} \subseteq \mathcal{Y}^\mathcal{X}$, a set of points $x_1, \dots, x_d \in \mathcal{X}$ is said to be \emph{Natarajan-shattered} by $\mathcal{H}$ if there exist two labelings 
$f, g : \{x_1, \dots, x_d\} \to \mathcal{Y}$ such that $f(x_i) \neq g(x_i)$ for all~$i$, and for every subset $S \subseteq \{1, \dots, d\}$ there exists $h \in \mathcal{H}$ that agrees with~$f$ on $x_i$, for~$i\in S$ and with~$g$ on $x_j$ for $j\notin S$.  
The Natarajan dimension of~$\mathcal{H}$ is the largest~$d$ for which such a set exists.  
Note that in the binary setting ($\mathcal{Y}=\{0,1\}$), the Natarajan dimension coincides with the VC dimension.  
Furthermore, it provides a lower bound of order~$\mathtt{Nat}/\epsilon^2$ on the sample complexity of agnostic PAC learning, following the standard ``no-free-lunch'' argument that also yields the~$\mathtt{VC}/\epsilon^2$ lower bound in the binary case.

Alongside the Natarajan dimension, Natarajan also considered another combinatorial quantity, later called the \emph{Graph dimension}, which upper bounds the sample complexity by $\mathtt{Graph}/\epsilon^2$.  
The Graph dimension exactly characterizes \emph{uniform convergence} of empirical and true errors—that is, it captures when the empirical error uniformly approximates the true error across all hypotheses in the class.  
In such cases, \emph{empirical risk minimization (ERM)}, which simply chooses a hypothesis minimizing the empirical error, achieves low generalization error and hence guarantees learnability.  
However, Natarajan already observed that there exist \emph{learnable} classes with infinite Graph dimension but finite Natarajan dimension.
These examples demonstrate that uniform convergence and ERM do not suffice to explain multiclass PAC learning, in sharp contrast to the binary case.
At the same time, he showed that when the number of labels is bounded, learnability is completely characterized by the Natarajan and Graph dimensions—both are finite if and only if the class is learnable.
The central open question, therefore, was whether the Natarajan dimension continues to characterize multiclass PAC learnability once the label space is unbounded.

Building on Natarajan’s ideas, subsequent works by Ben-David, Cesa-Bianchi, Haussler, and Long \citep{ben-david:95} and by Haussler and Long \citep{haussler:95} proposed general frameworks for defining combinatorial dimensions in the multiclass setting.
These works clarified and extended Natarajan’s characterization when the number of labels is bounded, but the general unbounded case remained elusive.  
A later line of research investigated broader principles governing multiclass learning~\citep{NIPS2006_a11ce019,DanielySS12,daniely:15,daniely2014optimal,daniely2015inapproximability}.  
These studies revealed a sharp contrast between the bounded and unbounded regimes: in particular, they showed that not only can the classical ERM principle fail (as already observed by Natarajan), but even significantly more general variants of ERM break down in the unbounded-label setting.  
Notably, \cite{daniely2014optimal} proved that certain multiclass hypothesis classes are \emph{not even properly learnable}—that is, any learner must output hypotheses outside the class in order to succeed.  
This breakdown of ERM-based approaches, already hinted at by Natarajan’s examples, pointed to a fundamental gap in our understanding of multiclass learnability.

A major step forward came in 2014, when Daniely and Shalev-Shwartz introduced a new dimension in the context of \emph{one-inclusion graph} algorithms—a universal family of transductive learning rules capturing the combinatorial structure underlying multiclass learnability~\citep{daniely2014optimal}.  
This new quantity, now called the \emph{Daniely--Shalev-Shwartz (DS) dimension}, plays a central role in the modern theory.  

Formally, the DS dimension of a hypothesis class $\mathcal{H} \subseteq \mathcal{Y}^\mathcal{X}$ is the largest integer $d$ such that there exists a set of points $\{x_1,\ldots,x_d\} \subseteq \X$ for which the set $\{(h(x_1),\dots,h(x_d)) : h \in \mathcal{H}\}$ contains a finite \emph{pseudo-cube}.
A subset $B \subseteq \mathcal{Y}^d$ is called a pseudo-cube if for every $b \in B$ and every coordinate $i \le d$ there exists $b' \in B$ such that $b'(i) \ne b(i)$ and $b'(j)=b(j)$ for all $j \ne i$. In short: \emph{every vector has a neighbor in every direction.}  
In the binary case, pseudo-cubes coincide exactly with standard cubes: $B \subseteq \{0,1\}^d$ is a pseudo-cube if and only if $B = \{0,1\}^d$.  
Similarly, in the multiclass case, Natarajan shattering corresponds to pseudo-cubes that are isomorphic to binary cubes, but there also exist far richer pseudo-cubes whose structure is more intricate (see~\citep{brukhim2022characterization} for examples).

Daniely and Shalev-Shwartz showed that the DS dimension provides a lower bound of order $\mathtt{DS}/\epsilon$ on the sample complexity, even in the realizable case where the best hypothesis in the class achieves zero error.  
Later, Brukhim, Carmon, Dinur, Moran, and Yehudayoff~\citep{brukhim2022characterization} proved that the DS dimension characterizes multiclass PAC learnability: a class is learnable if and only if it has finite DS dimension.  
They further established quantitative upper bounds of roughly $\mathtt{DS}^{1.5}/\epsilon$ in the realizable case and $\mathtt{DS}^{1.5}/\epsilon^2$ in the agnostic case (up to logarithmic factors).

Taken together, these results suggested that the DS dimension is the fundamental quantity governing multiclass learning, both qualitatively and quantitatively.  
The Natarajan dimension, by contrast, appeared to have lost its relevance—indeed, Brukhim et al.\ constructed classes with Natarajan dimension $1$ that are nevertheless not learnable.  
This viewpoint naturally leads to the conjecture that the optimal sample complexity of multiclass PAC learning scales as $\mathtt{DS}/\epsilon^2$.  

\paragraph{Main Result.}
In this work, we show that, perhaps surprisingly, the Natarajan dimension remains significant: it determines the leading asymptotic term in the agnostic sample complexity. 
To state our main result, we introduce the \emph{realizable dimension}.  
For a concept class~$\mathcal{H}$, let~$d_{\mathrm{real}}(\mathcal{H})$ denote the optimal number of examples required to achieve risk at most a constant $c < 1/2$ in the realizable case.\footnote{The specific choice of $c$ is not important—any constant $<\tfrac{1}{2}$ would lead to the same guarantees in our main result, up to log factors.}
%
In the binary-labeled case,~$d_{\mathrm{real}}(\mathcal{H})$ is proportional to the VC dimension, while in the multiclass setting it lies between~$\Omega(\mathtt{DS})$ and~$\widetilde O(\mathtt{DS}^{1.5})$~\cite{daniely2014optimal,brukhim2022characterization}.

\begin{theorem}[Main]
\label{thm:main-informal}
Let $\mathcal{H} \subseteq \mathcal{Y}^\mathcal{X}$ be a concept class, and let $\M_{\H}^{\AG}(\epsilon,\delta)$ denote the sample complexity of 
\emph{agnostic} PAC learning~$\mathcal{H}$ 
with error~$\epsilon$ and confidence~$1-\delta$.  
Then,
\[
\M_{\H}^{\AG}(\epsilon,\delta) \;=\;
\widetilde\Theta\!\left(
\frac{\mathtt{Nat}+\log(1/\delta)}{\epsilon^2}
+\frac{d_{\mathrm{real}}}{\epsilon}
\right),
\]
where the $\widetilde\Theta$ notation hides logarithmic factors in~$\mathtt{Nat}$,~$d_{\mathrm{real}}$, and~$1/\epsilon$.
A detailed and fully quantified version of this result appears in Theorem~\ref{thm:main}.
\end{theorem}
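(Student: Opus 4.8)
The plan is to establish the upper bound and the lower bound separately, since the lower bound follows quickly from known results and the upper bound is where the real work lies.

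For the lower bound, the term $d_{\mathrm{real}}/\epsilon$ is inherited from the realizable case: any agnostic learner is in particular a realizable learner, and a standard amplification/reduction argument shows that achieving excess risk $\epsilon$ agnostically with $m$ samples implies achieving risk $O(1)$ realizably with $O(\epsilon m)$ samples, so $m = \Omega(d_{\mathrm{real}}/\epsilon)$. The term $(\mathtt{Nat} + \log(1/\delta))/\epsilon^2$ follows from the classical no-free-lunch argument: Natarajan-shatter $\mathtt{Nat}$ points using the two labelings $f,g$, plant a product distribution that is biased by $\epsilon$ on each shattered point independently, and argue that distinguishing the bias directions requires $\Omega(\mathtt{Nat}/\epsilon^2)$ samples; the $\log(1/\delta)/\epsilon^2$ piece is the usual single-point confidence lower bound. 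These are routine and I expect the excerpt's earlier sections to already contain the needed ingredients.

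The heart of the matter is the upper bound $\widetilde O\!\left(\frac{\mathtt{Nat}}{\epsilon^2} + \frac{d_{\mathrm{real}}}{\epsilon}\right)$, which must \emph{not} route through uniform convergence (the Graph dimension can be infinite) nor through a black-box realizable-to-agnostic reduction (that would cost $d_{\mathrm{real}}/\epsilon^2$, giving the weaker $\mathtt{DS}^{1.5}/\epsilon^2$ bound of Brukhim et al.). The strategy suggested by the abstract is a two-stage procedure. First, use a realizable-style (one-inclusion-graph-based) learner as a \emph{label-space compressor}: run it on a modest subsample of size $\widetilde O(d_{\mathrm{real}})$ to produce a small collection of candidate hypotheses / a small effective label set per point, so that on the bulk of the data we have reduced to a problem whose Natarajan-type complexity controls things. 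Concretely, I would aim to show that after this reduction the residual class behaves like one of Natarajan dimension $\widetilde O(\mathtt{Nat})$ restricted to a small label alphabet, where Graph dimension and Natarajan dimension are polynomially related and classical agnostic bounds of the form $\widetilde O(\mathtt{Nat}/\epsilon^2)$ become available. The online multiplicative-weights component enters here: rather than committing to a fixed reduction, run a self-adaptive MW algorithm over the experts produced by the compressor, using an online-to-batch conversion so that the regret term scales like $\sqrt{(\mathtt{Nat}\log T)/m}$ while the number of rounds / label-space size stays controlled, and the "price" of the reduction is only $d_{\mathrm{real}}/\epsilon$ rather than $d_{\mathrm{real}}/\epsilon^2$. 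Balancing the two error contributions — the $O(\sqrt{\mathtt{Nat}/m})$ statistical term and the $O(d_{\mathrm{real}}/m)$ reduction term — against the target excess risk $\epsilon$ yields exactly $m = \widetilde O(\mathtt{Nat}/\epsilon^2 + d_{\mathrm{real}}/\epsilon)$.

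The main obstacle, I expect, is making the label-space reduction both \emph{correct} and \emph{cheap}: one must guarantee that with high probability the compressed candidate set contains (or $\epsilon$-approximates) the behavior of the optimal $h^\star$ on the relevant part of the distribution, while keeping the log-of-number-of-experts factor from blowing the sample complexity — this is precisely where a \emph{self-adaptive} multiplicative-weights scheme (one that does not need to know the comparator's loss or the horizon in advance) is needed, and where the interaction between the agnostic (non-realizable) noise and the one-inclusion-graph machinery, which is natively a realizable object, must be carefully controlled. A secondary technical point is handling the $\log(1/\delta)$ dependence cleanly: the naive confidence boosting by running $\log(1/\delta)$ copies and taking a validated majority should work, but one must check that validation on a held-out sample of size $\widetilde O(\log(1/\delta)/\epsilon^2)$ preserves the claimed bound and does not reintroduce a $d_{\mathrm{real}}$-dependent confidence cost.
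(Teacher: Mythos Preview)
Your high-level decomposition is right---label-space reduction using realizable-rate samples, then Natarajan-based agnostic learning on the reduced problem---but the role you assign to multiplicative weights is off in a way that would break the argument. You place MW at the \emph{final classification} step and expect an additive regret term of order $\sqrt{(\mathtt{Nat}\log T)/m}$. The paper explicitly rejects this: an additive $\sqrt{T\log|\mathcal F|}$ regret would force $\log|\mathcal F|/\epsilon^2 \approx d_{\mathrm{real}}/\epsilon^2$ samples in the list-learning stage, destroying the separation between the $1/\epsilon$ and $1/\epsilon^2$ terms. Instead, MW is used \emph{only} for constructing the list (Subproblem~2), with a bespoke adaptive reward $r_t(h)=\mathbf{1}\{h(x_t)=y_t \text{ and } h_s(x_t)\neq y_t\ \forall s<t\}$. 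The point of this reward is that \emph{any} online algorithm has expected cumulative reward at most $1$ (it telescopes to a single probability), so a \emph{multiplicative} MW regret bound transfers this to every fixed $f^\star$, and monotonicity of the expected rewards in $t$ then yields $\mathbb E[r_T(f^\star)]\le O(\log|\mathcal F|)/T$---i.e.\ realizable rate $\log|\mathcal F|/\epsilon$, not $\log|\mathcal F|/\epsilon^2$. This is the technical heart of the paper and is absent from your sketch.

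Two further points your proposal does not address and that the paper treats as essential. First, the list must satisfy the \emph{localized} guarantee $\mathbb P(h^\star(X)=Y \wedge h^\star(X)\notin\text{list}(X))\le\epsilon$, not merely $\mathbb P(Y\notin\text{list}(X))\le\mathrm{er}_P(h^\star)+\epsilon$; the paper gives an explicit example (constant classifiers with a mixed label distribution) where the weaker guarantee leaves every list-bounded classifier with error $1-O(1/K)$. Your sketch does not distinguish these. Second, the actual pipeline has \emph{three} stages, not two: before running MW one must first build a finite proxy class $\mathcal F$ (of size $n_1^{O(d_{\mathrm{real}}\log n_1)}$) by enumerating all compression sets of a boosted realizable learner on $n_1=\widetilde O(d_{\mathrm{real}}/\epsilon)$ samples; MW then operates over this $\mathcal F$. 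The final Natarajan step (Subproblem~3) is a separate sample-compression argument via \emph{partial concept classes}, since restricting $\mathcal H$ to list-consistent hypotheses is typically empty. On the lower bound, your reduction ``agnostic $\epsilon$-learner $\Rightarrow$ realizable constant-error learner on $O(\epsilon m)$ samples'' is not quite standard; the paper instead constructs a hard realizable distribution directly from a high-average-degree subgraph of the one-inclusion graph witnessing $d_{\mathrm{real}}$.
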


Since our sample complexity bound is tightly expressed in terms of the realizable dimension, any improvement in the known upper and lower sample complexity bounds in the realizable case would directly imply, through our approach, analogously improved bounds in the more general agnostic case. 
As noted above, the best bounds for the realizable dimension are currently $\Omega(\mathtt{DS})$ and~$\widetilde O(\mathtt{DS}^{1.5})$, which implies that the agnostic multiclass sample complexity is bounded by
\[
\widetilde O\!\left(
\frac{\mathtt{Nat}+\log(1/\delta)}{\epsilon^2}
+\frac{\mathtt{DS}^{1.5}}{\epsilon}
\right),
\]
and this bound is tight up to a $\sqrt{\mathtt{DS}}$ factor in the lower-order $1/\epsilon$ term.

The appearance of the realizable dimension in our bound also hints at the structure of our algorithm.  
It proceeds in two stages.  
In the first stage, we invoke an optimal realizable-case learner to construct a short \emph{list predictor}, effectively reducing the potentially unbounded label space to a bounded one.  
The sample complexity of this step corresponds to the second summand $\widetilde{\Theta}(d_{\mathrm{real}}/\epsilon)$ in the bound above.  
In the second stage, we learn the induced class over the reduced label space and apply Natarajan-dimension upper bounds, which hold when the label space is bounded.  
The first stage---label-space reduction---relies on a novel multiplicative-weights-based learning procedure, inspired by online learning methods.  
We believe this component is of independent interest and may find further applications.  
The next section provides an overview of the algorithm and its analysis, focusing on these novel and nontrivial ingredients.

\subsection{Technical Overview}
The goal of this section is to outline the main algorithmic contribution of the paper and to highlight the central technical challenges—particularly the parts that depart from standard approaches and require new ideas. We begin with a bird’s-eye view of the overall learning algorithm, describing its structure and the roles played by its components. We then gradually zoom in on the difficulties that arise in each stage and explain how these are addressed in our framework.

\begin{itemize}
    \item[(i)] \emph{Label-space reduction.}  
    Using $\widetilde O(d_{\mathrm{real}}/\varepsilon)$ samples (the realizable-rate sample complexity), we learn a short list of classifiers $\{h_1, \ldots, h_T\}$ that together specify, for each instance $x$, a small set of candidate labels $\{h_1(x), \ldots, h_T(x)\}$.  
    This ensures that although the original label space may be extremely large or even infinite, the \emph{effective} label space---the set of labels we ever need to consider for each $x$---has bounded size. This component will be further split into two subproblems, the first of which computes an improper finite cover of $\H$, denoted by $\F$, and a second subproblem which encapsulates the main technical challenge of this paper and aims to learn the short list $\{ h_1,\ldots,h_T\} \subseteq \F$ which achieves the desired reduction in label-space size.

    \item[(ii)] \emph{Learning a list-bounded classifier.}  
    With the label space now reduced, we use $\widetilde O(\mathtt{Nat}/\varepsilon^2)$ samples to learn a classifier $h$ that, for each $x$, predicts only among the candidate labels $\{h_1(x),\ldots,h_T(x)\}$, yet still competes with the best hypothesis in the original class.
\end{itemize}

The same two-step structure was also used by \cite{brukhim2022characterization} for the realizable case.  
Their algorithm first learns a short list of classifiers \(h_1,\dots,h_T\) such that \(Y\in\{h_1(X),\dots,h_T(X)\}\) for typical $(X,Y)\sim P$, and then learns a final classifier whose predictions are restricted to this list.  
(Here and throughout the technical overview, we use \(\text{list}(X)\) as shorthand for \(\{h_1(X),\dots,h_T(X)\}\).)

A natural idea for extending this approach to the agnostic setting is to require the learned list to satisfy
\begin{align}
\label{eq:list-wrong-objective}
\P(\,Y \notin \text{list}(X)\,) 
\;\le\; \er_P(h^\star) + \epsilon,
\end{align}
where \(h^\star\) is the optimal hypothesis in \(\mathcal{H}\).
However, this guarantee is \emph{insufficient} for the success of the second stage. 
To see why this is the case, consider a multiclass problem over $K+1$ labels~$\{0,1,\ldots,K \}$ and the hypothesis class consisting of the constant classifiers $\H = \{h_0,\ldots,h_{K} \}$ where~$h_i$ classifies every point with the label $i$. 
The data distribution sets the label $Y$ equal to $0$ with prob.\ $1/3$ and otherwise uniform over $\{1,\ldots,K\}$.
Consider the list $\{ h_1,\ldots,h_K \}$ which has an expected error of $1/3$, while the best hypothesis $h_0$ has an expected error of $2/3$. 
The list thus clearly satisfies \cref{eq:list-wrong-objective}. 
However, any classifier constrained to be consistent with the list has an expected  error of $1-O(1/K)$, which is far worse than the best possible in $\H$. 
For a more detailed analysis, see \cref{sec:example}.

Instead of \cref{eq:list-wrong-objective}, what \emph{suffices} for the second stage is a stronger requirement: for every~\(h \in \mathcal{H}\), with high probability over $(X,Y)\sim P$,
\begin{align}
h(X)=Y \;\Longrightarrow\; h(X)\in \text{list}(X).
\end{align}
In words, this means that with high probability, whenever \(h\) predicts the correct label, that label also appears in the learned list.  
\emph{This condition ensures that the optimal hypothesis \(h^\star\) is captured by the list on the points where it is correct, allowing the second stage to compete with \(h^\star\) using only the reduced label space.}  
Achieving this guarantee while keeping the list short \emph{and} using only realizable-rate sample complexity is the central technical challenge of the paper and is discussed in further detail in Section~\ref{sec:label-space-reduction} below.

\paragraph{Algorithm overview.}
We now turn to a more detailed description of the algorithm.  
It is instructive to decompose it into three conceptual stages, each addressing a distinct modular subproblem.  
The output of each subproblem serves as the input to the next, creating a clean pipeline that mirrors the structure of the overall learner.  
The first two subproblems capture the \emph{label-space reduction} component of the algorithm, while the third subproblem formalizes the task of \emph{learning a list-bounded classifier}.  
This separation allows us to highlight the contribution of each stage and to isolate the technical challenges.

\begin{subproblem}[H]
\label{subproblem:finite-class}
\begin{tcolorbox}[subproblemstyle, title={Subproblem 1: Approximation by improper finite cover}]
\KwInput{A hypothesis class $\H$ of realizable dimension $d$, parameters $\eps,\delta \in (0,1)$, and a sample $S_1 \sim P^{n_1}$ of size 
$n_1 = \widetilde O\bigl( \frac{d + \log(1/\delta)}{\eps} \bigr)$.}
\KwOutput{A finite set $\F \subseteq \Y^\X$ of size $n_1^{O(d \log n_1)}$ such that for every $h^\star \in \H$, with probability at least $1-\delta$, there exists 
$f^\star \in \F$ satisfying
\[
\P\bigl(h^\star(X)=Y \ \land\ h^\star(X)\neq f^\star(X)\bigr) \le \eps.
\]
\vspace{-15pt}
}
\noindent
This step guarantees that any $h^\star \in \H$ has a proxy $f^\star \in \F$ that agrees with it on almost all correctly labeled examples.

\end{tcolorbox}

\end{subproblem}

\begin{subproblem}[H]
\label{subproblem:label-space-reduction}
\begin{tcolorbox}[subproblemstyle,title={Subproblem 2: Learning a list}]

\KwInput{A finite class $\F$, parameters $\eps,\delta \in (0,1)$, and a sample $S_2 \sim P^{n_2}$ of size 
$n_2 = O\bigl(\frac{1}{\eps}\log\frac{|\F|}{\delta}\bigr)$.}
\KwOutput{A list $\{h_1,\dots,h_{n_2}\} \subseteq \F$ such that for every $f^\star \in \F$, with probability at least $1-\delta$,
\[
\P\bigl(f^\star(X)=Y \ \land\ f^\star(X)\notin\{h_1(X),\dots,h_{n_2}(X)\}\bigr) \le \eps.
\]
\vspace{-15pt}
}
\noindent This step performs a label-space reduction: almost all correct predictions of any $f^\star \in \mathcal{F}$ fall within the list $\{h_1,\dots,h_{n_2}\}$, whose size is exponentially smaller than $\lvert \mathcal{F} \rvert$.
\end{tcolorbox}

\end{subproblem}

\begin{subproblem}[H]
\label{subproblem:classifier-from-list}
\begin{tcolorbox}[subproblemstyle,title={Subproblem 3: Learning a list-bounded classifier}]

\KwInput{A list $\{h_1,\dots,h_T\}$, a hypothesis class $\H$ with Natarajan dimension $d_N$, parameters $\eps,\delta \in (0,1)$, and a sample
$S_3 \sim P^{n_3}$ of size 
\(
n_3 = \widetilde O\bigl(\frac{d_N \log T + \log(1/\delta)}{\eps^2}\bigr).
\)
}
\KwOutput{A predictor $h$ satisfying $h(x)\in\{h_1(x),\dots,h_T(x)\}$ for all $x$, such that for every $h^\star \in \H$, with probability at least $1-\delta$,
\[
\P(h(X) = Y)
\ge 
\P\bigl(h^\star(X)=Y \ \land\ h^\star(X)\in\{h_1(X),\dots,h_T(X)\}\bigr)
- \eps.
\]
\vspace{-15pt}
}
\noindent
This stage performs list-bounded agnostic learning: we compete with the best $h^\star \in \H$, but correct predictions of $h^\star$ outside the list are treated as errors.  
Thus the learner only needs to operate over the restricted label space induced by the list, essentially reducing the problem to multiclass learning with a bounded number of labels.
\end{tcolorbox}

\end{subproblem}


\paragraph{Combining the three stages.}

Suppose all three subproblems succeed (which occurs with probability at least $1-3\delta$).  
Let $h$ be the final classifier returned by Subproblem 3, let $h^\star \in \H$ be an optimal hypothesis, and let 
$f^\star \in \F$ be the proxy guaranteed by Subproblem 1.  
The guarantees of the three subproblems give the following chain:
\begin{align*}
    \P \left(h^\star(X)=Y\right)
    &\le
    \P\left(h^\star(X)=Y \ \land\ f^\star(X)=Y\right) + \eps
    &&\text{(Subproblem 1)}\\[2mm]
    &\le
    \P\left(h^\star(X)=Y \ \land\ f^\star(X)=Y \ \land\ f^\star(X) \in \text{list}(X) \right) + 2\eps
    &&\text{(Subproblem 2)}\\[2mm]
    &=\P\left(h^\star(X)=Y \ \land\ f^\star(X)=Y \ \land\ h^\star(X) \in \text{list}(X) \right) + 2\eps\\
    &\le
    \P\left(h^\star(X)=Y \ \land\ h^\star(X) \in \text{list}(X) \right) + 2\eps \\
    &\le
    \P(h(X)=Y) + 3 \eps.
    &&\text{(Subproblem 3)}
\end{align*}


Thus, the final predictor $h$ is $O(\eps)$-competitive with the optimal hypothesis in $\H$.

\medskip

We next examine each of the three subproblems in turn, describing the obstacles that arise and the ideas that enable us to overcome them.  
While all three components are essential to the overall learner, the label-space reduction (Subproblem 2) is the most technically novel, and we devote particular attention to its construction.

\subsubsection{Approximation by improper finite cover}
\label{sec:reduction-finite-class}

%

Algorithmically, this stage proceeds by applying a carefully tailored realizable-case learner to many sub-samples of the input. 
The general strategy---constructing a finite proxy class by running a realizable learner on many subsets---has appeared in related contexts (see, e.g.,~\cite{HopkinsKLM24,hanneke:25a}).
The underlying intuition is straightforward: for any fixed hypothesis $h^\star \in \mathcal{H}$, 
running the realizable learner on the sub-sample 
consistent with $h^\star$ 
produces a hypothesis~$f^\star$ that agrees with~$h^\star$ on almost all examples in the population on which $h^\star$ is correct.
We are most-interested in applying this with $h^\star$
the best-in-class hypothesis.
However, since we do not know this best-in-class $h^\star$, we do not know which sub-sample to use, so that naively applying this strategy with an arbitrary realizable learner could require applying the learner to all subsets of the data, leading to a potentially-exponential number of hypotheses in the proxy class $\F$. 

Fortunately, this problem can be avoided if the realizable learner is a \emph{compression scheme}.
This idea is rooted in the work of \cite{NIPS2016_59f51fd6}.
Indeed, as in that work, 
%
we convert any realizable learner into a sample compression scheme (via boosting), obtaining a compression size
\[
k \;=\; O(d \log n_1).
\]
Under this transformation, 
we need only consider all possible \emph{compression sets} of size $k$, applying the reconstruction function to each such sub-sample, to produce the proxy class.  
This proxy class therefore has size $n_1^k = n_1^{O(d \log n_1)}$, only \emph{quasi-polynomial} in $n_1$.
Moreover, since one of these compression sets corresponds precisely to applying the compression scheme to the sub-sample on which $h^\star$ is correct, 
we can argue that the resulting set $\F$ of functions contains a proxy $f^\star$ with the desired property.


A further central ingredient in obtaining this latter guarantee is the use of a modified loss function in the analysis, which allows us to obtain \emph{realizable-rate} guarantees even though the underlying distribution $P$ need not be realizable.
For any fixed hypothesis $h^\star \in \mathcal{H}$, we consider the loss
\[
l(h,(x,y))
    = \ind\!\left\{\,h^\star(x)=y \;\wedge\; h(x)\neq y\,\right\}.
\]
This loss is used only for analysis (since $h^\star$ is unknown to the learner), but it has a key structural property: with respect to this loss, the proxy $f^\star$ incurs zero loss on the data.
This allows us to apply realizable-case guarantees for sample compression schemes when analyzing the generalization error of the proxy $f^\star$ 
and is what ultimately yields the required approximation property.

\subsubsection{Learning a list}
\label{sec:label-space-reduction}
In this stage, given a finite hypothesis class $\F$, the goal is to construct a list $\{ h_1,\ldots,h_{n_2} \} \subseteq \F$ such that for every $f^\star \in \F$, with high probability,
\begin{align*}
    \P\bigl(f^\star(X)=Y \ \land\ f^\star(X)\notin\{h_1(X),\dots,h_{n_2}(X)\}\bigr) \le \eps.
\end{align*}
The significant challenge arises when we require achieving this guarantee using only the realizable sample complexity for $\F$; namely
\begin{align*}
    n_2 \approx \frac{\log |\F|}{\eps}.    
\end{align*}
We accomplish this by employing an \emph{online learning} mechanism for $T=n_2$ rounds where the list $\{ h_1,\ldots,h_T \}$ is constructed gradually; namely, in each round $t \in [T]$ a new example $(x_t,y_t)$ is processed and a new hypothesis is added to the list. More specifically, we employ the \emph{Multiplicative Weights} (MW) algorithm \citep{littlestone:94,freund:97b} which is widely used in the online learning literature, over reward functions designed to adaptively approximate our objective of interest (in expectation over $(x_t,y_t) \sim P$). The MW algorithm can be summarized as follows:

\vspace{\baselineskip}
\begin{algorithm}[H]
\caption{Multiplicative Weights (MW)} \label{alg:mul_weights_overview}
Initialize $p_1$ as the uniform distribution over $\F$, step size $\eta>0$\;
\For{$t\in[T]$}{
    Sample $h_t\sim p_t$\; 
    Let $r_t(h) = \ind{\{h(x_t) = y_t \land h_s(x_t) \neq y_t\  \forall s\in[t-1]\}} \quad \forall h \in \F$\;
    Update $p_{t+1}(h) \; \propto \; p_t(h) e^{\eta r_t(h)} \quad \forall h \in \F$\;
}
\end{algorithm}
\vspace{\baselineskip}

Note that in this approach, $h_t$ is a function of only the first $t-1$ examples $(x_1,y_1),\ldots,(x_{t-1},y_{t-1})$. While processing the data in such an online manner may seem limiting, it is in fact unclear how we could achieve similar results using the entire batch of $T$ data points at once.
For instance, via VC dimension, one can show uniform (multiplicative Chernoff) concentration of losses $\ind{\{ h(x)=y \land h(x) \notin \{h_1(x),\ldots,h_T(x)\} \}}$ over $h,h_1,\ldots,h_T \in \F$
can sometimes require sample complexity at least $T/\eps \approx \log(|\F|)/\eps^2$, too large for our purposes. 
Furthermore, using an online algorithm to achieve a purely stochastic objective is a well-known methodology in the stochastic optimization literature, often referred to as \emph{online-to-batch conversion} (see, e.g., \cite{cesa2004generalization,cesa-bianchi:06}). 


The reward functions $r_t(\cdot)$ are defined such that for every $f^\star \in \F$,
\begin{align*}
    \E_{(X,Y) \sim P}[r_{T}(f^\star)] = \P\bigl(f^\star(X)=Y \ \land\ f^\star(X)\notin\{h_1(X),\dots,h_{T-1}(X)\}\bigr), 
\end{align*}
so it is evident that the goal in this stage is essentially to upper bound the expected final reward of~$f^\star$. We achieve this goal by analyzing the \emph{regret} of the MW algorithm, but we emphasize that for our purposes the regret analysis functions only as a means towards achieving our goal, and is not important in and of itself. As we will soon argue, our rewards are structured in such a way that every online algorithm achieves trivial cumulative reward in expectation, and the MW regret bound is used in tandem with this fact to deduce that every $f^\star \in \F$ also has small expected reward.

In a bit more detail, a key property of the rewards, which turns out to be crucial in our analysis, is that they depend on the past decisions of the online algorithm via $h_1,\ldots,h_{t-1}$, so that even though $(x_t,y_t)$ is sampled i.i.d.\ from $P$, the rewards are inherently \emph{adaptive}. To see why this type of adaptivity actually works in our favor, we note that the expected cumulative reward of the algorithm is bounded; namely
\begin{align*}
    \E \left[ \textstyle\sum_{t=1}^T r_t(h_t) \right] \leq 1.
\end{align*}
Indeed, a standard coupling argument shows that
\begin{align*}
    \E \left[ \textstyle\sum_{t=1}^T r_t(h_t) \right] &= \E \left[ \textstyle\sum_{t=1}^T \ind{\{h_t(x_t) = y_t \land h_s(x_t) \neq y_t \quad \forall s\in[t-1]\}} \right] \\
    &=
    \E_{(X,Y) \sim P} \left[ \textstyle\sum_{t=1}^T \ind{\{h_t(X) = Y \land h_s(X) \neq Y \quad \forall s\in[t-1]\}} \right] \\
    &=
    \P_{(X,Y) \sim P} \left( \exists t \in [T] \text{ s.t. } h_t(X)=Y \right).
\end{align*}
In fact, this property holds \emph{no matter which online algorithm is used}, as it is a direct consequence of the structure of the rewards and how they adapt to the algorithm's past decisions. 

This observation already reveals a stark contrast between our analysis and other typical uses of online learning algorithms in the literature: while standard arguments often attempt to provide good guarantees on the cumulative reward of the online algorithm, in our case we use regret guarantees to bound the cumulative reward of the best predictor (classifier) in hindsight.
Indeed, the rewards~$r_t$ themselves are constructed in such a way that \emph{any} online algorithm inherently gains almost no reward in expectation;
when the online algorithm being employed is a no-regret algorithm with a suitable regret bound (such as MW), we can infer that the cumulative reward of \emph{any fixed classifier~$f^\star \in \F$} satisfies
\begin{align*}
    \E \left[ \textstyle\sum_{t=1}^T r_t(f^\star) \right] \leq 2 \E \left[\textstyle\sum_{t=1}^T r_t(h_t) \right] + 4 \log |\F| \leq 2 + 4 \log |\F|,
\end{align*}
implying that $f^\star$ also has small cumulative reward in expectation. 
We remark here that it is necessary for us to make use of a \emph{multiplicative}-type regret bound of MW (comparing the cumulative reward of MW to a constant times the reward of $f^\star$), rather than more standard additive regret bounds that result in an additive penalty of $\approx \sqrt{T \log |\F|}$, which we cannot allow, as it would result in sample complexity of
$
    \smash{\widetilde O} ( \log(|\F|)/\eps^2 ) = \smash{\widetilde O} ( d/\eps^2 ).
$

The next step in our analysis revolves around converting the guarantee on the expected cumulative reward of $f^\star$ to a guarantee on its final reward $r_T(f^\star)$. This is where an additional structural property of the rewards is helpful; namely, they are pointwise monotonically non-increasing in expectation:
\begin{align*}
    \E \left[ r_{t+1}(f)\right] \le \E \left[ r_t(f) \right] ~, 
    \quad \forall f \in \F.
\end{align*}
This monotonicity allows us to argue directly about the final reward of $f^\star$ rather than the time-averaged reward as is more commonly the case with online-to-batch reductions. Indeed, putting all of the above together implies that
\begin{align*}
    \E \left[ r_T(f^\star) \right] \leq \frac{2 + 4 \log |\F|}{T},
\end{align*}
which is at most $\eps$ for $T = \Theta \left( \frac{\log |\F|}{\eps} \right)$.

\subsubsection{Learning a list-bounded classifier}
\label{sec:classifier-from-list}

In this stage, we exploit the list produced by the label-space reduction step.
A natural (but naive) idea is to restrict the hypothesis class~$\mathcal{H}$ to those predictors~$h$ that always output a label from the list:
\[
h(x) \in \text{list}(x)
\qquad\text{for all } x.
\]
If this were possible, then the effective label-space size would be at most~$T$, while the Natarajan dimension would not increase (since we are passing to a subclass of~$\mathcal{H}$).  
Classical multiclass PAC bounds would then yield a learning rate of
\[
O\!\left(\frac{\mathtt{Nat}\cdot \log T}{\varepsilon^2}\right),
\]
which is precisely the behavior we need in the third stage of the algorithm\footnote{While plugging in the value of $T$ gives a dependence of $(\mathtt{Nat}\cdot \log (d/\eps)) / \eps^2$, the logarithmic factor in $d/\eps$ can replaced by a factor of $\log(\mathtt{Nat}/\eps)$ when $\eps$ is small, and if $\eps$ is large the additive $d / \eps$ term dominates the final bound.}.

However, this naive restriction is typically \emph{empty}: in general there may be no hypothesis~$h \in \mathcal{H}$ that always predicts within the list.
To circumvent this difficulty, we instead work with \emph{partial functions}.
For each $h \in \mathcal{H}$, we define a partial predictor $\tilde h$ that agrees with~$h$ on those points~$x$ for which $h(x) \in \text{list}(x)$, and is undefined elsewhere.
By the guarantee provided by the label-space reduction stage, the optimal hypothesis $h^\star$ is mapped to a partial function~$\tilde h^\star$ that is defined on a large fraction of the domain—namely, on a set of measure at least
\[
1 - \operatorname{er}_P(h^\star) - O(\varepsilon).
\]

The resulting collection of partial predictors is no longer a standard hypothesis class, and classical Natarajan-dimension arguments do not apply directly.
Instead, we appeal to the theory of \emph{partial concept classes}, developed by \cite{long01,partial_concept}, which provides generalization guarantees based on one-inclusion graph predictors and sample compression techniques.
Applying this machinery yields the desired agnostic learning rate while restricting predictions to the labels appearing in the learned list. We remark that for partial concept classes it is known that proper learning is not possible even in the binary case \citep{partial_concept}, which aligns with the previous results of \cite{daniely2014optimal} who show that multiclass classification over unbounded label spaces is not properly learnable in general.

\subsection{Connections to other recent results}

Theorem~\ref{thm:main-informal} may be viewed as an instance of a broader phenomenon observed in certain recent works from multiple branches of the learning theory literature, 
which find distinctions between the primary dimensions characterizing the realizable and agnostic sample complexities.
Specifically, \cite{erez:24} study the problem of multiclass PAC learning with \emph{bandit feedback}, 
where the optimal realizable-case sample complexity 
is lower-bounded by $\frac{K+\d}{\eps}$
where $\d = \d_{\n}(\H)$ and
$K = \max_x |\{ h(x) : h \in \H\}|$ is the 
effective label-space size \cite{daniely:15}.
In contrast, \cite{erez:24} find that, in the agnostic case, 
the asymptotically-relevant (small $\eps$) term is 
of the form $\tilde{O}\!\left( \frac{\d}{\eps^2} \right)$, which is most notable for the absence of a 
linear dependence on $K$ (unlike the realizable case).
Similarly, the recent work of \cite{hanneke:25b} 
studies the problem of \emph{active learning} (with $K < \infty$),
where the realizable-case sample complexity
is known to offer improvements over passive learning if and only if a dimension of $\H$ called the \emph{star number} is bounded \cite{hanneke:15b}.
However, for the agnostic case, \cite{hanneke:25b} 
proves a bound 
$\tilde{O}\!\left( \d \frac{\er_P(h^\star)^2}{\eps^2} + \frac{\d}{\eps} \right)$,
which in the non-realizable regime of interest ($\eps \ll \er_P(h^\star) \ll 1$) reflects improvements over the
sample complexity of passive learning $\tilde{\Theta}\!\left( \d \frac{\er_P(h^\star)}{\eps^2} \right)$
for \emph{every} concept class, \emph{regardless} of the star number.

A related phenomenon was also observed in the work of \cite{carmon:24}, which studies the problem of stochastic optimization with bounded convex Lipschitz functions with norm-bounded parameter vectors in $\mathbb{R}^d$ (among other results).  While \cite{feldman:16} has shown the sample complexity of uniform convergence to be $\tilde{\Theta}\!\left(\frac{d}{\eps^2}\right)$,
\cite{carmon:24} found that ERM achieves 
sample complexity $\tilde{\Theta}\!\left( \frac{d}{\eps} + \frac{1}{\eps^2} \right)$, so that the asymptotically-relevant term (small $\eps$) has no dependence on dimension $d$.

In the other direction, the work of \cite{attias:23} 
finds a separation between realizable and agnostic learning of $\mathbb{R}$-valued functions with $\ell_p$ losses.
However, the results of that work reflect perhaps the more-expected relation, 
where the dimension characterizing the realizable 
sample complexity can be arbitrarily smaller than the dimension characterizing the agnostic case.

\subsection{Organization}
\cref{sec:definitions-of-sample-complexity} presents the multiclass PAC learning framework. 
The Natarajan dimension, DS dimension, and realizable dimension are defined in \cref{sec:dim}. \cref{sec:sample-compression} introduces sample compression schemes which will be used in our proposed algorithm. 
We state the formal version of our main theorem and its proof in \cref{sec:main}. \cref{sec:main1,sec:main2,sec:main3} formalize the three phases of our proposed algorithm. \cref{sec:main4} summarizes the algorithm and completes the proof. 
\cref{sec:example} illustrates the necessity of local guarantee for list learning with an example. \cref{sec:discuss_dim} further discusses our definition of the realizable dimension. 
\cref{sec:compression,sec:MW_lemma,sec:partial_sample_compression} supplement the technical details in \cref{sec:main1,sec:main2,sec:main3}. 
\cref{sec:lb} proves our proposed lower bound (which uses the realizable dimension) of the agnostic sample complexity. 
\cref{sec:lemmas} contains the technical lemmas required by previous sections.

\section{Preliminaries} \label{sec:prelim}

\paragraph{Notation}
For any $n\in\N$, let $[n]$ denote the set $\{1,\dots,n\}$. 
Let $\X$ and $\Y$ denote the input and label spaces respectively. 
Define $\Z:=\X\times\Y$. 
A classifier is a function from $\X$ to $\Y$. 
A hypothesis class $\H$ is a subset of $\Y^\X$. For any $\bx=(x_1,\dots,x_n)\in\X^n$, the projection of $\H$ to $\bx$ is $\H|_{\bx}:=\{(h(x_1),\dots,h(x_n)):h\in\H\}\subseteq\Y^n$. 
For any sequence or set $S$, let $S^*:=\cup_{k=0}^\infty S^k$ denote the set of all sequences whose elements belong to $S$. 

\subsection{Multiclass PAC learning}
\label{sec:definitions-of-sample-complexity}
For any distribution $P$ over $\Z=\X\times\Y$, the error rate of a classifier $h\in\Y^\X$ under $P$ is defined as 
\begin{align*}
\er_{P}(h):=P(\{(x,y)\in\X\times\Y:y\neq h(x)\}).
\end{align*}
Given a sequence $\bs\in\Z^n$ with $n\in\N$, the empirical error rate of $h$ on $\bs$ is defined as 
\begin{align*}
\er_{\bs}(h):=\tfrac{1}{n}\textstyle\sum_{(x,y)\in\bs}\ind\{y\neq h(x)\}.
\end{align*}
We define multiclass learners and PAC learning as follows.
\begin{definition}[Multiclass learner]
A \textbf{multiclass learner} (or a learner) $\A$ is a (possibly randomized) algorithm which given a sequence $\bs\in\Z^*=\cup_{n=0}^\infty(\X\times\Y)^n$ and a hypothesis class $\cH\subseteq\Y^\X$, outputs a classifier $\A(\bs,\cH)\in\Y^\X$. 
\end{definition}
\begin{definition}[Multiclass Agnostic PAC learning] 
For any hypothesis class $\cH\subseteq\Y^\X$, the \textbf{agnostic (PAC) sample complexity} $\M_{\A,\cH}^{\AG}:(0,1)^2\to\N$ of a multiclass learner $\A$ is a mapping from $(\epsilon,\delta)\in(0,1)^2$ to the smallest positive integer such that for any $m\ge\M_{\A,\cH}^{\AG}(\epsilon,\delta)$ and any distribution $P$ over $\Z=\X\times\Y$, 
$$
\P_{S\sim P^{m},\A}(\er_P(\A(S,\cH))-\textstyle\inf_{h\in\H}\er_P(h)>\epsilon)\le\delta,
$$ 
and we define $\M_{\A,\cH}^{\AG}(\epsilon,\delta)=\infty$ if no such integer exists. 
We say $\cH$ is \textbf{agnostic PAC learnable} by $\A$ if $\M_{\A,\cH}^{\AG}(\epsilon,\delta)<\infty$ for all $(\epsilon,\delta)\in(0,1)^2$. 
The \textbf{agnostic (PAC) sample complexity} of $\cH$ is defined as $\M_{\cH}^{\AG}(\epsilon,\delta):=\inf_{\A}\M_{\A,\cH}^{\AG}(\epsilon,\delta)$ for any $(\epsilon,\delta)\in(0,1)^2$. 
\end{definition}
In the above definition, $P$ can be any distribution over $\Z$. 
When restricting $P$ to be realizable as defined below, we obtain the definition of multiclass realizable PAC learning. 
\begin{definition}[Realizability] \label{def:real}
We say a sequence $\bs\in\Z^*$ is \textbf{realizable by} $h\in\Y^\X$ (or $h$ \textbf{realizes} $\bs$) if $y=h(x)$ for all $(x,y)\in\bs$. 
Define $\bs(h)$ to be the longest subsequence of $\bs$ realizable by $h$. 
We say $\bs$ is $\cH$\textbf{-realizable} if $\sup_{h\in\H}|\bs(h)|=n$. 
We say a distribution $P$ over $\Z$ is $\cH$\textbf{-realizable} if $\inf_{h\in\cH}\er_P(h)=0$. 
Define $\RE(\H)$ to be the set of all $\H$-realizable distributions. 
\end{definition}
\begin{definition}[Multiclass Realizable PAC learning] 
For any hypothesis class $\cH\subseteq\Y^\X$, the \textbf{realizable (PAC) sample complexity} $\M_{\A,\cH}^\RE:(0,1)^2\to\N$ of a multiclass learner $\A$ is a mapping from $(\epsilon,\delta)\in(0,1)^2$ to the smallest positive integer such that for any $m\ge\M_{\A,\cH}^\RE(\epsilon,\delta)$ and any distribution $P\in\RE(\cH)$, $\P_{S\sim P^{m},\A}(\er_P(\A(S,\cH))>\epsilon)\le\delta$, and we define $\M_{\A,\cH}^\RE(\epsilon,\delta)=\infty$ if no such integer exists. 
We say $\cH$ is \textbf{realizable PAC learnable} by $\A$ if $\M_{\A,\cH}^\RE(\epsilon,\delta)<\infty$ for all $(\epsilon,\delta)\in(0,1)^2$. 
The \textbf{realizable (PAC) sample complexity} of $\cH$ is defined as $\M_{\cH}^\RE(\epsilon,\delta):=\inf_{\A}\M_{\A,\cH}^\RE(\epsilon,\delta)$ for any $(\epsilon,\delta)\in(0,1)^2$. 
\end{definition}
For future reference, it is convenient to define the expected error rate of an learner $\A$ under a distribution $P$ over $\Z$,
\begin{align*}
\epsilon_{\A,\cH,P}:\N\to[0,1],\ \  n\mapsto\E_{S\sim P^n,\A}[\er_{P}(\A(S,\cH))]=\P_{(S,(X,Y))\sim P^{n+1},\A}(Y\neq\A(S,\cH)(X))
\end{align*}
We will need $\epsilon_{\A,\cH}^\RE:=\sup_{P\in\RE(\cH)}\epsilon_{\A,\cH,P}$ and $\epsilon_{\cH}^\RE:=\inf_{\A}\epsilon_{\A,\cH}^\RE$ for the next section.  

\subsection{Dimensions of hypothesis classes} \label{sec:dim}
In this section, we formally define some combinatorial dimensions of hypothesis classes involved in our sample complexity bounds. 
The first one is the Natarajan dimension which was proposed as an extension of the VC dimension \cite{Vapnik68} in the sense of requiring the hypothesis class to contain a copy of a ``Boolean cube'' ($\{0,1\}^d$ for $d\in\N$). 
\begin{definition}[Natarajan dimension, \cite{natarajan:89}]
We say that $\bx\in\X^d$ is \textbf{N-shattered} by $\cH\subseteq\Y^\X$ if there exists $f,g:[d]\to\Y$ such that for all $i\in[d]$, we have $f(i)\neq g(i)$ and $\H|_{\bx}\supseteq\{f(1),g(1)\}\times\cdots\times\{f(d),g(d)\}$.
The \textbf{Natarajan dimension} $\d_{\n}(\H)$ of $\H$ is the maximum size of an N-shattered sequence.
\end{definition}
The DS dimension is another extension of the VC dimension, where a ``pseudo cube'' is considered instead of a Boolean cube. 
\begin{definition}[Pseudo-cube, \cite{brukhim2022characterization}]
For any $d\in\N$, a set $\H\subseteq\Y^d$ is called a \textbf{pseudo-cube} of dimension $d$ if it is non-empty, finite, and for every $h\in\H$ and $i\in[d]$, there exist at least an $i$-neighbor of $h$ in $\H$, where $g$ is an $i$-neighbor of $h$ if $g(i)\neq h(i)$ and $g(j)=h(j)$ for all $j\in[d]\backslash\{i\}$. 
\end{definition}
\begin{definition}[DS dimension, \cite{daniely2014optimal}] \label{def:DS}
For any $d\in\N$, we say $\bx\in\X^d$ is \textbf{DS shattered} by $\cH\subseteq\Y^\X$ if $\H|_{\bx}$ contains a $d$-dimensional pseudo-cube. The \textbf{DS dimension} $\d_{\DS}(\H)$ of $\H$ is the maximum size of a DS shattered sequence. 
\end{definition}
Though DS dimension has been shown to characterize multiclass PAC learnability \cite{brukhim2022characterization}, there still exists a polynomial gap in terms of the DS dimension between the upper and lower bounds of PAC sample complexity even in the realizable setting \cite{hanneke2024improved}. Since we work on developing algorithms for the agnostic setting with access to realizable learners, we would like to abstract the optimal realizable sample complexity with constant error threshold as a dimension to avoid the current gap in realizable learning. 
Then, our agnostic sample complexity will depend on this dimension, and any improvement on the realizable sample complexity will directly improve our agnostic sample complexity via this dimension. In this sense, we call this dimension the ``realizable dimension'', which can be controlled by the DS dimension as existing results have shown. 
\begin{definition} \label{def:realizable_dim}
For any hypothesis class $\H\subseteq\Y^\X$, learner $\A$, and constant $r\in(0,1/2)$, we define 
\begin{align*}
\d_{\RE}(\A,\H,r):=\inf\{n\in\N:\eps_{\A,\H}^\RE(n)\le r\}
\end{align*}
which captures the minimum sample size required by $\A$ to learn $\H$ with constant error $r$. 
Taking the infimum over all learners, we obtain $\d_{\RE}(\H,r):=\inf_{\A}\d_{\RE}(\A,\H,r)$. 
The \textbf{(multiclass) realizable dimension} of $\H$ is defined as $\d_{\RE}(\H):=\d_{\RE}(\H,1/9e)$. 
\end{definition}
%
For any constant $r\in(0,1/2)$, existing results (see e.g., \cite[Theorem 1.9]{hanneke2024improved}) show that \begin{align} \label{eq:real_dim_bound0}
\Omega(d_{DS})\le \d_{\RE}(\H,r)\le O((d_{DS})^{3/2}\log^2(d_{DS})),\ \textup{where }d_{DS}:=\d_{\DS}(\H).
\end{align}
We briefly comment on the choice of $1/9e$ in the above definition of $\d_\RE(\H)$. 
As is observed from the definition, $\d_{\RE}(\H,r)$ characterizes the sample complexity of weak realizable learners with constant error $r$. 
Later, we will apply such $r$-weak learners to construct sample compression schemes (see \cref{def:compression}), which is a key building block of our learning algorithm, via the boosting technique in \cite{NIPS2016_59f51fd6}. 
Note that this technique works for any constant $r\in(0,1/2)$ for constructing the sample compression scheme. 
However, for technical reasons, it is more convenient to boost deterministic learners. Thus, we are actually interested in $\d^{\det}_\RE(\H,r):=\inf_{\A\textup{ deterministic }}\d_\RE(\A,\H,r)$. 
Though $\d_{\RE}^{\det}(\H,r)\ge\d_{\RE}(\H,r)$, we can still bound $\d_{\RE}^{\det}(\H,r)$ by $\d_{\RE}(\H,r')$ up to constant factors for smaller constant $r'\in(0,r)$ through existing results \cite{10.1145/3564246.3585190,10353103}:  
\begin{align} \label{eq:real_dim_bound}
\Omega(d_{DS})\le \d_{\RE}^{\det}(\H,r)\le O(\d_{\RE}(\H,r/3e))\le O((d_{DS})^{3/2}\log^2(d_{DS})).
\end{align}
$1/9e$ is obtained by setting $r=1/3$. 
Then, there exist deterministic learners whose sample complexity for constant error $1/3$ is bounded by $\d_{\RE}(\H)$. But can we construct such learners algorithmically for our algorithm to work? The answer is yes: the upper bound in \eqref{eq:real_dim_bound} is proved via a specific deterministic learner known in the literature as the ``one-inclusion algorithm'' \cite{NIPS2006_a11ce019,daniely2014optimal,brukhim2022characterization} $\A_{\oig}$ (see \cref{alg:oia}), which directly makes our learning algorithm explicit. 
We refer readers to \cref{sec:discuss_dim} for more details of the above discussion and a proof of \eqref{eq:real_dim_bound} using existing results.

\subsection{Sample compression scheme} \label{sec:sample-compression}
Our learning algorithm relies on the idea of sample compression schemes \cite{floyd:95,NIPS2016_59f51fd6} where we first select a ``compressed'' subset of the input samples and then learn a classifier that is consistent with the input samples using only this subset.
The generalization error of sample compression schemes can be controlled by the size of the compressed subset: smaller size corresponds to smaller generalization error. 
In this paper, we will construct sample compression schemes under different loss functions and apply them as building blocks of our learning algorithm. 
We first introduce the concept of selection schemes which depicts the basic structure of sample compression schemes. 
\begin{definition}[Selection scheme] \label{def:selection}
A \textbf{selection scheme} $\Ascr=(\kappa,\rho)$ is a pair of maps $(\kappa,\rho)$ where
\begin{itemize}
    \item $\kappa$ is called a selection map which given an input $\bs\in\Z^*$, selects a sequence consisting of elements of $\bs$;
    \item $\rho:\Z^*\to\Y^\X$ is called a reconstruction map. 
\end{itemize}
For an input sequence $\bs\in\Z^*$, the selection scheme $\Ascr$ outputs a classifier $\Ascr(\bs)\in\Y^\X$. 
The \textbf{size} of $\Ascr$ is defined as $\size_{\Ascr}(n):=\sup_{\bs\in\cup_{m\in[n]}\Z^m}|\kappa(\bs)|$. 
If $\Ascr$ makes use of a hypothesis class $\H\subseteq\Y^\X$, we may write its output as $\Ascr(\bs,\H)$ for clarity. Similarly, $\kappa(\bs,\H)$ and $\rho(\bs,\H)$ are also used. 
\end{definition}
Sample compression schemes are not restricted to the setting of zero-one loss (classification) and can be defined generally as selection schemes that interpolates the input samples under the given loss function. Formally, let $l:\Y^{\X}\times\Z\to[0,1]$ be a loss function. For $h\in\Y^\X$, $\bs\in\Z^n$ with $n\in\N$, and distribution $P$ over $\Z$, we define the empirical and expected risks of $h$ under $l$ as 
\begin{align*}
L_{\bs}(h):=\tfrac{1}{n}\textstyle\sum_{(x,y)\in\bs}l(h,(x,y))
\quad\textup{and}\quad
L_P(h):=\E_{(X,Y)\sim P}[l(h,(X,Y))].
\end{align*}
\begin{definition}[Sample compression scheme] \label{def:compression}
Given a loss function $l$, a selection scheme $\Ascr$ is called an $l$\textbf{-sample compression scheme for} $\H\subseteq\Y^\X$ if 
\begin{align*}
L_{\bs}(\Ascr(\bs))\le \textstyle\inf_{h\in \H}L_{\bs}(h),\ \forall \bs\in\Z^n,\ n\in\N.
\end{align*}
If $l$ is the zero-one loss $(h,(x,y))\mapsto\ind\{h(x)\neq y\}$, $\forall h\in\Y^\X$ and $(x,y)\in\Z$, we call $\Ascr$ a \textbf{sample compression scheme for }$\H$. 
\end{definition}
It is worth pointing out that a selection scheme $\Ascr=(\kappa,\rho)$ such that $L_{\bs}(\Ascr(\bs))=0$ for all $\H$-realizable sequences $\bs\in\Z^*$ directly induces a sample compression scheme for $\H$, defined as follows. For any $\bs\in\Z^*$, let $h_{\bs}$ minimizes $\er_{\bs}(h)$ among all $h\in\H$. Then, the selection map $\kappa'(\bs):=\kappa(\bs(h_{\bs}))$ leads to a sample compression scheme $(\kappa',\rho)$ for $\H$. 
This conclusion also holds for other losses taking value in $\{0,1\}$. Thus, it suffices to focus on ``realizable'' sequences in such cases.

\section{Main results} \label{sec:main}
The main contribution of this paper is given by the following theorem. 
\begin{theorem} \label{thm:main}
Let $\H\subseteq\Y^\X$ have Natarajan dimension $\d_\n(\H)=d_N\in\N$ and DS dimension $\d_{\DS}(\H)=d_{DS}\in\N$. Then, $d:=\d_{\RE}(\H)$ satisfies that $\Omega(d_{DS})\le d\le O((d_{DS})^{3/2}\log^2(d_{DS}))$. 
We have the following lower and upper bounds on the agnostic PAC sample complexity of $\H$,
\begin{align}
\label{eq:main}
\Omega\left(\frac{d_N+\log(1/\delta)}{\eps^2}+\frac{d}{\eps}\right)
\le \M_{\H}^{\AG}(\eps,\delta)
\le 
O\bigg(&\frac{d_N\log^3(d_N/\eps)+\log(1/\delta)}{\eps^2}
+\frac{d\log^2(d/\eps)}{\eps}\bigg).
\end{align}
\eqref{eq:main} immediately implies that 
\begin{align}
\label{eq:current_ub}
\M_{\H}^{\AG}(\eps,\delta)\le O\bigg(
\frac{d_N\log^3(1/\eps)+\log(1/\delta)}{\eps^2}
+\frac{(d_{DS})^{3/2}\log^2(d_{DS})\log^2(d_{DS}/\eps)}{\eps}
\bigg).
\end{align}
Furthermore, if one establishes $d=\tilde O(d_{DS})$, it follows that $\M_{\H}^{\AG}(\eps,\delta)=\tilde\Theta\left(\frac{d_N+\log(1/\delta)}{\eps^2}+\frac{d_{DS}}{\eps}\right)$. 
\end{theorem}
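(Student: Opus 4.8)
The upper bound follows by composing the three subproblems of the technical overview (each proven, under the stated sample budgets, in \cref{sec:main1,sec:main2,sec:main3}) exactly as in ``Combining the three stages'', and then tallying sample sizes and collapsing logarithmic factors. The lower bound splits into the classical $\Omega\bigl((d_N+\log(1/\delta))/\eps^2\bigr)$ no-free-lunch bound and a new $\Omega(d/\eps)$ bound that extracts the realizable dimension.

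\textbf{Upper bound.} Draw three independent samples $S_1\sim P^{n_1}$, $S_2\sim P^{n_2}$, $S_3\sim P^{n_3}$ with $n_1,n_2,n_3$ as prescribed in \cref{subproblem:finite-class,subproblem:label-space-reduction,subproblem:classifier-from-list} (each invoked with failure probability $\delta/3$), and run the three stages in sequence: the improper cover $\F$ from \cref{subproblem:finite-class} is the input class of \cref{subproblem:label-space-reduction}, whose list is the input of \cref{subproblem:classifier-from-list}; let $h$ be the final predictor. Applying \cref{subproblem:finite-class} to a (near-)optimal $h^\star\in\H$ and then \cref{subproblem:label-space-reduction,subproblem:classifier-from-list}---invoking each per-hypothesis guarantee conditionally on the previous stages' outputs, which is legitimate since the three samples are independent---the displayed chain of inequalities gives $\er_P(h)\le\er_P(h^\star)+O(\eps)$ with probability at least $1-\delta$ (union bound over the three failure events). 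To bound $n_1+n_2+n_3$ one uses $\log|\F|=O(d\log^2 n_1)=\widetilde{O}(d)$, so that $n_2=\widetilde{O}\bigl((d+\log(1/\delta))/\eps\bigr)$ and the list length $T=n_2$ satisfies $\log T=\widetilde{O}(\log(d/\eps))$, whence $n_3=\widetilde{O}\bigl((d_N\log(d/\eps)+\log(1/\delta))/\eps^2\bigr)$; the $\log(d/\eps)$ factor collapses to $\log(d_N/\eps)$ for small $\eps$ and is absorbed into the additive $d/\eps$ term otherwise (cf.\ the footnote in \cref{sec:classifier-from-list}). Rescaling $\eps,\delta$ by constants yields \eqref{eq:main}. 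Substituting $d\le O(d_{DS}^{3/2}\log^2 d_{DS})$ from \eqref{eq:real_dim_bound}---realized explicitly by the one-inclusion learner $\A_{\oig}$ used inside \cref{subproblem:finite-class}---and simplifying logs gives \eqref{eq:current_ub}; the final $\widetilde{\Theta}$ statement follows by combining \eqref{eq:main} with the lower bound once $d=\widetilde{O}(d_{DS})$ is assumed.

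\textbf{Lower bound.} The $\Omega\bigl((d_N+\log(1/\delta))/\eps^2\bigr)$ term is the standard multiclass no-free-lunch argument: fix an N-shattered sequence $x_1,\dots,x_{d_N}$ with witnessing labelings $f,g$, and consider the family of distributions indexed by $b\in\{\pm1\}^{d_N}$ in which $X$ is uniform on the $x_i$ and $Y\mid X=x_i$ equals $f(i)$ or $g(i)$ with bias $\tfrac12\pm\Theta(\eps)$ according to $b_i$; the best-in-class hypothesis (which exists by N-shattering and equals the Bayes predictor) has error $\tfrac12-\Theta(\eps)$, so a learner with excess error $\eps$ must recover a constant fraction of the $b_i$, forcing $m=\Omega(d_N/\eps^2)$, with the $\log(1/\delta)/\eps^2$ piece coming from the usual single-biased-coin confidence bound. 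For the $\Omega(d/\eps)$ term it suffices (agnostic learning being at least as hard as realizable learning) to reduce weak realizable learning to agnostic learning: given an agnostic learner $\A$ with $m$ samples, excess error $\eps$, and confidence $1-\delta_0$ for a small absolute constant $\delta_0$, define a realizable learner $\A_0$ using $n=\Theta(\eps m)$ samples which, on input $n$ i.i.d.\ examples from an unknown $\H$-realizable $Q$, sets the first example $(x_0,y_0)$ aside and feeds $\A$ a dataset drawn i.i.d.\ from the mixture $\alpha Q+(1-\alpha)\delta_{(x_0,y_0)}$ with $\alpha=\Theta(\eps)$ (using the remaining examples for the $\mathrm{Binomial}(m,\alpha)$-many fresh $Q$-draws and copies of $(x_0,y_0)$ for the rest). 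Since $y_0=h_Q(x_0)$ for any $h_Q$ realizing $Q$, this mixture is $\H$-realizable, so $\A$ returns $\hat h$ with $\alpha\,\er_Q(\hat h)\le\er_{\mathrm{mix}}(\hat h)\le\eps$, i.e.\ $\er_Q(\hat h)$ is below the constant $1/9e$ once the constant in $\alpha$ is chosen to leave slack for the $\delta_0$ failure and the rare Binomial overshoot (and one passes from the high-probability bound to an expectation bound). Hence $\eps^{\RE}_{\A_0,\H}(n)\le 1/9e$, so $n\ge\d_{\RE}(\H)=d$ by \cref{def:realizable_dim}, and therefore $m=\Omega(d/\eps)$.

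\textbf{Main obstacle.} The proof of \cref{thm:main} is itself only an assembly plus two mostly-standard lower bounds; the substantive difficulty lies in \emph{establishing} the three subproblems---above all \cref{subproblem:label-space-reduction}, learning a short list at the realizable rate $\widetilde{O}(\log|\F|/\eps)$, whose analysis rests on the adaptive-reward multiplicative-weights scheme with its \emph{multiplicative} (rather than additive) regret guarantee together with the in-expectation monotonicity of the rewards. \cref{subproblem:finite-class} (the quasi-polynomial improper cover, via boosting the realizable learner into a compression scheme and analyzing it under the surrogate loss $\ind\{h^\star(x)=y\wedge h(x)\ne y\}$) and \cref{subproblem:classifier-from-list} (list-bounded agnostic learning through the theory of partial concept classes) are the secondary technical components. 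I would prove these three statements in \cref{sec:main1,sec:main2,sec:main3} and then present \cref{thm:main} precisely as the composition-plus-lower-bound sketched above.
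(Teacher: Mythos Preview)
Your proposal is correct; the upper bound follows the paper's three-stage assembly (\cref{sec:main4}) essentially verbatim, and you correctly locate the real work in the three subproblems, especially the adaptive-reward multiplicative-weights analysis of \cref{subproblem:label-space-reduction}.

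Your $\Omega(d/\eps)$ lower bound, however, takes a genuinely different route from the paper's. The paper (\cref{sec:lb}) argues constructively: from $\eps_\H^\RE(d-1)>1/9e$ it invokes the one-inclusion-graph optimality relation $\mu_\H(d)/d\ge\eps_\H^\RE(d-1)$ to extract a dense OIG on $d$ points, then explicitly builds a family of hard realizable distributions indexed by the vertices of that graph and shows that any learner has error $>2\eps$ unless $n\gtrsim d/\eps$. Your argument is instead a black-box dilution reduction: feed the agnostic learner a mixture $\alpha Q+(1-\alpha)\delta_{(x_0,y_0)}$ with $\alpha=\Theta(\eps)$, so that an $\eps$-excess-error guarantee on the (a.s.\ $\H$-realizable) mixture yields constant $\er_Q$-error using only $\Theta(\eps m)$ fresh $Q$-samples; since by definition of $\d_\RE(\H)$ no learner attains expected error $\le 1/9e$ with fewer than $d$ samples, one gets $\eps m\ge\Omega(d)$. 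Your approach is more elementary---it uses only \cref{def:realizable_dim} and bypasses the OIG density machinery entirely---while the paper's construction has the complementary virtue of exhibiting the hard distributions explicitly and making transparent the role of the density $\mu_\H$.
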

We prove the second inequality in \eqref{eq:main} by constructing a multiclass learner (see \cref{alg:main}) whose agnostic sample complexity is bounded by the upper bound listed in \eqref{eq:main}. 
In the following texts, we first establish the building blocks for our proposed agnostic learner in \cref{sec:main1,sec:main2,sec:main3}, which consist the three phases of the proposed learner. 
Then in \cref{sec:main4}, we present the pseudo code of the proposed learner and prove the upper bounds in \cref{thm:main}. 

For the lower bound in \eqref{eq:main}, the $\Omega((d_N+\log(1/\delta))/\eps^2)$ term has been established in the literature, c.f. \cite[Theorem 3.5]{daniely:15}. The $\Omega(d/\eps)$ term can be established following the idea in \cite[Section A]{hanneke2024improved} for constructing hard distributions. We defer the proof of the lower bound to \cref{sec:lb}. 

\subsection{Constructing a finite sets of classifiers} \label{sec:main1}
The initial step of our learner constructs a finite set of classifiers with \cref{alg:CC} to ``approximate'' the original possibly infinite hypothesis class in terms of the correct regions of the classifiers therein. 
This is achieved via sample compression schemes. \cref{thm:CC} provides the theoretical guarantees of the finite set constructed in \cref{alg:CC}. 

\begin{algorithm}[H]
\caption{Compression-based classifiers $\mathsf{CC}(\bs,\H,\Ascr)$} \label{alg:CC}
\SetKwInput{KwInput}{Input}
\SetKwInput{KwOutput}{Output}
\SetKwComment{Comment}{/* }{ */}
\KwInput{Sequence of examples $\bs\in\Z^*$, hypothesis class $\H\subseteq\Y^{\X}$, selection scheme $\Ascr=(\kappa,\rho)$.}
\KwOutput{A finite set $\F$ of classifiers.}
Initialize $\F\gets\{\}$\;
\For{$\bt\in\bigcup_{0\le k\le \size_{\Ascr}(|\bs|)}\bs^k$}{
    Add $\rho(\bt,\H)$ into $\F$;
}
\end{algorithm}
\begin{theorem} \label{thm:CC}
Let $\Ascr(\cdot,\H)$ be a deterministic sample compression scheme as defined in \cref{def:compression} of size $\size_\Ascr(n)$. 
Then, the output set $\mathsf{CC}(\bs,\H,\Ascr)$ of \cref{alg:CC} on $\bs\in\Z^n$ has size at most $n^{\size_\Ascr(n)+1}$. 
Moreover, $f_{h,\bs,\Ascr}:=\Ascr(\bs(h),\H)\in \mathsf{CC}(\bs,\H,\Ascr)$ for all $h\in\Y^\X$. 

For any distribution $P$ over $\Z$, sample $S\sim P^n$ with $n$. Then, there exists some universal constant $C_1>0$ such that for any $h\in\H$, with probability at least $1-\delta$ for $\delta\in(0,1)$, we have
\begin{align} \label{eq:classifier_compression}
P(\{(x,y)\in\Z:h(x)=y\neq f_{h,S,\Ascr}(x)\})\le C_1\frac{\size_\Ascr(n)\log(n)+\log(1/\delta)}{n}.
\end{align}
\end{theorem}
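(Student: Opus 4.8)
The plan is to verify the three assertions in turn, with the generalization bound~\eqref{eq:classifier_compression} being the substantive one. For the size bound, observe that \cref{alg:CC} adds to $\F$ the reconstruction $\rho(\bt,\H)$ for each $\bt$ ranging over subsequences of $\bs$ of length $0,1,\dots,\size_\Ascr(n)$; the number of such sequences drawn from the $n$ elements of $\bs$ is $\sum_{k=0}^{\size_\Ascr(n)} n^k \le n^{\size_\Ascr(n)+1}$, so $|\F|$ is at most this quantity. For the second assertion, fix any $h\in\Y^\X$ and let $\bs(h)$ be the longest subsequence of $\bs$ realizable by $h$ (per \cref{def:real}). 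Since $|\bs(h)|\le n$, the selection map $\kappa$ applied to $\bs(h)$ returns a subsequence $\kappa(\bs(h),\H)$ of length at most $\size_\Ascr(|\bs(h)|)\le\size_\Ascr(n)$ consisting of elements of $\bs(h)$, hence of $\bs$; therefore $\kappa(\bs(h),\H)$ is one of the $\bt$'s enumerated in the loop, and $f_{h,\bs,\Ascr}=\rho(\kappa(\bs(h),\H),\H)=\Ascr(\bs(h),\H)$ is added to $\F$.

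The heart of the matter is~\eqref{eq:classifier_compression}. The idea is to introduce, for the fixed $h\in\H$, the $\{0,1\}$-valued loss $l_h(g,(x,y)) := \ind\{h(x)=y \ \wedge\ g(x)\neq y\}$, exactly as in the technical overview, and to observe that with respect to $l_h$ the selection scheme $\Ascr$, restricted appropriately, behaves like a \emph{sample compression scheme of size $\size_\Ascr(n)$ achieving zero empirical $l_h$-loss}. Concretely, on the realized subsequence $\bs(h)$ every example $(x,y)$ satisfies $h(x)=y$, and since $\Ascr$ is a sample compression scheme for $\H$ we have $\er_{\bs(h)}(\Ascr(\bs(h),\H))\le\inf_{g\in\H}\er_{\bs(h)}(g)=0$ (as $h$ itself realizes $\bs(h)$); thus $\Ascr(\bs(h),\H)$ agrees with $h$—hence with $y$—on every example of $\bs(h)$, giving $l_h\bigl(f_{h,\bs,\Ascr},(x,y)\bigr)=0$ for all $(x,y)\in\bs(h)$. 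Moreover on the examples of $\bs$ \emph{not} in $\bs(h)$ we have $h(x)\neq y$, so $l_h(g,\cdot)=0$ there for \emph{any} $g$. Consequently the empirical $l_h$-loss of $f_{h,S,\Ascr}$ on the \emph{whole} sample $S$ is zero. Now one invokes the standard generalization guarantee for sample compression schemes of bounded size under a $\{0,1\}$-valued loss (the realizable-case bound, cf.\ \cref{sec:sample-compression} and the references \cite{floyd:95,NIPS2016_59f51fd6}): a scheme that selects at most $k$ examples and has zero empirical loss has expected/high-probability loss $O\bigl((k\log n+\log(1/\delta))/n\bigr)$. The only subtlety is that $\Ascr$ on input $S$ does not literally select a size-$k$ subset achieving zero $l_h$-loss; rather, it is the \emph{derived} scheme $S\mapsto\kappa(S(h),\H)\mapsto\rho(\cdot,\H)$ that does so. So I would spell out that this derived map is a bona fide selection scheme of size $\le\size_\Ascr(n)$—its selection map sends $S$ to $\kappa(S(h),\H)$, a subsequence of $S$ of length $\le\size_\Ascr(n)$—and that its empirical $l_h$-loss on $S$ is identically zero by the paragraph above, so the classical compression generalization bound applies verbatim with $k=\size_\Ascr(n)$ and the loss $l_h$ (which, being $\{0,1\}$-valued, satisfies the hypotheses of that bound, cf.\ the remark closing \cref{sec:sample-compression}). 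This yields $L_P(f_{h,S,\Ascr})=P\bigl(h(X)=Y\neq f_{h,S,\Ascr}(X)\bigr)\le C_1(\size_\Ascr(n)\log n+\log(1/\delta))/n$ with probability $\ge 1-\delta$, which is exactly~\eqref{eq:classifier_compression}.

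The main obstacle I anticipate is a bookkeeping one rather than a conceptual one: making precise that the ``realizable-rate'' compression bound may be applied with the analysis loss $l_h$ even though $l_h$ depends on the unknown $h$ and the underlying $P$ need not be $\H$-realizable. The resolution is that $h$ and $l_h$ enter only the analysis, not the algorithm, and that what matters for the compression bound is purely (i) a uniform bound on the size of the selected set and (ii) zero empirical loss on the drawn sample—both of which hold for the derived scheme as argued. A secondary point to handle carefully is the definition of $S(h)$ when several subsequences tie for longest, and the measurability/tie-breaking conventions this entails; any fixed deterministic tie-break suffices and does not affect the bound. I would also double-check the exponent in the size bound: the loop includes $k=0$, contributing the single empty sequence whose reconstruction $\rho(\emptyset,\H)$ is one fixed classifier, so the crude bound $\sum_{k=0}^{\size_\Ascr(n)}n^k\le n^{\size_\Ascr(n)+1}$ (valid for $n\ge 2$; the case $n\le 1$ is trivial) is what is claimed.
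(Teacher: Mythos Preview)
Your proposal is correct and follows essentially the same approach as the paper: the size and membership claims are handled by the same counting argument, and for~\eqref{eq:classifier_compression} the paper likewise introduces the analysis loss $l^h(f,(x,y))=\ind\{h(x)=y\neq f(x)\}$, observes that $L^h_S(f_{h,S,\Ascr})=0$, and invokes the realizable-case compression generalization bound (specifically \cite[Theorem~2.1]{NIPS2016_59f51fd6}). Your discussion of the derived selection scheme and the bookkeeping subtleties is more explicit than the paper's terse version, but the underlying argument is the same.
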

\begin{proof}
Let $\kappa$ and $\rho$ denotes the selection and reconstruction maps of $\Ascr$ respectively. 
Since $\Ascr$ is deterministic, $\rho$ is also deterministic and $\mathsf{CC}(\bs,\H,\Ascr)\subseteq\cup_{0\le i\le \size_\Ascr(n)}\{\rho(\bt,\H):\bt\in\bs^i\}$. It follows that
\begin{align*}
|\mathsf{CC}(\bs,\H,\Ascr)|\le n^{0}+n^1+\dots+n^{\size_{\Ascr}(n)}\le n^{\size_{\Ascr}(n)+1}.
\end{align*}
By the definition of size in \cref{def:selection}, we have $\kappa(\bs(h))\in\cup_{0\le k\le \size_{\Ascr}(n)}\bs^k$ for any $h\in\Y^\X$. Thus, $f_{h,\bs,\Ascr}=\rho(\kappa(\bs(h)),\H)\in\mathsf{CC}(\bs,\H,\Ascr)$. 

Consider the loss $l^h(f,(x,y)):=\ind\{h(x)=y\neq f(x)\}$ for any $f\in\Y^\X$ and $(x,y)\in\Z$. Since $f_{h,S,\Ascr}$ realizes $S_1(h)$, we have $L^h_S(f_{h,S,\Ascr})=0$. 
Then, by \cite[Theorem 2.1]{NIPS2016_59f51fd6}, there exists some universal constant $C_1>0$ such that with probability at least $1-\delta$, 
\begin{align*}
P(\{(x,y)\in\Z:h(x)=y\neq f_{h,S,\Ascr}(x)\})=L_P^h(f_{h,S,\Ascr})\le \tfrac{C_1}{n}(\size_\Ascr(n)\log(n)+\log(1/\delta)).
\qquad \qedhere
\end{align*}
\end{proof}

To apply \cref{alg:CC}, we still need to construct a sample compression scheme for $\H$. It has been shown in the literature that a deterministic realizable learner can be converted to sample compression scheme whose size depends linearly on the realizable sample complexity of the learner under a constant (less than 1/2) error threshold \cite{freund1995boosting,NIPS2016_59f51fd6}. 
As we discussed after \cref{def:realizable_dim}, there exists a realizable learner called one-inclusion algorithm $\A_{\oig}$ with $\d_{\RE}(\A_{\oig},\H,1/3)=O(\d_\RE(\H))$. Using this learner, we can construct a sample compression scheme for $\H$ provided that $\d_{\RE}(\H)<\infty$. For completeness, we include the construction algorithm as well as the proof of the following result in \cref{sec:compression}, where we follow the boosting technique summarized in \cite[Section C]{NIPS2016_59f51fd6}. 
\begin{proposition} \label{prop:sample_comp_real}
Let $\H\subseteq\Y^\X$ have finite DS dimension so that $d:=\d_{\RE}(\H)$ is also finite. Then, there exists a sample compression scheme $\Ascr$ for $\H$ of size $\size_\Ascr(n)=\Theta(d\log(n))$ for $n\in\N$. 
Algorithmically, $\Ascr=\mathsf{SCSR}(\H,\A_{\oig})$ with $\mathsf{SCSR}$ defined in \cref{alg:sample_comp_real} and $\A_{\oig}$ being a deterministic learner called one-inclusion algorithm defined in \cref{def:oig}. 
\end{proposition}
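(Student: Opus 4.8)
The plan is to follow the standard boosting-to-compression pipeline of \cite{NIPS2016_59f51fd6,freund1995boosting}, instantiated with the one-inclusion algorithm $\A_{\oig}$ as the weak realizable learner. First I would recall from the discussion following \cref{def:realizable_dim} (specifically \eqref{eq:real_dim_bound}) that $\A_{\oig}$ is a \emph{deterministic} realizable learner with $\d_\RE(\A_{\oig},\H,1/3)=O(\d_\RE(\H))=O(d)$; call this quantity $m_0=O(d)$. In other words, running $\A_{\oig}$ on $m_0$ i.i.d.\ samples from any $\H$-realizable distribution produces, in expectation, a classifier with error at most $1/3$. By a standard restriction argument, when fed an $\H$-realizable \emph{sequence} of length $n$, $\A_{\oig}$ (together with the uniform distribution on sub-multisets of size $m_0$) behaves as a $\gamma$-weak learner for the empirical measure, with edge $\gamma = \Omega(1)$ over the $1/2$ baseline on any reweighting of the sample — this is the hypothesis needed to invoke boosting.

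Next I would run a boosting procedure (AdaBoost, or the sample-compression-friendly variant described in \cite[Section~C]{NIPS2016_59f51fd6}) for $R = O(\log n)$ rounds on the input sequence $\bs(h_\bs)$ of length $\le n$. Each round $r$ invokes the weak learner on a sub-sequence of $m_0$ examples selected according to the current distribution over the sample; since the weak learner is deterministic, round $r$ is fully specified by recording which $m_0$ indices were selected. After $R$ rounds, the boosted hypothesis is the (weighted) majority vote of the $R$ weak hypotheses, and standard boosting analysis guarantees that with $R = \Theta(\log n)$ rounds the empirical error on the length-$n$ sequence drops below $1/n$, hence to $0$. The total set of examples that need to be stored is the union of the $R$ selected sub-sequences, of total size $R \cdot m_0 = \Theta(d \log n)$; the reconstruction map $\rho$ replays the (deterministic) boosting schedule on this compressed set to recover the weak hypotheses and outputs their majority vote. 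One technical point to handle carefully: the reconstruction map must be able to recover \emph{which} examples belonged to which round and in what order, which is why we need the examples to be stored as an ordered sequence rather than an unordered set — this is exactly what \cref{def:selection} permits, and the selection map $\kappa$ records the concatenation of per-round selections.

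Putting it together: define $\kappa'(\bs) := \kappa(\bs(h_\bs))$ where $h_\bs \in \argmin_{h\in\H}\er_\bs(h)$, i.e.\ first pass to the longest $\H$-realizable subsequence of $\bs$ (on which some $h\in\H$ is consistent), then apply the boosting-based selection; the reconstruction $\rho$ is the majority-vote replay described above. Since the boosted classifier is consistent with the realizable subsequence $\bs(h_\bs)$, it achieves $\er_{\bs}(\Ascr(\bs)) = \er_\bs(h_\bs) = \inf_{h\in\H}\er_\bs(h)$, so $\Ascr = (\kappa',\rho)$ is a sample compression scheme for $\H$ in the sense of \cref{def:compression} (using the remark at the end of \cref{sec:sample-compression} that reduces the general case to realizable sequences). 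The size bound $\size_\Ascr(n) = \Theta(d\log n)$ follows from $R \cdot m_0 = \Theta(\log n)\cdot\Theta(d)$.

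The main obstacle I anticipate is \emph{verifying the weak-learning hypothesis uniformly over reweightings} of the sample. Boosting requires that $\A_{\oig}$, when run on a sub-sample drawn according to an \emph{arbitrary} distribution $D$ over the $n$ sample points, still returns a hypothesis with error $\le 1/3$ under $D$. Since the sample is $\H$-realizable, every such reweighted distribution $D$ is itself $\H$-realizable (supported on points where $h_\bs$ is correct), so the guarantee $\d_\RE(\A_{\oig},\H,1/3) = O(d)$ does apply — but one must be careful that the guarantee is in \emph{expectation} over the weak learner's internal randomness (or over the random draw of the $m_0$-subsample) and convert this to a usable edge, possibly with a constant-probability boosting-with-confidence step, which is precisely why the definition fixes the threshold at $1/3$ rather than something closer to $1/2$ and why we needed the deterministic variant in \eqref{eq:real_dim_bound}. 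Since all of this machinery is assembled in \cite[Section~C]{NIPS2016_59f51fd6} and only needs to be cited with $\A_{\oig}$ plugged in, the remaining work is bookkeeping; I would relegate the detailed construction of $\mathsf{SCSR}$ and the proof to \cref{sec:compression} as the statement promises.
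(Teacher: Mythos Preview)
Your proposal is correct and follows essentially the same approach as the paper: use \eqref{eq:real_dim_bound} to obtain $\d_\RE(\A_{\oig},\H,1/3)=O(d)$, then invoke the boosting-to-compression construction of \cite{NIPS2016_59f51fd6} to get a scheme of size $\Theta(d\log n)$, with the general (non-realizable) input handled by first passing to the longest $\H$-consistent subsequence. The only minor difference is that the paper's \cref{sec:compression} uses the minimax/sampling variant (\cref{lem:selection_margin}, \cref{coro:selection_margin}) rather than iterative AdaBoost, which directly yields an \emph{unweighted} majority vote as the reconstruction map and thereby sidesteps your concern about the reconstruction needing to replay a boosting schedule or recover per-round weights.
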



\subsection{Learning a list} \label{sec:main2}

In this section, we provide \cref{alg:mul_weights} which learns a list of classifiers from a finite hypothesis class, say $\F$, such that the size of the learned list is only logarithmic in $|\F|$, and the list covers the correct predictions of each classifier in $\F$ up to additive small error. This is the key building block of our agnostic learner as it dramatically decreases the size of the effective label space while preserving the correct prediction of each classifiers. Our algorithm can be viewed as a multiplicative weight algorithm \cite{littlestone:94} with an adaptive reward tailored for multiclass learning. 

\begin{algorithm}[H]
\caption{$\mathrm{Multiplicative\ weights}\ \MW(T,((x_t,y_t))_{t\in[T]},\eta,\F)$} \label{alg:mul_weights}
\SetKwInput{KwInput}{Input}
\SetKwInput{KwOutput}{Output}
\SetKwComment{Comment}{/* }{ */}
\KwInput{Number of rounds $T\in\N$, sequence of examples $((x_t,y_t))_{t\in[T]}\in(\X\times\Y)^T$, step size $\eta\in(0,1]$, finite set $\F\subseteq\Y^{\X}$.}
\KwOutput{A list of classifiers $\mu=\{h_1,\dots,h_{T-1}\}\subseteq\F$.}
Initialize $w_1=(w_{1,h})_{h\in\H}$ with $w_{1,h}\gets 1$\;
\For{$t\in[T]$}{
    Let $p_{t}\gets w_{t}/\sum_{h\in\H}w_{t,h}$\;
    Sample $h_t\sim p_t$\; 
    Define the reward function $r_t:\Y^{\X}\to[0,1]$, $h\mapsto\ind{\{h(x_t) = y_t \land h_s(x_t) \neq y_t\  \forall s\in[t-1]\}}$\;
    \For{$h\in\F$}{
        Let $w_{t+1,h}\gets w_{t,h}e^{\eta r_t(h)}$\;
    }
}
\end{algorithm}

\begin{theorem} \label{thm:mul_weights}
Let $P$ be a probability distribution over $\X\times\Y$. Let the input sequence of examples be $(X_t,Y_t)_{t\in[T]}\sim P^T$ and $\eta=1/2$ in \cref{alg:mul_weights}. Then, for any $f^\star \in\F$, with probability at least $1-\delta$, for $\mu=\{h_1,\dots,h_{T-1}\}$, we have
\begin{align*}
\P_{(X,Y)\sim P}\left(f^\star(X)=Y \land f^\star(X) \notin\mu(X)\mid\mu\right)\le (4\log(|\F|)+14\log(3/\delta)+12)/T.
\end{align*}
Thus, $T\ge \left(4\log(|\F|)+14\log(3/\delta)+12\right)/\eps$ implies that with probability at least $1-\delta$, 
\[
\P_{(X,Y)\sim P}\left(f^\star (X)=Y \land f^\star \notin\mu(X)\mid\mu\right)\le\eps.
\] 
\end{theorem}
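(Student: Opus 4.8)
The plan is to analyze Algorithm~\ref{alg:mul_weights} through the lens of the Multiplicative Weights regret bound, but—as the technical overview foreshadows—to use that bound in an unusual direction: to transfer the (trivially small) cumulative reward of the \emph{algorithm} to the cumulative reward of any fixed comparator $f^\star \in \F$. First I would record the two structural facts about the rewards. The first is the \emph{coupling identity}: since $r_t$ depends only on $h_1,\dots,h_{t-1}$ (which are determined by $(X_1,Y_1),\dots,(X_{t-1},Y_{t-1})$) and a fresh independent sample $(X_t,Y_t)$, and the events $\{h_t(X_t)=Y_t \land h_s(X_t)\neq Y_t\ \forall s<t\}$ are disjoint across $t$ for fixed $(X,Y)$, we get
\[
\E\Bigl[\,\textstyle\sum_{t=1}^T r_t(h_t)\,\Bigr]
= \P_{(X,Y)\sim P}\bigl(\exists t\in[T]:\ h_t(X)=Y\bigr) \le 1.
\]
The second is \emph{expected monotonicity}: for any fixed $f\in\F$, conditioning on $h_1,\dots,h_{t-1}$, one has $\E[r_{t+1}(f)] \le \E[r_t(f)]$, because adding one more classifier $h_t$ to the blocking set $\{h_s(x)\neq y:\ s\le t-1\}$ only shrinks the event defining $r_{t+1}(f)$ (here I would be a little careful: $h_t$ is itself random given the past, but the inequality holds after taking the outer expectation since the added constraint $h_t(X)\neq Y$ can only remove mass).

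Next I would apply the \emph{multiplicative} form of the MW guarantee with $\eta=1/2$: for every $f^\star\in\F$,
\[
\textstyle\sum_{t=1}^T r_t(f^\star) \;\le\; 2\sum_{t=1}^T r_t(h_t) \;+\; \tfrac{\log|\F|}{\eta} \;\lesssim\; 2\sum_{t=1}^T r_t(h_t) + 2\log|\F|,
\]
which must be invoked in its per-realization / in-expectation multiplicative form (comparing to a constant times the algorithm's reward) rather than the additive $\sqrt{T\log|\F|}$ form, as the overview stresses. Taking expectations and combining with the coupling identity yields $\E\bigl[\sum_{t=1}^T r_t(f^\star)\bigr] \le 2 + 4\log|\F| + O(1)$, hence by expected monotonicity
\[
\E\bigl[r_T(f^\star)\bigr] \;\le\; \frac{1}{T}\,\E\Bigl[\textstyle\sum_{t=1}^T r_t(f^\star)\Bigr] \;\le\; \frac{4\log|\F| + O(1)}{T}.
\]
Since $\E[r_T(f^\star)] = \E_{\mu}\bigl[\P_{(X,Y)\sim P}(f^\star(X)=Y \land f^\star(X)\notin\mu(X)\mid\mu)\bigr]$, this is an in-expectation statement; the final step is the high-probability upgrade, converting $\E[r_T(f^\star)] \lesssim \log|\F|/T$ into the claimed $\P(\cdots\mid\mu)\le (4\log|\F| + 14\log(3/\delta) + 12)/T$ with probability $\ge 1-\delta$.

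The high-probability conversion is where I expect the real work—and the source of the $\log(3/\delta)$ terms with their specific constants—to lie. A naive Markov step on the nonnegative random variable $g(\mu):=\P(f^\star(X)=Y\land f^\star(X)\notin\mu(X)\mid\mu)$ would only give $O(\log|\F|/(\delta T))$, a $1/\delta$ dependence that is too weak. Instead I would make the whole chain hold with high probability: apply a multiplicative Chernoff/Freund–Schapire-style concentration to the martingale-difference-like sum $\sum_t r_t(h_t)$ (it is a sum of $[0,1]$-valued terms adapted to the filtration, with small expectation, so Bernstein/Freedman gives $\sum_t r_t(h_t) \le O(1) + O(\log(1/\delta))$ w.h.p.), feed this through the pathwise MW inequality to bound $\sum_t r_t(f^\star)$ w.h.p., and then—this is the subtle part—deduce a bound on $r_T(f^\star)$, equivalently on $g(\mu)$. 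One cannot simply divide $\sum_t r_t(f^\star)$ by $T$ pathwise, since $r_T(f^\star)$ need not be the smallest term on a given sample path; here I would invoke the expected-monotonicity property together with a second concentration argument (e.g. controlling the deviation of the random $r_t(f^\star)$ around their conditional means, or a stopping-time/union-bound argument over the $T$ rounds) to argue that $r_T(f^\star)$ cannot substantially exceed $\frac1T\sum_t \E[r_t(f^\star)]$ except with probability $\delta$. Bookkeeping the three failure events (Chernoff on the algorithm's reward, deviation of $f^\star$'s rewards, and the monotonicity-to-final-round transfer) via a union bound is what produces the $14\log(3/\delta)+12$ constant. The second half of the theorem—the statement ``$T \ge (4\log|\F| + 14\log(3/\delta) + 12)/\eps$ implies $\P(\cdots\mid\mu)\le\eps$''—is then immediate by rearranging.
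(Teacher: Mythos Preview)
Your plan is essentially the paper's proof: pathwise multiplicative MW regret, martingale (multiplicative Chernoff) concentration in three places, the coupling bound $\sum_t \E[r_t(h_t)\mid\cdot]\le 1$, and monotonicity to pass to the final round. Two points of precision are worth flagging so that the bookkeeping actually closes.

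First, the pathwise MW inequality (Lemma~\ref{lem:mw-generic}) compares $\sum_t r_t(f^\star)$ to $\sum_t \langle p_t, r_t\rangle$, not to $\sum_t r_t(h_t)$. Your displayed bound with $r_t(h_t)$ on the right is not a theorem; one of the three concentration applications in the paper is exactly the step $\sum_t \langle p_t, r_t\rangle \le 2\sum_t r_t(h_t) + 2\log(1/\delta)$, obtained by viewing $r_t(h_t)$ as adapted to the filtration generated by $((X_s,Y_s))_{s\le T}$ and $(h_s)_{s\le t}$ with conditional mean $\langle p_t, r_t\rangle$. The second concentration application then bounds $\sum_t r_t(h_t)$ by $2\sum_t \E[r_t(h_t)\mid h_1,\dots,h_t,(X_s,Y_s)_{s<t}] + \log(1/\delta)\le 2+\log(1/\delta)$ via the coupling. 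So your ``Chernoff on the algorithm's reward'' is really two separate events.

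Second, the monotonicity you need is \emph{pathwise} on the conditional expectations, not merely in expectation: $\E[r_t(f^\star)\mid \F_{t-1}]=\P_{(X,Y)\sim P}(f^\star(X)=Y,\, h_s(X)\neq Y\ \forall s<t)$ is a deterministic nonincreasing function of the realized $h_1,\dots,h_{t-1}$, hence $\sum_{t\le T}\E[r_t(f^\star)\mid\F_{t-1}]\ge T\cdot g(\mu)$ almost surely. The third concentration step is then just $\sum_t r_t(f^\star)\ge \tfrac12\sum_t \E[r_t(f^\star)\mid\F_{t-1}]-\log(1/\delta)$, after which the transfer to $g(\mu)$ is free. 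There is no separate ``monotonicity-to-final-round'' concentration event; your three failure events should be reassigned as above, and then the constants $4\log|\F|+14\log(3/\delta)+12$ fall out directly.
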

\begin{proof}
Using the generic MW regret guarantee in \cref{lem:mw-generic}, the following holds (with probability $1$):
\begin{align} \label{eq:alg_bound}
\sum_{t=1}^T\sum_{h\in\F}p_{t,h} r_t(h)
\ge \frac{1}{1+\eta}\sum_{t=1}^Tr_t(f^\star)-\frac{1}{\eta(1+\eta)}\log(|\F|).
\end{align}

Now, let the input sequence of examples in \cref{alg:mul_weights} be $(X_t,Y_t)_{t\in[T]}\sim P^T$. 
Define $\F_0:=\sigma(\emptyset)$ and $\F_t:=\sigma(((X_s,Y_s,h_s))_{s\in[t]})$ for all $t\in[T]$. Then, $(\F_t)$ is a filtration and $r_t(f^\star)$ is $\F_t$-measurable. 
By \cref{lem:cond_chernoff}, we have
\begin{align*}
\sum_{t=1}^Tr_t(c)\ge \frac{1}{2}\sum_{t=1}^T\E[r_t(f^\star)\mid\F_{t-1}]-\log(1/\delta)
\end{align*}
with probability at least $1-\delta$. Let $(X,Y)\sim P$ be independent of $\F_T$. Then, we have 
\begin{align*}
\sum_{t=1}^T\E[r_t(f^\star)\mid\F_{t-1}]=
&\sum_{t=1}^T\E\left[\ind\{f^\star(X_t)=Y_t\land h_s(X_t)\neq Y_t\ \forall s\in[t-1]\}\mid\F_{t-1}\right]
\\ = &
\sum_{t=1}^T\E\left[\ind\{f^\star(X)=Y\land h_s(X)\neq Y\ \forall s\in[t-1]\}\mid\F_{T}\right]
\\ \ge &
T\; \E\left[\ind\{f^\star(X)=Y\land h_s(X)\neq Y\ \forall s\in[T-1]\}\mid\F_{T}\right],
\end{align*}
which together with the inequality above implies that
\begin{align} \label{eq:rt_lb}
\sum_{t=1}^Tr_t(f^\star)\ge 
\frac{T}{2}\P(f^\star(X)=Y\land h_s(X)\neq Y\ \forall s\in[t-1]\mid\F_{T})
-\log(1/\delta)
\end{align}
with probability at least $1-\delta$.

Define $\F_0':=\sigma(\emptyset)$ and $\F_t':=\sigma(((X_s,Y_s))_{s\in[T]},(h_s)_{s\in[t]})$ for all $t\in[T]$. Then, $(\F_t')$ is a filtration and $r_t(h_t)$ is $\F_t'$-measurable. 
By \cref{lem:cond_chernoff}, we have
\begin{align*}
\sum_{t=1}^Tr_t(h_t)\ge \frac{1}{2}\sum_{t=1}^T\E[r_t(h_t)\mid\F_{t-1}']-\log(1/\delta)
=\frac{1}{2}\sum_{t=1}^T\sum_{h\in\F}p_{t,h}r_t(h)-\log(1/\delta)
\end{align*}
with probability at least $1-\delta$, which implies that
\begin{align}
\notag
\sum_{t=1}^T\sum_{h\in\F}p_{t,h}r_t(h)
\le 2\sum_{t=1}^Tr_t(h_t)+2\log(1/\delta).
\end{align}
Now, consider the filtration $(\F_t'')_{t=0}^{T-1}$ with $\F_t'':=\sigma((h_s)_{s\in[\min\{T,t+1\}]},((X_s,Y_s))_{s\in[t]})$ for $t\in[T]$ and $\F_0:=\sigma(h_1)$. 
Then, $(r_t(h_t))$ is $(\F_{t}'')$-adapted. By \cref{lem:cond_chernoff}, we have that with probability at least $1-\delta$, 
\begin{align*}
\sum_{t=1}^Tr_t(h_t)\le &2\sum_{t=1}^T\E[r_t(h_t)\mid\F_{t-1}'']+\log(1/\delta)
\\ = &
2\sum_{t=1}^T\E\left[\ind\{h_t(X_t)=Y_t\land h_s(X_t)\neq Y_t\ \forall s\in[t-1]\}\mid\F_{t-1}''\right]+\log(1/\delta)
\\ = &
2\E\left[\sum_{t=1}^T\ind\{h_t(X)=Y\land h_s(X)\neq Y\ \forall s\in[t-1]\}\mid h_1,\dots,h_T\right]+\log(1/\delta)
\\ \le &
2+\log(1/\delta).
\end{align*}
Using a union bound, we have that with probability at least $1-2\delta$, 
\begin{align} \label{eq:rt_ub}
\sum_{t=1}^T\sum_{h\in\F}p_{t,h}r_t(h)
\le 2\sum_{t=1}^Tr_t(h_t)+2\log(1/\delta)
\le 
4+4\log(1/\delta).
\end{align}

For $\eta=1/2$, \eqref{eq:alg_bound}, \eqref{eq:rt_lb}, and \eqref{eq:rt_ub} imply that
\begin{align*}
4+4\log(3/\delta)
\ge &
\frac{2}{3}\sum_{t=1}^Tr_t(c)-\frac{4}{3}\log(|\F|)
\\ \ge &
\frac{T}{3}\P(f^\star(X)=Y\land h_s(X)\neq Y\ \forall s\in[t-1]\mid\F_{T})
-\frac{2}{3}\log(3/\delta)-\frac{4}{3}\log(|\F|)
\end{align*}
and
\begin{align*}
T\P(f^\star(X)=Y\land h_s(X)\neq Y\ \forall s\in[t-1]\mid\F_{T}) \le 
4\log(|\F|)+14\log(3/\delta)+12
\end{align*}
with probability at least $1-\delta$. 
Define the menu $\mu:=\{h_1,\dots,h_{T-1}\}$. Then, for any $f^\star \in\H$, we have
\begin{align*}
\E\left[\ind\{f^\star(X)=Y \land f^\star(X)\notin\mu(X)\}\mid\mu\right]\le \frac{4\log(|\F|)+14\log(3/\delta)+12}{T}
\end{align*}
with probability at least $1-\delta$.
$T\ge \frac{1}{\eps}\left(4\log(|\F|)+14\log(3/\delta)+12\right)$ implies that $B/T<\eps$. 
\end{proof}

\subsection{Learning a classifier from the list} \label{sec:main3}
In this section, we learn a classifier from a finite list of classifiers which are used to reduce the effective label space. Formally, a list of classifier can be viewed as a ``menu'' \cite[Definition 30]{brukhim2022characterization} which are functions that map from $\X$ to $\power^\Y$, the power set of $\Y$. The size of the menu is the maximum size of its predicted size. 

Now, suppose that we are given a menu $\mu:X\to\power^\Y$. Our learner is based on a selection scheme which is consistent with $\H$ on samples in the correct region of $\mu$. 
Such selection schemes can be conveniently viewed as sample compression schemes under a special loss function: consider the loss function $l^\mu(f,(x,y)):=\ind\{f(x)\neq y\in\mu(x)\}$ defined for any classifier $f\in\Y^\X$ and pair $(x,y)\in\Z$. 
Our theoretical result is summarized below. 
\begin{theorem} \label{thm:step3}
Let $\mu$ be a menu and $\Ascr(\cdot,\H,\mu)$ be an $l^\mu$-sample compression scheme for $\H$ of size $\size_\Ascr(n)$ for $n\in\N$.
For any distribution $P$ over $\Z$, sample $S\sim P^n$ and let $\hat h_{\Ascr}:=\Ascr(S,\H,\mu)$. 
Then, there exists some universal constant $C_2>0$ such that for any $h\in\H$, with probability at least $1-\delta$ for $\delta\in(0,1)$, we have
\begin{align*}
P(\{(x,y)\in\Z:\hat h_\Ascr(x)\neq y\in\mu(x)\})\le 
\inf_{h'\in\H}L_P^{\mu}(h')+C_2\sqrt{\frac{\size_\Ascr(n)\log(n)+\log(1/\delta)}{n}}
\end{align*}
and
\begin{align*}
\er_P(\hat h_\Ascr)-\er_P(h)
\le 
C_2\sqrt{\frac{\size_\Ascr(n)\log(n)+\log(1/\delta)}{n}}
+P(\{(x,y)\in\Z:h(x)=y\notin\mu(x)\}).
\end{align*}
\end{theorem}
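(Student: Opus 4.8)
The plan is to derive both inequalities from the classical agnostic generalization bound for sample compression schemes (cf.\ \cite{floyd:95,NIPS2016_59f51fd6}), instantiated with the auxiliary $\{0,1\}$-valued loss $l^\mu(f,(x,y))=\ind\{f(x)\neq y\in\mu(x)\}$, and then, for the second inequality, to combine this with an elementary decomposition of the $0$--$1$ error. Write $\rho$ for the reconstruction map of $\Ascr$ and $k:=\size_\Ascr(n)$. First I would recall the standard ``leave out the compression set'' argument: for a \emph{fixed} index set $I\subseteq[n]$ with $|I|\le k$, the classifier $\rho(S_I,\H,\mu)$ is a deterministic function of the selected examples $S_I$ only, so the remaining $n-|I|$ examples form a fresh i.i.d.\ sample and Hoeffding's inequality controls $L_P^\mu(\rho(S_I,\H,\mu))$ in terms of its empirical $l^\mu$-risk on those examples. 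A union bound over the at most $n^{k+1}$ subsequences that $\kappa$ can output, together with the fact that the empirical $l^\mu$-risk on the held-out examples is at most $\tfrac{n}{n-k}$ times the empirical $l^\mu$-risk on all of $S$, yields with probability at least $1-\delta/2$
\[
L_P^\mu(\hat h_\Ascr)\le \tfrac{n}{n-k}\,L_S^\mu(\hat h_\Ascr)+O\!\left(\sqrt{\tfrac{\size_\Ascr(n)\log n+\log(1/\delta)}{n}}\right).
\]
We may assume $k\le n/2$ (otherwise the claimed bound exceeds $1$ and is vacuous), so $\tfrac{n}{n-k}=1+O(k/n)$ and the extra $O(k/n)$ slack is dominated by the displayed rate.

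For the first inequality, I would use the defining property of an $l^\mu$-sample compression scheme, $L_S^\mu(\hat h_\Ascr)\le\inf_{h\in\H}L_S^\mu(h)$, and then compare the empirical and population infima: fix $h_0\in\H$ with $L_P^\mu(h_0)$ within $\sqrt{1/n}$ of $\inf_{h'\in\H}L_P^\mu(h')$, and apply Hoeffding to the single fixed function $h_0$ to get $L_S^\mu(h_0)\le L_P^\mu(h_0)+O(\sqrt{\log(1/\delta)/n})$ with probability at least $1-\delta/2$. Chaining the three estimates and taking a union bound over the two events gives, with probability at least $1-\delta$,
\[
L_P^\mu(\hat h_\Ascr)\le\inf_{h'\in\H}L_P^\mu(h')+C_2\sqrt{\tfrac{\size_\Ascr(n)\log n+\log(1/\delta)}{n}},
\]
which is exactly the first claim since $L_P^\mu(f)=P(\{(x,y):f(x)\neq y\in\mu(x)\})$.

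For the second inequality, I would write, for an arbitrary classifier $f$,
\[
\er_P(f)=\underbrace{P(f(X)\neq Y,\ Y\in\mu(X))}_{=\,L_P^\mu(f)}+P(f(X)\neq Y,\ Y\notin\mu(X)),
\]
and apply this to $f=\hat h_\Ascr$ and to $f=h$. Subtracting and using the first claim in the form $L_P^\mu(\hat h_\Ascr)\le L_P^\mu(h)+C_2\sqrt{\cdots}$, the quantity $\er_P(\hat h_\Ascr)-\er_P(h)$ is at most $C_2\sqrt{\cdots}$ plus $P(\hat h_\Ascr(X)\neq Y,\ Y\notin\mu(X))-P(h(X)\neq Y,\ Y\notin\mu(X))$. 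Bounding the first of these two probabilities by $P(Y\notin\mu(X))$ and using the identity $P(Y\notin\mu(X))-P(h(X)\neq Y,\ Y\notin\mu(X))=P(h(X)=Y\notin\mu(X))$ gives precisely the stated bound.

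I do not expect a genuine obstacle here. The only points needing care are: (i) verifying that the agnostic compression bound applies verbatim to the non-standard loss $l^\mu$ — it does, because $l^\mu$ takes values in $\{0,1\}$ and, with the menu $\mu$ fixed, $\rho(\cdot,\H,\mu)$ remains a fixed function of the compression set; (ii) checking that the multiplicative $\tfrac{n}{n-k}$ factor and the $\sqrt{1/n}$ slack in approximating the population minimizer are both absorbed into the stated $\sqrt{\cdot}$ rate; and (iii) the bookkeeping of constants and union bounds. The substantive content is entirely the combination of the known agnostic sample-compression guarantee with the error decomposition above, so the proof should be short.
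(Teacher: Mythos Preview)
Your proposal is correct and follows essentially the same approach as the paper. The paper cites \cite[Theorem~3.1]{NIPS2016_59f51fd6} as a black box for the agnostic compression bound (your first inequality), whereas you sketch its proof from Hoeffding plus a union bound over compression sets; for the second inequality, both you and the paper use the same decomposition of $\er_P(\hat h_\Ascr)-\er_P(h)$ according to whether $Y\in\mu(X)$, arriving at $L_P^\mu(\hat h_\Ascr)-L_P^\mu(h)+P(h(X)=Y\notin\mu(X))$ and then invoking the first inequality.
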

\begin{proof}
Since $\Ascr$ is an $l^\mu$-sample compression scheme for $\H$, by \cite[Theorem 3.1]{NIPS2016_59f51fd6}, there exists a constant $C_2>0$ such that with probability at least $1-\delta$, 
\begin{align}
\notag
P(\{(x,y)\in\Z:\hat h_\Ascr(x)\neq y\in\mu(x)\})=
L_P^{\mu}(\hat h_\Ascr)
\le & \notag 
\inf_{h'\in\H}L_P^{\mu}(h')+C_2\sqrt{\frac{\size_\Ascr(n)\log(n)+\log(1/\delta)}{n}}.
\end{align}
Since $h\in\H$, we immediately have
\begin{align*}
&P(\{(x,y)\in\Z:\hat h_\Ascr(x)\neq y\in\mu(x)\}) 
\\ \le &
P(\{(x,y)\in\Z:h(x)\neq y\in\mu(x)\})+C_2\sqrt{\frac{\size_\Ascr(n)\log(n)+\log(1/\delta)}{n}}.
\end{align*}
It follows from the above inequality that 
\begin{align*}
&\er_P(\hat h_\Ascr)-\er_P(h)
\\=&
P(\{(x,y)\in\Z:\hat h_\Ascr(x)\neq y=h(x)\})-P(\{(x,y)\in\Z:h(x)\neq y=\hat h_\Ascr(x)\})
\\ \le &
P(\{(x,y)\in\Z:\hat h_\Ascr(x)\neq y=h(x)\in\mu(x)\})
-P(\{(x,y)\in\Z:h(x)\neq y=\hat h_\Ascr(x)\in\mu(x)\})
\\&+P(\{(x,y)\in\Z:\hat h_\Ascr(x)\neq y=h(x)\notin\mu(x)\})
\\ 
\le &
P(\{(x,y)\in\Z:\hat h_\Ascr(x)\neq y\in\mu(x)\})-P(\{(x,y)\in\Z:h(x)\neq y\in\mu(x)\})
\\&+P(\{(x,y)\in\Z:h(x)=y\notin\mu(x)\})
\\ 
\le &
C_2\sqrt{\frac{\size_\Ascr(n)\log(n)+\log(1/\delta)}{n}}
+P(\{(x,y)\in\Z:h(x)=y\notin\mu(x)\}).
\qquad \qedhere
\end{align*}
\end{proof}

Now, we are left with constructing an $l^\mu$-sample compression scheme for $\H$. Following the standard construction based on realizable learner in \cite[Section C]{NIPS2016_59f51fd6}, we would like to develop an algorithm whose expected risk under $l^\mu$ is bounded by a constant less than 1/2 when trained on finite samples. Since $l^\mu$ counts the classification errors for those points realizable by $\mu$, we can restrict $\H$ to those classifiers consistent with $\mu$ on the given samples, which reduces the size of the effective label space to the size of $\mu$. Then, the problem reduces to multiclass learning with finite effective label space, say of size $K$, for which multiple learners including the one-inclusion algorithm $\A_{\oig}$ have been shown to achieve expected error rate $d_N\log(K)/n$ trained on $n$ samples, where $d_N$ is the Natarajan dimension. 

However, there still exists a technical caveat in the procedure described above. To construct the restricted hypothesis class, we need to observe all training samples, which is prohibited for the reconstruction map. To resolve this challenge, we leverage the theory of partial concept class \cite{partial_concept} to define a partial hypothesis class by modifying the classifiers in $\H$ to be ``undefined'' on regions where they disagree with $\mu$. In this way, we are able to reduce the effective label space without observing the samples. Then, generalizing the PAC learning analysis in \cite{partial_concept} for partial concept class of finite VC dimension, we can convert the one-inclusion algorithm to a realizable PAC learner for partial hypothesis class of finite Natarajan dimension and effective label space, which completes the picture of the $l^\mu$-sample compression scheme. 
Moreover, it is worth pointing out that the partial hypothesis class mentioned above is only constructed for the proof. Algorithmically, we can restrict to classifiers in $\H$ that are consistent with $\mu$ as specified in \cref{alg:from_list}. 

Our result on constructing $l^\mu$-sample compression scheme is summarized in the following proposition. Its proof can be found in \cref{sec:partial_sample_compression} where we also include the concepts of partial hypotheses.  
\begin{proposition} \label{prop:list_realizable_compression}
Let $\mu:\X\to\power^\Y$ be a menu of size $p\in\N$ and $\H\subseteq\Y^\X$ be a hypothesis class of Natarajan dimension $d_N\in\N$. Then, there exists an $l^\mu$-sample compression scheme $\Ascr=\mathsf{LSCS}(\mu,\H)$ for $\H$ with $\mathsf{LSCS}$ defined by \cref{alg:sample_comp_list} of size $\size_{\Ascr}(n)=\Theta(d_N\log(p)\log(n))$ for $n\in\N$. 
\end{proposition}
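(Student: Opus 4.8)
The plan is to reduce \cref{prop:list_realizable_compression} to constructing a deterministic \emph{weak} realizable learner for an auxiliary \emph{partial} concept class, and then to amplify it into an $l^\mu$-sample compression scheme via the boosting-based construction of \cite{NIPS2016_59f51fd6}, mirroring the proof of \cref{prop:sample_comp_real}. Since $l^\mu$ is $\{0,1\}$-valued, the remark following \cref{def:compression} reduces matters to building a selection scheme whose output incurs zero empirical $l^\mu$-loss on every $l^\mu$-realizable sequence, i.e.\ on every $\bs$ for which some $h\in\H$ satisfies $L^\mu_{\bs}(h)=0$; the general case follows by first passing, via the empirical $l^\mu$-minimizer $h_{\bs}\in\H$, to the longest subsequence on which some hypothesis has zero $l^\mu$-loss. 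Writing $\bs'$ for the subsequence of those $(x,y)\in\bs$ with $y\in\mu(x)$, one checks that $L^\mu_{\bs}(h)=0$ is equivalent to $h(x)=y$ (hence $h(x)\in\mu(x)$) for every $(x,y)\in\bs'$. So the remaining task is: given a sample $\bs'$ realizable by $\H$ \emph{all of whose labels lie in the menu}, output a total classifier $\hat h\in\Y^\X$ consistent with $\bs'$, using a compression set of size $O(d_N\log p\log|\bs'|)$ --- and \emph{without} the reconstruction map ever being handed the menu-consistency constraint.

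To make the reduced label space visible to the reconstruction map, I would pass to the partial concept class $\H^\mu:=\{h^\mu:h\in\H\}$, where $h^\mu(x):=h(x)$ if $h(x)\in\mu(x)$ and $h^\mu(x):=\perp$ otherwise; crucially $\H^\mu$ is defined intrinsically, with no reference to any sample. Two properties are immediate: at every $x$ the set of \emph{defined} values of $\H^\mu$ lies inside $\mu(x)$, so after relabeling each $\mu(x)$ injectively into $[p]$ the class $\H^\mu$ has effective label space of size at most $p$; and any point sequence N-shattered by $\H^\mu$ (with both witness labelings taking genuine labels, hence realized by \emph{defined} partial concepts) is N-shattered by $\H$, so the partial Natarajan dimension of $\H^\mu$ is at most $d_N$. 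Moreover, a sample $\bs'$ as above is precisely a sample realizable by $\H^\mu$ in the partial sense, and ``consistent with $\bs'$'' is precisely ``agrees with a realizing partial concept on its defined coordinates''.

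The crux is the claim that the one-inclusion algorithm $\A_{\oig}$, run on the partial one-inclusion graph of $\H^\mu$ restricted to the observed points, is a deterministic realizable learner for $\H^\mu$ with expected error $O(d_N\log p/n)$ on $n$ examples; in particular it reaches error $<1/3$ on $O(d_N\log p)$ examples. This requires fusing two analyses. For partial classes of binary VC dimension $d$, \cite{partial_concept} show the partial one-inclusion graph --- with edges only between vertices that differ in a coordinate where \emph{both} are defined --- admits an orientation of out-degree $O(d)$, giving the transductive (leave-one-out) error bound. For total multiclass classes, the same machinery over $K$ labels loses a $\log K$ factor, yielding out-degree $O(d_N\log K)$ (as in \cite{daniely2014optimal,brukhim2022characterization}). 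I would merge these: treat $\perp$ as an auxiliary symbol, keep only edges between defined vertices, and re-run the shifting argument for $\H^\mu$ over the $p+1$ symbols to obtain an orientation of out-degree $O(d_N\log p)$; a standard transductive-to-PAC conversion then gives the stated rate, and since the predictor outputs genuine labels it is a total classifier. I expect the main obstacle to be exactly this fusion: checking that each shift step neither increases the partial Natarajan dimension of the projected class nor breaks the ``every vertex retains a defined neighbor'' bookkeeping behind the degree count --- the subtle point being that, unlike for total classes, the Sauer--Shelah/Natarajan counting that normally drives such bounds is unavailable for partial classes. This is the content that \cref{alg:sample_comp_list} and \cref{sec:partial_sample_compression} must supply.

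Finally I would conclude as in \cref{prop:sample_comp_real}: feeding the deterministic $1/3$-weak learner for $\H^\mu$, of sample complexity $m_0=O(d_N\log p)$, into the boosting-to-compression scheme of \cite[Section~C]{NIPS2016_59f51fd6} produces a selection scheme $\mathsf{LSCS}(\mu,\H)$ whose reconstruction is a total classifier consistent with any $\H^\mu$-realizable input --- equivalently, with zero empirical $l^\mu$-loss whenever $\inf_{h\in\H}L^\mu_{\bs}(h)=0$ --- of compression size $O(m_0\log n)=O(d_N\log p\log n)$. By the reduction of the first paragraph this makes $\mathsf{LSCS}(\mu,\H)$ an $l^\mu$-sample compression scheme for $\H$, and since the boosting construction runs $\Theta(\log n)$ rounds each contributing $\Theta(d_N\log p)$ points, its size is $\Theta(d_N\log p\log n)$.
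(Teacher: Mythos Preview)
Your proposal is correct and follows the same overall architecture as the paper: reduce to $l^\mu$-realizable sequences, pass to the partial class $\H_\mu$ to make the menu constraint intrinsic, build a deterministic weak learner via the one-inclusion algorithm with error $O(d_N\log p)/n$, and boost to a compression scheme of size $\Theta(d_N\log(p)\log n)$ as in \cite{NIPS2016_59f51fd6}.

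The one substantive difference is in how the weak-learner bound is established. You propose to develop a \emph{partial-class} one-inclusion analysis from scratch, fusing the shifting argument of \cite{partial_concept} with the multiclass degree bound of \cite{daniely2014optimal,brukhim2022characterization}, and you correctly flag this fusion (and the unavailability of Sauer--Shelah for partial classes) as the main obstacle. The paper sidesteps this obstacle entirely. For the leave-one-out analysis on a fixed $\H_\mu$-realizable sequence $\bt'$ with point set $E=\bt'_\X$, it passes to the \emph{total} subclass $\H_{\mu,E}:=\{h[E]:h\in\H,\ h(x)\in\mu(x)\ \forall x\in E\}\subseteq\Y^E$; this class has Natarajan dimension at most $d_N$ and label space of size at most $p$, so the existing total-class bound \cite[Lemma~17]{brukhim2022characterization} applies directly and gives the $O(d_N\log p)$ out-degree without any new combinatorics. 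Algorithmically this is exactly $\mathsf{LL}$: at test point $x$, restrict $\H$ to hypotheses menu-consistent on the training points \emph{and} on $x$, then run standard $\A_{\oig}$. The partial class $\H_\mu$ is used only to phrase the algorithm and the realizability condition in a sample-independent way; the analysis never needs partial-class shifting. Your route would also go through (and would yield a standalone lemma about partial multiclass one-inclusion graphs), but the paper's detour through the total restriction $\H_{\mu,E}$ is shorter and reuses off-the-shelf machinery.
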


\subsection{Algorithm and proof of upper bound} \label{sec:main4}
In this section, we present our multiclass agnostic PAC learner $\learner$ defined by \cref{alg:main} which combines the three building blocks constructed in the previous subsections. 
For readers' convenience, the references of the algorithms called by $\learner$ are also provided here: 
$\mathsf{SCSR}$ is defined by \cref{alg:sample_comp_real}; 
$\A_{\oig}$ is the one-inclusion algorithm defined by \cref{alg:oia};
$\mathsf{CC}$ is defined by \cref{alg:CC}; 
$\mathsf{MW}$ is defined by \cref{alg:mul_weights}; 
$\mathsf{LSCS}$ is defined by \cref{alg:sample_comp_list}. 

\begin{algorithm}[H]
\caption{Multiclass agnostic PAC learner $\learner(\bs,\H)$} \label{alg:main}
\SetKwInput{KwInput}{Input}
\SetKwInput{KwOutput}{Output}
\SetKwComment{Comment}{/* }{ */}
\KwInput{Sequence $\bs=((x_1,y_1),\dots,(x_n,y_n))\in\Z^n$ with $n\ge3$, hypothesis class $\H\subseteq\Y^{\X}$ of finite DS dimension.}
\KwOutput{A classifier $\hat h$.}
Let $\bs_1\gets ((x_i,y_i)_{i=1}^{\lfloor n/3\rfloor})$, $\bs_2\gets ((x_i,y_i)_{i=\lfloor n/3\rfloor+1}^{2\lfloor n/3\rfloor})$, and $\bs_3\gets ((x_i,y_i)_{i=2\lfloor n/3\rfloor+1}^{n})$\;
Let $\Ascr_{1}\gets\mathsf{SCSR}(\H,\A_{\oig})$ and $\hat\F\gets\mathsf{CC}(\bs_1,\H,\Ascr_{1})$\;
Let $\hat\mu\gets\mathsf{MW}(\lfloor n/3\rfloor,\bs_2,1/2,\hat\F)$\;
Let $\Ascr_{2}\gets\mathsf{LSCS}(\hat\mu,\H)$ and $\hat h\gets\Ascr_{2}(\bs_3)$\;
\end{algorithm}

Now, we are ready to prove the upper bound in \cref{thm:main}. As we mentioned previously, the proof of the lower bound is provided in \cref{sec:lb}. 
\begin{proof}[Proof of upper bounds in \cref{thm:main}]
Let $P$ be an arbitrary distribution over $\Z=\X\times\Y$ and $\opt\in\H$ be such that $\er_P(\opt)\le \inf_{h\in\H}\er_P(h)+\eps/4$. 

For $n_1\in\N$ to be specified later, sample $S_1\sim P^{n_1}$. 
Let $\Ascr_1:=\mathsf{SCSR}(\H,\A_{\oig})$ and $\hat\F:=\mathsf{CC}(S_1,\H,\Ascr_1)$. 
By \cref{prop:sample_comp_real,thm:CC}, there exists a positive integer $k_1=O(d\log(n_1))$ such that  $|\hat\F|\le n_1^{k_1}$ and for each $h\in\H$, with probability at least $1-\delta/3$, there exists $f_h\in\hat\F$ such that
\begin{align} \label{eq:bound1}
P(\{(x,y)\in\Z:h(x)=y\neq f_{h}(x)\})\le C_1\frac{k_1\log(n_1)+\log(3/\delta)}{n_1}. 
\end{align}

Define the positive integer 
\begin{align*}
n_2:=\lceil8\left(2\log(|\F(S_1,\H)|)+7\log(9/\delta)+6\right)/\eps\rceil
=O(\log(n_1^{k_1}/\delta)/\eps)
\end{align*}
and sample $S_2\sim P^{n_2}$ independently. 
Let $\hat\mu:=\MW(n_2,S_2,1/2,\hat\F)$ be the output menu of \cref{alg:mul_weights}. Then, letting $c=f_{\opt}$ in \cref{thm:mul_weights}, we have 
\begin{align*}
P(\{(x,y)\in\Z:f_{\opt}(x)=y\notin\hat\mu(x)\})<\eps/4
\end{align*}
with probability at least $1-\delta/3$ conditional on $\hat\F$. Note that the size of $\hat\mu$ is $n_2-1$. 
By \eqref{eq:bound1} and union bound, we have
\begin{align}
\notag
&P(\{(x,y)\in\Z:\opt(x)=y\notin\hat\mu(x)\})
\\ \le & \notag
P(\{(x,y)\in\Z:\opt(x)=y\neq f_{\opt}(x)\})
+P(\{(x,y)\in\Z:f_{\opt}(x)=y\notin\hat\mu(x)\})
\\ < & \label{eq:list_cover}
C_1\frac{k_1\log(n_1)+\log(3/\delta)}{n_1}+\eps/4
\end{align}
with probability at least $1-2\delta/3$. 

Sample $S_3\sim P^{n_3}$ independently with $n_3\in\N$ to be specified later. 
Let $\Ascr_{2}:=\mathsf{LSCS}(\hat\mu,\H)$ and $\hat h:=\Ascr_{2}(S_3)$. 
By \cref{prop:list_realizable_compression,thm:step3}, there exists some positive integer $k_2=O(d_N\log(n_2)\log(n_3))$ such that 
\begin{align} \label{eq:list_best}
\er_P(\hat h)-\er_P(\opt)
\le 
C_2\sqrt{\frac{k_2\log(n_3)+\log(3/\delta)}{n_3}}
+P(\{(x,y)\in\Z:\opt(x)=y\notin\hat\mu(x)\})
\end{align}
with probability at least $1-\delta/3$. 

It follows from \eqref{eq:list_cover}, \eqref{eq:list_best}, and union bound that 
\begin{align*}
\er_P(\hat h)-\er_P(\opt)
\le &
C_1\frac{k_1\log(n_1)+\log(3/\delta)}{n_1}+C_2\sqrt{\frac{k_2\log(n_3)+\log(3/\delta)}{n_3}}
+\eps/4
\end{align*}
with probability at least $1-\delta$. 
Since $\er_P(\opt)\le \inf_{h\in\H}\er_P(h)+\eps/4$, we can conclude that 
\begin{align} \label{eq:bound2}
\er_P(\hat h)-\inf_{h\in\H}\er_P(h)
\le 
\eps/2
+C_1\frac{k_1\log(n_1)+\log(3/\delta)}{n_1}
+C_2\sqrt{\frac{k_2\log(n_3)+\log(3/\delta)}{n_3}}
\end{align}
with probability at least $1-\delta$. 

Since $k_1=O(d\log(n_1))$, by \cref{lem:log_ineq}, there exists 
\begin{align*}
n_1=\Theta\left(\frac{d\log^2(d/\eps)+\log(1/\delta)}{\eps}\right)
\end{align*}
such that $C_1\frac{k_1\log(n_1)+\log(3/\delta)}{n_1}\le\eps/4$. 
Since $n_2=O(\log(n_1^{k_1}/\delta)/\eps)$, we have
\begin{align*}
&\log(n_1)=\Theta(\log((d+\log(1/\delta))/\eps)),\\
&n_2=
O\left(\frac{d\log^2(n_1)+\log(1/\delta)}{\eps}\right)=
O\left(\frac{d\log^2((d+\log(1/\delta))/\eps)+\log(1/\delta)}{\eps}\right),\textup{ and}\\
&\log(n_2)=O\left(\log((d+\log(1/\delta))/\eps)\right).
\end{align*}
Since $k_2=O(d_N\log(n_2)\log(n_3))$, by \cref{lem:log_ineq}, there exists 
\begin{align*}
n_3=&\Theta\left(\frac{d_N\log(n_2)\log^2(d_N\log(n_2)/\eps)+\log(1/\delta)}{\eps^2}\right)
\\=&
\Theta\left(\frac{d_N}{\eps^2}
\log\left(\frac{d+\log(1/\delta)}{\eps}\right)
\log^2\left(\frac{d_N}{\eps}\log\left(\frac{d+\log(1/\delta)}{\eps}\right)\right)
+\frac{\log(1/\delta)}{\eps^2}\right)
\end{align*}
such that $C_2\sqrt{\frac{k_2\log(n_3)+\log(3/\delta)}{n_3}}\le\eps/4$. 

With the above settings of $n_1,n_2,n_3$, we have $\er_P(\hat h)\le \inf_{h\in\H}\er_P(h)+\eps$ with probability at least $1-\delta$ by \eqref{eq:bound2}. 
Noticing that all of our guarantees hold when $n_1,n_2,n_3$ taking values even larger than the above settings, 
there exists 
$n_{\eps,\delta}=\Theta(n_1+n_2+n_3)$ such that 
$$
\er_P(\learner(S,\H))\le \textstyle\inf_{h\in\H}\er_P(h)+\eps
$$ 
with probability at least $1-\delta$ for $S\sim P^n$ with $n\ge n_{\eps,\delta}$. 

Since $P$ is arbitrary, the agnostic sample complexity of $\learner$ is upper bounded by $n_{\eps,\delta}$ with
\begin{align} \notag
n_{\eps,\delta}=O\bigg(&\frac{d_N}{\eps^2}
\log\bigg(\frac{d+\log(1/\delta)}{\eps}\bigg)
\log^2\bigg(\frac{d_N}{\eps}\log\bigg(\frac{d+\log(1/\delta)}{\eps}\bigg)\bigg)
\\& \label{eq:ub_raw}
+\frac{d}{\eps}\log^2\bigg(\frac{d+\log(1/\delta)}{\eps}\bigg)
+\frac{\log(1/\delta)}{\eps^2}
\bigg).
\end{align}
By comparing different terms in the above bound and using the relation that $d_N\le d_{DS}$ as well as $\Omega(d_{DS})\le d\le O(d_{DS}^{3/2}\log^2(d_{DS}))$, we can simplify the bound in \eqref{eq:ub_raw} and show that 
\begin{align*}
n_{\eps,\delta}=O\bigg(&\frac{d_N\log^3(d_N/\eps)+\log(1/\delta)}{\eps^2}
+\frac{d\log^2(d/\eps)}{\eps}\bigg). 
\end{align*}
The upper bound in \eqref{eq:current_ub} can be obtained in a similar manner by plugging $d=\Theta(d_{DS}^{3/2}\log^2(d_{DS}))$ into the upper bound in \eqref{eq:main}. Since such derivations involve tedious calculations, we defer the simplification of those upper bounds to \cref{lem:simp} in \cref{sec:lemmas}. 
\end{proof}

\section*{Acknowledgments}

This project has received funding from the European Research Council (ERC) under the European
Union’s Horizon 2020 research and innovation program (grant agreements No.\ 882396 and No.\ 101078075). Views and opinions expressed are however those of the author(s) only
and do not necessarily reflect those of the European Union or the European Research Council. Neither
the European Union nor the granting authority can be held responsible for them. This work received
additional support from the Israel Science Foundation (ISF, grant numbers 2250/22, 993/17 and 3174/23), Tel Aviv University Center for AI and Data Science (TAD), and the Len Blavatnik and the Blavatnik Family Foundation. Shay Moran is a Robert J.\ Shillman Fellow; he acknowledges support by ISF grant 1225/20, by BSF grant 2018385, by Israel PBC-VATAT, by the Technion Center for Machine Learning and Intelligent Systems (MLIS), and by the the European Union (ERC, GENERALIZATION, 101039692). Views and opinions expressed are however those of the author(s) only and do not necessarily reflect those of the European Union or the European Research Council Executive Agency. Neither the European Union nor the granting authority can be held responsible for them.

\newpage
\appendix

\section{Local guarantee for list learning - an example}
\label{sec:example}
We illustrate here why, in the label-space reduction stage, it does \emph{not} suffice to learn a list satisfying the standard regret-type guarantee
\[
\P\bigl[Y \notin \text{list}(X)\bigr]
\;\le\; \er_P(h^\star) + \epsilon,
\]
where \(h^\star\) is the optimal hypothesis in \(\mathcal{H}\).
As discussed in the technical overview, the requirement we use is the more localized condition that
\[
h^\star(X)=Y \;\Longrightarrow\; h^\star(X)\in \text{list}(X)
\qquad\text{except with probability }O(\varepsilon)\text{ over }(X,Y)\sim P,
\]
which ensures that whenever \(h^\star\) predicts correctly, its prediction also appears in the list.

Consider the following example.
Let the instance space be \(\mathcal{X}=[0,1]\) and the label space be \([K]=\{0,1,\ldots,K-1\}\) for a large integer \(K\).
Let
\[
\mathcal{H}=\{h_i : h_i(x)=i,\; 0\le i\le K-1\}
\]
be the class of constant classifiers.
Define a distribution \(P\) by taking \(X\) uniformly distributed over \([0,1]\), and setting the conditional distribution of \(Y\) as
\[
Y \mid (X=x) \;:=\;
\begin{cases}
0, & x \le 1/3,\\[6pt]
\text{uniform over }\{1,\ldots,K-1\}, & x>1/3.
\end{cases}
\]
The optimal hypothesis is \(h_0\), whose error is
\(
\er_P(h_0)=\P[X>1/3]=2/3.
\)

Now take the list
\[
\text{list}(X)=\{1,\ldots,K-1\}
\quad\text{for all }X.
\]
Then
\[
\P[Y\notin\text{list}(X)]
=\P[X\le 1/3]
=\frac{1}{3}
\;\le\; \er_P(h^\star),
\]
so the usual regret-style condition is satisfied.

However, any classifier constrained to predict only from this list must misclassify every \(x\le 1/3\), contributing error \(1/3\).
On \(x>1/3\), the label is uniform over \(\{1,\ldots,K-1\}\), so any fixed prediction among these labels is correct with probability \(1/(K-1)\).
Thus the total error of any list-bounded classifier is at least
\[
\frac{1}{3}
+\frac{2}{3}\Bigl(1-\frac{1}{K-1}\Bigr)
=1-O\!\left(\frac{1}{K}\right),
\]
which exceeds the optimal error \(2/3\) by almost \(1/3\).

This example shows that the standard condition
\(\P[Y\notin\text{list}(X)] \le \operatorname{er}_P(h^\star)+\epsilon\)
is insufficient for the last stage of the algorithm: even if the list contains the true label with the optimal frequency, restricting predictions to the list can incur much larger error.
The stronger localized guarantee is therefore essential.

\section{Discussion on the realizable dimension} \label{sec:discuss_dim}

As we mentioned in \cref{sec:dim}, \eqref{eq:real_dim_bound0} follows directly from \cite[Theorem 1.9]{hanneke2024improved} by setting $\eps=\delta=r/2$ and noting that $r$ is a positive constant. 

To further understand the relationship between $\d_{\DS}(\cdot)$, $\d_\RE(\cdot,r)$, and $\d^{\det}_\RE(\cdot,r)$, we will introduce a deterministic multiclass learner called ``one-inclusion algorithm'' \cite{NIPS2006_a11ce019,daniely2014optimal,brukhim2022characterization} whose expected error rate can be upper bounded by the expected error rate of any multiclass learner up to constant factors. To define the one-inclusion algorithm, as its name indicates, we first define ``one-inclusion graph'' which is a hypergraph structure on a finite set of label sequences. 
\begin{definition}[One-inclusion graph, \cite{haussler:94}] \label{def:oig}
The \textbf{one-inclusion graph} (OIG) of $V\subseteq\Y^n$ for $n\in\N$ is a hypergraph $\G(H)=(V,E)$ where $V$ is the vertex-set and $E$ denotes the edge-set defined as follows. For any $i\in[n]$ and $f:[n]\backslash\{i\}\to\Y$, we define the set $e_{i,f}:=\{h\in V:h(j)=f(j),\ \forall j\in[n]\backslash\{i\}\}$.
Then, the edge-set is defined as
\begin{align*}
E:=\{(e_{i,f},i):i\in[n],f:[n]\backslash\{i\}\rightarrow\Y,e_{i,f}\neq\emptyset\}.
\end{align*}
For any $(e_{i,f},i)\in E$ and $h\in V$, we say $h\in(e_{i,f},i)$ if $h\in e_{i,f}$ and the size of the edge is $|(e_{i,f},i)|:=|e_{i,f}|$. 
The \textbf{direction} of $e=(e_{i,f},i)$ is defined as $\dir(e)=i$. 
\end{definition}
One-inclusion graph can be defined on the projection of a hypothesis class $\H$ on an input sequence $\bx\in\X^n$ with $n\in\N$, which gives $\G(\H|_{\bx})$ following our notation. 
With $\G(\H|_{\bx})$, a classifier can be constructed via ``orienting'' the edges in $\G(\H|_{\bx})$ to its vertices, which is the intuition of the one-inclusion algorithm. Formally, an orientation of a hypergraph is defined below. 
\begin{definition}[Orientation, \cite{daniely2014optimal}]
An \textbf{orientation} of a hypergraph $(V,E)$ is a mapping $\sigma:E\to V$ such that $\sigma(e)\in e$ for each edge $e\in E$. 
The \textbf{out-degree} of a vertex $v\in V$ is $\outdeg(v;\sigma):=|\{e\in E:v\in e\textup{ and }\sigma(e)\neq v\}|$ and the \textbf{maximum out-degree} of $\sigma$ is $\outdeg(\sigma):=\sup_{v\in V}\outdeg(v;\sigma)$. 
\end{definition}
Now, we are ready to present the one-inclusion algorithm. 

\begin{algorithm}[H]
\caption{The one-inclusion algorithm $\A_{\oig}(\bs,\H)$ \cite{daniely2014optimal,brukhim2022characterization}} \label{alg:oia}
\SetKwInput{KwInput}{Input}
\SetKwInput{KwOutput}{Output}
\KwInput{Hypothesis class $\H\subseteq\Y^\X$, $\H$-realizable sequence $\bs=((x_1,y_1),\dots,(x_n,y_n))\in\Z^n$ with $n\in\N$.}
\KwOutput{A classifier $\A_{\oig}(\bs,\H)=h_{\bs}\in {\Y}^{\X}$.}
Given $x\in\X$, the value of $h_{\bs}(x)\in\Y$ is calculated as follows:
\begin{enumerate}[label=\roman*.]
    \item Consider $\H|_{(x_1,\dots,x_n,x)}\subseteq\Y^{n+1}$;
    \item Find an orientation $\sigma$ of $\G(\H|_{(x_1,\dots,x_n,x)})$ that minimizes the maximum out-degree;
    \item Let $e\gets \{h\in \H|_{(x_1,\dots,x_n,x)}:h(i)=y_i,\ \forall i\in[n]\}$;
    \item Let $h_{\bs}(x)\gets \sigma((e,n+1))(n+1)$;
\end{enumerate}
\end{algorithm}
The expected error rate of $\A_{\oig}$ can be bounded via a complexity measure of $\H$ called the ``density'', which has been shown to characterize $\eps_\H^{\RE}$ and also been used to define $\d_{\DS}(\H)$ in \cite{daniely2014optimal}. 
Specifically, the density is defined via the ``maximal average degree'' of $\G(\cH|_{\bx})$. 
\begin{definition}[Degree and density, \cite{daniely2014optimal}] \label{def:deg}
For any hypergraph $G=(V,E)$ and $v\in V$, we define the 
\textbf{degree} of $v$ in $G$ to be $\deg(v;G):=|\{e\in E:v\in e,|e|\ge2\}|$.
If $|V|<\infty$, we can define the \textbf{average degree} of $G$ to be 
\begin{align*}
\avgdeg(G):=\tfrac{1}{|V|}\textstyle\sum_{v\in V}\deg(v;G)=\tfrac{1}{|V|}\textstyle\sum_{e\in E:|e|\ge 2}|e|.
\end{align*}
For general $V$, we can define the \textbf{maximal average degree} of $G$ to be
$$
\md(G):=\textstyle\sup_{U\subseteq V:|U|<\infty}\avgdeg(G[U]),
$$
where $G[U]=(U,E[U])$ denotes the induced hypergraph of $G$ on $U\subseteq V$ with $E[U]:=\{e\cap U:e\in E,e\cap U\neq \emptyset\}$.

The \textbf{density} of $\cH\subseteq\Y^\X$ is defined as $\mu_{\cH}(m):=\textstyle\sup_{\bx\in\X^m}\md(\G(\cH|_{\bx})),\ \forall m\in\N$. 
\end{definition}
It is worth pointing out that the original definition of the ``density'' of a one-inclusion graph $G=(V,E)$ in \cite{haussler:94} corresponds to the average out-degree of $G$, $\avgoutdeg(G):=\frac{1}{|V|}\sum_{e\in E}(|e|-1)$, which leads to $\mu_{\H}'(m):=\sup_{\bx\in\X^m,U\subseteq \H|_{\bx}:|U|<\infty}\avgoutdeg(\G(\H|_{\bx})[U])$, c.f. \cite{10353103}.
It can be observed that $\avgoutdeg(G)\le \avgdeg(G)\le2\avgoutdeg(G)$ and hence $\mu'_{\H}\le \mu_{\H}\le 2\mu'_{\H}$. Since the two definitions of density differ only by a factor of 2, either will work for our purpose. 
Here, we follow \cite{daniely2014optimal} to name $\mu_\H$ density, where the DS dimension is first proposed and defined as $\d_{\DS}(\H)=\sup\{m\in\N\cup\{0\}:\mu_\H(m)=m\}$. 
It is not hard to see that this original definition is equivalent to the one in \cref{def:DS} via pseudo-cube. 

Now, we are ready to prove \eqref{eq:real_dim_bound}. 
\begin{proof}[Proof of \eqref{eq:real_dim_bound}]
Note that $\A_\oig$ in \cref{alg:oia} is a deterministic algorithm.  
By \cite[Lemma A.1]{10353103} and \cite[Proposition 15]{brukhim2022characterization}, we have $\eps_{\A_{\oig},\H}^\RE(n)\le \lceil\mu_\H(n+1)\rceil/(n+1)$, which implies that 
$$
\d_{\RE}(\H,r)\le \d_{\RE}(\A_{\oig},\H,r)\le \inf\{n\in\N:n+1\ge \lceil\mu_\H(n+1)\rceil/r\}.
$$
By \cite[Theorem 6]{10.1145/3564246.3585190}, we have $\eps_\H^\RE(n)\ge\mu_\H(n)/2en$, which implies that
\begin{align} \label{eq:real_lb}
\d_\RE(\H,r)\ge \inf\{n\in\N:n\ge \mu_{\H}(n)/2er\}
\end{align}
and $\eps_\H^\RE(n)\ge(\lceil\mu_\H(n)\rceil-1)/2en\ge (\eps^\RE_{\A_{\oig},\H}(n-1)-1/n)/2e$. It follows that
\begin{equation} \label{eq:oia}
\begin{aligned}
\begin{cases}
&\eps^\RE_{\A_{\oig},\H}(n)\le 2e\eps_\H^\RE(n+1)+1/(n+1) 
\textup{ and }
\\&
\d_\RE(\H,\A_\oig,r)\le \max\{\d_{\RE}(\H,r/3e), 3/r\}-1.
\end{cases}
\end{aligned}
\end{equation}
Now, \eqref{eq:real_dim_bound} follows from \eqref{eq:real_dim_bound0} and the fact that $\d_{\RE}(\H,r)\le\d^{\det}_{\RE}(\H,r)$. 
\end{proof}

\section{Boosting and sample compression scheme} \label{sec:compression}
\begin{lemma} \label{lem:selection_margin}
Suppose that there exists a deterministic multiclass learner $\A$ such that for some constant $\gamma\in(0,1/2)$, there exists an integer $m_\gamma\in\N$ satisfying that for any distribution $Q$ over $\bs=((x_1,y_1),\dots,(x_n,y_n))\in(\X\times\Y)^n$, there exists some sequence $\bt\in \bs^{m_\gamma}$ with $\er_Q(\A(\bt))\le\gamma$. 

Then, for any $\eps\in(0,1/2\gamma-1)$ and any integer $a\ge\ell_\eps:=\lceil3\log(n+1)/\eps^2\gamma\rceil$, there exists a sample compression scheme $(\kappa,\rho)$ on $\bs$ of size $m_\gamma a$ such that $\kappa$ selects $(\bt^{(1)},\dots,\bt^{(a)})\in\bs^{m_\gamma a}$ with $\bt^{(j)}\in\bs^{m_{\gamma}}$, $\forall j\in[a]$, and $\rho$ outputs a classifier $\hat h=\rho(\bt^{(1)},\dots,\bt^{(a)}):=\Maj(\A(\bt^{(1)}),\dots,\A(\bt^{(a)}))$ with $\hat h(x)=y$ for all $(x,y)\in \bs$. 
\end{lemma}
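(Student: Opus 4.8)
The plan is to prove the lemma by a boosting argument: we run $\alpha$-Boost (AdaBoost with a \emph{fixed} step size) on the sample $\bs$, which is exactly the device that converts a weak realizable learner into a sample compression scheme (following \cite[Section C]{NIPS2016_59f51fd6}). Fix an integer $a\ge\ell_\eps$ and a step size $\alpha>0$ to be chosen at the end. Maintain weights $w_1,\dots,w_{a+1}$ on the $n$ entries of $\bs$, with $w_1(i)=1$ for all $i\in[n]$, and write $W_t:=\sum_{i\in[n]}w_t(i)$ and $Q_t(i):=w_t(i)/W_t$. In round $t$, the hypothesis ``$\er_Q(\A(\bt))\le\gamma$ for some $\bt\in\bs^{m_\gamma}$'', applied to $Q=Q_t$, lets us select $\bt^{(t)}\in\bs^{m_\gamma}$ with $\er_{Q_t}(h_t)\le\gamma$, where $h_t:=\A(\bt^{(t)})$; then set $w_{t+1}(i)=w_t(i)\exp(\alpha\,\ind\{h_t(x_i)\neq y_i\})$. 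This defines the selection map $\kappa(\bs,\H):=(\bt^{(1)},\dots,\bt^{(a)})\in\bs^{m_\gamma a}$ (breaking ties among admissible $\bt$'s by a fixed rule), and the reconstruction map $\rho$ which, on a sequence of length $m_\gamma a$, splits it into $a$ consecutive blocks of length $m_\gamma$, applies $\A$ to each, and outputs the \emph{unweighted} majority vote. Since $\A$ is deterministic, $\rho(\kappa(\bs,\H),\H)=\Maj(h_1,\dots,h_a)=:\hat h$, and the selection size is $m_\gamma a$; so it remains only to show $\hat h(x)=y$ for every $(x,y)\in\bs$ once $a\ge\ell_\eps$.

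This follows from the standard two-sided estimate of $W_{a+1}$. On one hand, $\er_{Q_t}(h_t)\le\gamma$ and $e^{\alpha}-1>0$ give
\[
W_{t+1}=W_t\bigl(1+(e^{\alpha}-1)\,\er_{Q_t}(h_t)\bigr)\le W_t\bigl(1+(e^{\alpha}-1)\gamma\bigr),
\]
so $W_{a+1}\le n\,\exp\!\bigl(a(e^{\alpha}-1)\gamma\bigr)$. On the other hand $w_{a+1}(i)=\exp(\alpha M_i)$, where $M_i:=|\{t\in[a]:h_t(x_i)\neq y_i\}|$, hence $W_{a+1}\ge\exp\!\bigl(\alpha\max_i M_i\bigr)$. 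If $\hat h(x_i)\neq y_i$ for some $i$, then $y_i$ is not the plurality value among $h_1(x_i),\dots,h_a(x_i)$, so $M_i\ge a/2$; combining the two displays and taking logarithms gives
\[
a\Bigl(\tfrac{\alpha}{2}-(e^{\alpha}-1)\gamma\Bigr)\le\log n<\log(n+1).
\]

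It therefore suffices to choose $\alpha$ so that $\tfrac{\alpha}{2}-(e^{\alpha}-1)\gamma\ge\tfrac{\eps^2\gamma}{3}$: then for $a\ge\ell_\eps\ge 3\log(n+1)/(\eps^2\gamma)$ the left-hand side above is at least $\log(n+1)$, a contradiction, so $\hat h$ is consistent with all of $\bs$. The map $\alpha\mapsto\tfrac{\alpha}{2}-(e^{\alpha}-1)\gamma$ is maximized at $\alpha^\star:=\ln\tfrac{1}{2\gamma}>0$ (positivity uses $\gamma<\tfrac12$), and substituting $\alpha^\star$ reduces the desired inequality to an elementary one-variable inequality in $\gamma,\eps$ holding on $0<\eps<\tfrac1{2\gamma}-1$; equivalently one may take $\alpha$ a small multiple of $\eps$ and expand $e^{\alpha}-1$ to second order, using $\gamma(1+\eps)<\tfrac12$. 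This quantitative calibration of the per-round progress to $\Theta(\eps^2\gamma)$---which is what pins down the bound $\ell_\eps$ on the number of rounds, and hence the compression size $m_\gamma\ell_\eps$---is the only real obstacle; the remaining pieces (well-definedness of $\rho$ from determinism of $\A$, and the bookkeeping $\hat h(x_i)\neq y_i\Rightarrow M_i\ge a/2$, which is insensitive to how ties in $\Maj$ are broken) are routine.
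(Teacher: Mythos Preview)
Your argument is correct and takes a genuinely different route from the paper. The paper applies von Neumann's minimax theorem to obtain a distribution $\D_*$ over $\bs^{m_\gamma}$ with $\max_{(x,y)\in\bs}\E_{T\sim\D_*}[\ind\{\A(T)(x)\neq y\}]\le\gamma$, draws $a$ i.i.d.\ samples from $\D_*$, applies multiplicative Chernoff plus a union bound over the $n$ points of $\bs$, and extracts deterministic $\bt^{(j)}$'s by the probabilistic method. Your $\alpha$-Boost argument is fully constructive (no minimax, no probabilistic method) and is in fact closer to the approach in the reference the paper cites. The two are close cousins: your potential bound $W_{a+1}\le n\exp\!\bigl(a(e^\alpha-1)\gamma\bigr)$ is the Chernoff moment-generating-function computation run adaptively.

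One caveat concerns your last step. The asserted ``elementary'' inequality $\sup_{\alpha>0}\bigl[\tfrac{\alpha}{2}-(e^\alpha-1)\gamma\bigr]\ge\tfrac{\eps^2\gamma}{3}$ does \emph{not} hold on the full range $0<\eps<\tfrac{1}{2\gamma}-1$. At the optimum $\alpha^\star=\ln\tfrac{1}{2\gamma}$ the left side equals $\tfrac{1}{2}\ln\tfrac{1}{2\gamma}-\tfrac{1}{2}+\gamma$, and for $\gamma=\tfrac{1}{6}$ with $\eps\uparrow 2$ this gives $\approx 0.216<0.222\approx\tfrac{\eps^2\gamma}{3}$; your second-order alternative fails for the same reason, since large $\eps$ forces large $\alpha$. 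The paper's proof has the identical defect: the multiplicative Chernoff bound with exponent $\eps^2\gamma a/3$ is only valid for $\eps$ up to roughly $1.8$, not for all $\eps\in(0,\tfrac{1}{2\gamma}-1)$. Both arguments are sound in the regime actually used downstream (where $\gamma=\tfrac{1}{3}$, hence $\eps<\tfrac{1}{2}$), but neither proves the lemma in the full generality stated. You should either verify the inequality for the parameter range you need, or note the restriction explicitly.
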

\begin{proof}
For any discrete set $S$, let $\scr P(S)$ denote the set of all distributions over $S$. 
Let $m:=m_\gamma$ and $l:\Y^\X\times(\X\times\Y)\to\{0,1\}$, $(h,(x,y))\mapsto\ind\{h(x)\neq y\}$ be the zero-one loss. It follows from the assumption of the lemma that 
\begin{align*}
\max_{Q\in\scr P(\bs)}\min_{\bt\in \bs^{m}}\E_{(X,Y)\sim Q}[l(\A(\bt),(X,Y))]\le \gamma.
\end{align*}
By von Neumann's minimax theorem \cite{Neumann1928}, we have that
\begin{align*}
\min_{\D\in\scr P(\bs^m)}\max_{(x,y)\in \bs}\E_{T\sim \D}[l(\A(T),(x,y))]=
\max_{Q\in\scr P(\bs)}\min_{\bt\in \bs^m}\E_{(X,Y)\sim Q}[l(\A(\bt),(X,Y))]\le \gamma.
\end{align*}
Let $\D_*\in\scr P(\bs^m)$ minimize $\max_{(x,y)\in \bs}\E_{T\sim \D}[l(\A(T),(x,y))]$ over $\D\in\scr P(\bs^m)$. Then, we have
\begin{align*}
\max_{(x,y)\in\bs}\E_{T\sim\D_*}[l(\A(T),(x,y))]\le \gamma.
\end{align*}
For $a\in\N$, sample $(T^1,\dots,T^a)\sim \D^a$. By the multiplicative Chernoff bound, we have
\begin{align*}
\P\bigg(\sum_{b=1}^a l(\A(T^b),(x,y))\ge (1+\eps)\gamma a\bigg)\le e^{-\eps^2\gamma a/3} 
\end{align*}
for any $(x,y)\in \bs$. It follows that
\begin{align*}
\P\bigg(\frac{1}{a}\sum_{b=1}^a l(\A(T^b),(x,y))\le (1+\eps)\gamma,\ \forall (x,y)\in \bs\bigg)\ge 1-ne^{-\eps^2\gamma a/3}.
\end{align*}
For $a\ge\ell_\eps=\lceil3\log(n+1)/\eps^2\gamma\rceil$, we have $1-ne^{-\eps^2\gamma a/3}\ge1/(n+1)$ and by the probabilistic method, there exist $\bt^{(1)},\dots,\bt^{(\ell_\eps)}\in\bs^m$ such that
\begin{align*}
\frac{1}{a}\sum_{b=1}^{a} \ind\{\A(\bt^{(b)})(x)\neq y\}=\frac{1}{a}\sum_{b=1}^{a} l(\A(\bt^{(b)}),(x,y))\le (1+\eps)\gamma<\frac{1}{2},\ \forall (x,y)\in \bs.
\end{align*}
Then, for $\hat h:=\Maj(\A(\bt^{(1)}),\dots,\A(\bt^{(a)}))$, we have $\hat h(x)=y$ for all $(x,y)\in \bs$. 

Now, we can define the selection scheme $(\kappa,\rho)$ as follows. The selection map $\kappa$ selects the sequence $(\bt^1,\dots,\bt^{a})\in\bs^{m_\gamma a}$ of size $m_\gamma a$ from the input sequence $\bs$. 
The reconstruction map $\rho$ outputs  
$$
\rho((\bt^{(1)},\dots,\bt^{(a)})):=\hat h=\Maj(\A(\bt^{(1)}),\dots,\A(\bt^{(a)}))
$$ 
which satisfies $\hat h(x)=y$ for all $(x,y)\in \bs$. 
Thus, $(\kappa,\rho)$ is a sample compression scheme on $\bs$ of size $k=m_\gamma a$. 
\end{proof}

Summarizing the above result, we can construct the sample compression scheme via the following algorithm. 

\begin{algorithm}[H]
\caption{Sample compression scheme via a realizable learner $\mathsf{SCSR}(\H,\A)$} \label{alg:sample_comp_real}
\SetKwInput{KwInput}{Input}
\SetKwInput{KwOutput}{Output}
\KwInput{Hypothesis class $\H\subseteq\Y^\X$, deterministic multiclass learner $\A$ with $\d_{\RE}(\A,\H,\gamma)<\infty$ with $\gamma\in(0,1/2)$.}
\KwOutput{A sample compression scheme $(\kappa,\rho)$ for $\H$ of size $k(n)$.}
Let $d\gets \d_\RE(\A,\H,\gamma)$ and $k(n):=\lceil3.01\log(n+1)/(1/2\gamma-1)^2\gamma\rceil d$ for all $n\in\N$\;
Given $\bt=(z_1,\dots,z_n)\in\Z^n$ with $n\in\N$, $\rho(\bt)\in\Y^\X$ is calculated as follows:
\begin{enumerate}[label=\roman*.]
    \item Let $a\gets \lfloor n/d\rfloor$;
    \item If $a<1$: let $\rho(\bt)\gets h^\dagger$ for arbitrary fixed $h^\dagger\in\Y^\X$ selected a priori;

    Else: let $\bt^{(i)}\gets (z_{(i-1)d+1},\dots,z_{id})$ for each $i\in[a]$ and 
    $$\rho(\bt)\gets \Maj(\A(\bt^{(1)},\H),\dots,\A(\bt^{(a)},\H));
    $$
\end{enumerate}
Given $\bs\in\Z^n$ with $n\in\N$, $\kappa(\bs)$ is calculated as follows:
\begin{enumerate}[label=\roman*.]
    \item Let $\tilde h\in\argmax_{h\in\H}|\bs(h)|$ and $\tilde \bs\gets \bs(\tilde h)$;
    \item Find $\bt\in\tilde\bs^{k(n)}$ such that $\rho(\bt)$ realizes $\tilde\bs$ and let $\kappa(\bs)\gets \bt$. 
\end{enumerate}
\end{algorithm}
An immediate consequence of \cref{lem:selection_margin} provides the guarantee of \cref{alg:sample_comp_real}. 
\begin{corollary} \label{coro:selection_margin}
Let $\A$ be a deterministic multiclass learner such that $\d_\RE(\A,\H,\gamma)\in\N$ for some $\H\subseteq\Y^\X$ and constant $\gamma\in(0,1/2)$. 
Then, $\Ascr=\mathsf{SCSR}(\H,\A)$ defined by \cref{alg:sample_comp_real} is a sample compression scheme for $\H$ of size $\size_{\Ascr}(n):=\lceil3.01\log(n+1)/(1/2\gamma-1)^2\gamma\rceil \d_{\RE}(\A,\H,\gamma)$ for $n\in\N$. 
\end{corollary}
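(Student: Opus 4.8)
The plan is to read \cref{coro:selection_margin} off \cref{lem:selection_margin} by instantiating $m_\gamma := d := \d_\RE(\A,\H,\gamma)$, so the first step is to verify the hypothesis of that lemma. Since $d=\inf\{n\in\N:\eps_{\A,\H}^\RE(n)\le\gamma\}$ is assumed to be a (finite) natural number, this set is a nonempty subset of $\N$ and hence has a least element, so $\eps_{\A,\H}^\RE(d)\le\gamma$; because $\A$ is deterministic, this says $\E_{S\sim P^d}\!\left[\er_P(\A(S,\H))\right]\le\gamma$ for every $P\in\RE(\H)$. Now fix any sequence $\tilde\bs$ realizable by some $h\in\H$ (this is the role played by $\bs(\tilde h)$ inside \cref{alg:sample_comp_real}) and any distribution $Q$ supported on the points of $\tilde\bs$. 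Then $\er_Q(h)=0$, so $Q\in\RE(\H)$, whence $\E_{T\sim Q^d}\!\left[\er_Q(\A(T,\H))\right]\le\gamma$, and by the probabilistic (averaging) method there is a realization $\bt\in\tilde\bs^{\,d}$ with $\er_Q(\A(\bt,\H))\le\gamma$. This is precisely the assumption of \cref{lem:selection_margin} for the sequence $\tilde\bs$, with $m_\gamma=d$.

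The second step is to fix the parameter $\eps$ of \cref{lem:selection_margin}. Take $\eps:=(1/2\gamma-1)\sqrt{3/3.01}$, which lies in $(0,\,1/2\gamma-1)$ since $\gamma<1/2$; then $3/(\eps^2\gamma)=3.01/\bigl((1/2\gamma-1)^2\gamma\bigr)$. For an input $\bs$ of length $n$, \cref{alg:sample_comp_real} uses $a:=\lfloor k(n)/d\rfloor=\lceil 3.01\log(n+1)/((1/2\gamma-1)^2\gamma)\rceil$, which is an integer because $k(n)$ equals that ceiling times $d$. Since the sub-sequence $\tilde\bs=\bs(\tilde h)$ has $|\tilde\bs|\le n$ and both $\ell\mapsto\log(\ell+1)$ and $t\mapsto\lceil t\rceil$ are nondecreasing, we get $a\ge\lceil 3\log(|\tilde\bs|+1)/(\eps^2\gamma)\rceil=\ell_\eps$ (the instance of $\ell_\eps$ for $\tilde\bs$). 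Hence \cref{lem:selection_margin}, applied to $\tilde\bs$ with this $a$, yields a sequence $\bt=(\bt^{(1)},\dots,\bt^{(a)})\in\tilde\bs^{\,da}$ with each $\bt^{(j)}\in\tilde\bs^{\,d}$ for which $\rho(\bt)=\Maj(\A(\bt^{(1)},\H),\dots,\A(\bt^{(a)},\H))$ realizes every point of $\tilde\bs$. In particular the search in step~ii of the definition of $\kappa$ in \cref{alg:sample_comp_real} always succeeds, $|\kappa(\bs)|=da=k(n)$, and since $k$ is nondecreasing this gives the stated $\size_\Ascr(n)=\lceil 3.01\log(n+1)/((1/2\gamma-1)^2\gamma)\rceil\,d$.

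Finally, I would verify the defining inequality of a sample compression scheme for $\H$ under the zero-one loss (cf.\ \cref{def:compression}): $\er_\bs(\Ascr(\bs))\le\inf_{h\in\H}\er_\bs(h)$ for every $\bs\in\Z^n$. With $\tilde h\in\argmax_{h\in\H}|\bs(h)|$ and $\tilde\bs=\bs(\tilde h)$ we have $\inf_{h\in\H}\er_\bs(h)=1-|\tilde\bs|/n$, while $\Ascr(\bs)=\rho(\kappa(\bs))$ realizes $\tilde\bs$ by the previous step and is therefore correct on at least the $|\tilde\bs|$ positions of $\tilde\bs$, so $\er_\bs(\Ascr(\bs))\le 1-|\tilde\bs|/n$; comparing the two yields the inequality. (The degenerate cases $n=0$ or $\tilde\bs=\emptyset$---equivalently $\inf_{h\in\H}\er_\bs(h)=1$---are vacuous, with $\rho$ returning the preselected $h^\dagger$.)

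There is no real obstacle: the corollary is, by design, a bookkeeping reformulation of \cref{lem:selection_margin}. The only places requiring care are cosmetic---matching the constant $3$ in $\ell_\eps$ against the $3.01$ hard-wired into $k(n)$ (which is exactly what leaves room to pick the $\eps$ above strictly below $1/2\gamma-1$), using $|\tilde\bs|\le n$ so the relevant $\ell_\eps$ is still governed by $\log(n+1)$, and disposing of the degenerate inputs just mentioned.
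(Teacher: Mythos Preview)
Your proof is correct and follows essentially the same approach as the paper's: both verify the hypothesis of \cref{lem:selection_margin} via the probabilistic argument applied to $\d_\RE(\A,\H,\gamma)$ samples from a realizable distribution, and both take the same $\eps=\sqrt{3/3.01}\,(1/2\gamma-1)$. You are simply more explicit about the bookkeeping---checking $a\ge\ell_\eps$ via $|\tilde\bs|\le n$, spelling out the final $\er_\bs$ inequality, and handling degenerate inputs---whereas the paper compresses these into a sentence.
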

\begin{proof}
Let $\bs\in\Z^n$ be an $\H$-realizable sequence.  
By assumption, for $m:=\d_{\RE}(\A,\H,\gamma)$ and any distribution $Q$ over $\bs$, we have 
$$
\E_{T\sim Q^m}[\er_Q(\A(T))]\le\gamma.
$$
By the probabilistic argument, there exists a sequence $\bt\in\bs^m$ such that $\er_Q(\A(\bt,\H))\le \gamma$. Now, we can apply \cref{lem:selection_margin} with $\eps=\sqrt{3/3.01}(1/2\gamma-1)\in(0,1/2\gamma-1)$ to show that $\er_{\bs}\Ascr(\bs,\H)\le \min_{h\in\H}\er_{\bs}(h)$ for all $\bs\in\Z^n$, $n\in\N$. 
\end{proof}
\cref{prop:sample_comp_real} follows directly from the above corollary. 
\begin{proof}[Proof of \cref{prop:sample_comp_real}]
By \eqref{eq:oia}, we have $\d_{\RE}(\A_{\oig},\H,1/3)=O(d)<\infty$. Then, \cref{prop:sample_comp_real} follows directly from \cref{coro:selection_margin}. 
\end{proof}

\section{Label space reduction via multiplicative weights} \label{sec:MW_lemma}

\begin{lemma}
    \label{lem:mw-generic}
    Assume the Multiplicative Weights (see \cref{alg:mul_weights}) algorithm is run for $T$ rounds over the $d$-dimensional simplex $\Delta_d = \{(p_1,\ldots,p_d) \mid \sum_{i=1}^d p_i = 1, p_i \ge 0 \quad \forall i \in [d] \}$ with rewards $r_1,\ldots, r_T \in [0,1]^d$ and step size $\eta \in (0,1]$, and generates a sequence of distributions $p_1,\ldots,p_T \in \Delta_d$. Then for every $j \in [d]$ it holds that
    \begin{align*}
        \sum_{t=1}^T \sum_{i=1}^d p_{t,i} r_t(i) \ge \frac{1}{1+\eta} \sum_{t=1}^T r_t(j) - \frac{1}{\eta(1+\eta)} \log d.
    \end{align*}
\end{lemma}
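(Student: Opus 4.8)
The plan is to run the classical potential-function argument for the exponential-weights update, taking care to use an elementary bound on $e^{\eta x}$ that is sharp enough to yield exactly the multiplicative factor $\tfrac{1}{1+\eta}$ (the cruder convexity bound only gives the looser $1-\eta$ form and a worse additive term). Set $w_{1,i}=1$ and $w_{t+1,i}=w_{t,i}e^{\eta r_t(i)}$ as in \cref{alg:mul_weights}, and let $\Phi_t:=\sum_{i=1}^d w_{t,i}$, so that $p_{t,i}=w_{t,i}/\Phi_t$ and $\Phi_1=d$. The proof then splits into two halves. For a \emph{lower bound on $\Phi_{T+1}$}, keep only the single coordinate $j$: $\Phi_{T+1}\ge w_{T+1,j}=\exp\!\big(\eta\sum_{t=1}^T r_t(j)\big)$. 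For an \emph{upper bound on $\Phi_{T+1}$}, establish the per-round estimate $\Phi_{t+1}\le \Phi_t\exp\!\big(\eta(1+\eta)\sum_{i=1}^d p_{t,i}r_t(i)\big)$ and telescope to get $\Phi_{T+1}\le d\exp\!\big(\eta(1+\eta)\sum_{t=1}^T\sum_{i=1}^d p_{t,i}r_t(i)\big)$. Combining the two bounds, taking logarithms, and dividing by $\eta(1+\eta)>0$ yields the claimed inequality after rearranging.

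The one place that needs an actual idea is the per-round step. For this I would invoke the elementary inequality $e^{z}\le 1+z+z^2$, which holds for $z\in[0,1]$, applied with $z=\eta r_t(i)\in[0,\eta]\subseteq[0,1]$ — this is exactly where the hypotheses $\eta\in(0,1]$ and $r_t\in[0,1]^d$ are used — and then bounding $r_t(i)^2\le r_t(i)$ to obtain $e^{\eta r_t(i)}\le 1+\eta r_t(i)+\eta^2 r_t(i)^2\le 1+\eta(1+\eta)r_t(i)$. Summing this against the weights $w_{t,i}$ and using $1+u\le e^u$ gives $\Phi_{t+1}=\sum_{i=1}^d w_{t,i}e^{\eta r_t(i)}\le\Phi_t\big(1+\eta(1+\eta)\sum_{i=1}^d p_{t,i}r_t(i)\big)\le\Phi_t\exp\!\big(\eta(1+\eta)\sum_{i=1}^d p_{t,i}r_t(i)\big)$. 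I would emphasize that the naive bound $e^{\eta x}\le 1+(e^\eta-1)x$ (pure convexity of $x\mapsto e^{\eta x}$ on $[0,1]$) would only produce an additive penalty of $\tfrac{\log d}{e^\eta-1}$, which is larger than the claimed $\tfrac{\log d}{\eta(1+\eta)}$ precisely because $e^\eta-1\le\eta+\eta^2$ for $\eta\in(0,1]$; so the quadratic-remainder version is what pins down the stated constants.

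I do not expect a genuine obstacle: once the per-round inequality is in hand, the rest is bookkeeping — telescoping a product of per-round factors, one application of $\log$, and an algebraic rearrangement. The only subtlety worth flagging is to keep the argument of every exponential inequality inside $[0,1]$ (guaranteed by $\eta\le 1$ and $r_t\in[0,1]^d$), and to make sure the logarithm used at the end is natural, matching the base in the weight update, so that the factors of $\eta$ cancel cleanly to leave $\tfrac{1}{1+\eta}$ and $\tfrac{1}{\eta(1+\eta)}$.
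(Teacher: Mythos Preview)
Your proposal is correct and follows essentially the same approach as the paper: define the potential $W_t=\sum_i w_{t,i}$, upper-bound $\log(W_{T+1}/W_1)$ per round via $e^{\eta r_t(i)}\le 1+\eta r_t(i)+\eta^2 r_t(i)^2\le 1+\eta(1+\eta)r_t(i)$ (using $\eta,r_t(i)\in[0,1]$), lower-bound it by keeping only coordinate $j$, and divide through by $\eta(1+\eta)$. The only difference is cosmetic (you write the per-round step with an extra $1+u\le e^u$ before telescoping, whereas the paper sums the logs directly), and your remark contrasting with the naive convexity bound is a nice addition not present in the paper.
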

\begin{proof}
    Define $W_t:=\sum_{i=1}^d w_{t,i}$ for each $t\in[T+1]$. Since $0\le r_t(\cdot),\eta\le 1$ and $p_t \in \Delta_d$, we have
    \begin{align*}
    \log\frac{W_{T+1}}{W_1}=\sum_{t=1}^T\log\frac{W_{t+1}}{W_t}
    =\sum_{t=1}^T\log\sum_{i=1}^d \frac{w_{t,i}e^{\eta r_t(i)}}{W_t}
    =&\sum_{t=1}^T\log\sum_{i=1}^d p_{t,i}e^{\eta r_t(i)}
    \\ \le &
    \sum_{t=1}^T\log\sum_{i=1}^d p_{t,i}\left(1+\eta r_t(i)+\eta^2 r_t(i)^2\right)
    \\ \le &
    \sum_{t=1}^T\eta\left(\sum_{i=1}^d p_{t,i} r_t(h)+\eta\sum_{i=1}^d p_{t,i} r_t(i)^2\right)
    \\ \le &
    \eta(1+\eta)\sum_{t=1}^T\sum_{i=1}^d p_{t,i} r_t(i).
    \end{align*}
    Since $w_{T+1,i}=e^{\eta\sum_{t=1}^Tr_t(i)}$ for each $i \in [d]$, for arbitrary $j \in [d]$, we also have
    \begin{align*}
    \log\frac{W_{T+1}}{W_1}=\log\sum_{i=1}^d e^{\eta\sum_{t=1}^Tr_t(i)}-\log d
    \ge \eta\sum_{t=1}^Tr_t(j)-\log d,
    \end{align*}
    which rearranges to the desired bound after dividing through by $\eta>0$.
\end{proof}

\section{Partial hypothesis class and sample compression} \label{sec:partial_sample_compression}
We first define partial hypotheses for the multiclass setting considered in this paper. 
\begin{definition}[Partial hypothesis \cite{partial_concept}]
Given a label space $\Y$, define $\wt\Y:=\Y\cup\{\bigstar\}$ to be the extended label space where ``$\bigstar$'' represents an undefined value. A function $h:\X\to\wt\Y$ is called a \textbf{partial hypothesis}. If $h(x)=\bigstar$ for some $x\in\X$, then $h$ is ``undefined'' on $x$. The \textbf{support} of $h$ is defined to be $\supp(h)=\{x\in\X:h(x)\in\Y\}$. 
A set $\wt\H\subseteq\wt\Y^\X$ of partial hypotheses is called a partial hypothesis class. 
\end{definition}

Consider a $p$-menu $\mu:\X\to\power^{\Y}$ for some $p\in\N$ and a concept class $\H\subseteq \Y^\X$ of Natarajan dimension $d_N\in\N$. 
For any $h:\X\to\Y$, we can modify it to be a partial concept
\begin{align*}
h_\mu(x):=\begin{cases}
h(x),\textup{ if }h(x)\in\mu(x),\\
\bigstar,\textup{ otherwise},
\end{cases}
\forall x\in\X,
\end{align*}
and consequently, define the partial concept class $\H_\mu:=\{h_\mu:h\in\H\}\subseteq\wt\Y^\X$ based on $\H$. 


For any sequence $\bt\in(\X\times\Y)^*$, we use $\bt_\X$ to denote the set of distinct $\X$-elements appeared in the sequence $\bt$. 
For any classifier $h\in\wt\Y^\X$ and set $E\subseteq \X$, we can define its restriction to $E$ as
\begin{align*}
h[E]:E\to \wt\Y,\ x\mapsto h(x).
\end{align*}
Then, we can define the total concept class below which certainly could be empty,
\begin{align*}
\H_{\mu,E}:=&
\{h[E]:h\in\H_{\mu}\textup{ such that }E\subseteq\supp(h)\}
\\=&
\{h[E]:h\in\H\textup{ such that }h(x)\in \mu(x)\textup{ for all }x\in E\}
\subseteq\Y^{E}.
\end{align*}
It follows from the assumptions on $\H$ and $\mu$ that $\d_\n(\H_{\mu,E})\le \d_\n(\H)=d_N$ and $|\H_{\mu,E}|_{x}|\le p$ for all $x\in E$ and for any $E\subseteq\X$.

For any sequence $\bt=((x_1,y_1),\dots,(x_{m},y_{m}))\in(\X\times\Y)^m$ of size $m\in\N$ and any pair $(x_{m+1},y_{m+1})\in\X\times\Y$, we define $\bt':=((x_1,y_1),\dots,(x_{m+1},y_{m+1}))$ and
$\bt_\X(x_{m+1}):=\bt'_\X=\bt_\X\cup\{x_{m+1}\}$. 
Then, choosing an arbitrary $y^\dagger\in\Y$ a priori, we can construct an algorithm $\A(\cdot,\H_{\mu}):(\X\times\Y)^*\to\Y^\X$ using $\H_\mu$ by defining for training sequence $\bt=((x_1,y_1),\dots,(x_{m},y_{m}))$ and test point $x_{m+1}$, 
\begin{align*}
\A(\bt,\H_{\mu})(x_{m+1}):=
\begin{cases}
\A_{\oig}(\bt,\H_{\mu,\bt_\X(x_{m+1})})(x_{m+1}),&
\textup{if $\bt$ is $\H_{\mu,\bt_\X(x_{m+1})}$-realizable},\\
y^\dagger,&\textup{otherwise.}
\end{cases}
\end{align*}
Note that $\bt$ is $\H_{\mu,\bt_\X(x_{m+1})}$-realizable if and only if there exists $h\in\H$ such that $h(x_i)=y_i\in\mu(x_i)\ \forall i\in[m]$ and $h(x_{m+1})\in\mu(x_{m+1})$. 
It is not hard to see that algorithmically, $\A$ can be defined without using partial hypotheses. The reason for routing through $\H_\mu$ is to facilitate the theoretical analysis of the selection scheme defined later. 
Equivalently, we can also write $\A$ as the following algorithm $\mathsf{LL}$ which does not involve partial hypotheses. 

\begin{algorithm}[H]
\caption{Learning from list $\mathsf{LL}(\bs,\H,\mu)$} \label{alg:from_list}
\SetKwInput{KwInput}{Input}
\SetKwInput{KwOutput}{Output}
\KwInput{Sequence $\bs=((x_1,y_1),\dots,(x_n,y_n))\in\Z^n$ with $n\in\N$, hypothesis class $\H\subseteq\Y^\X$, menu $\mu:\X\to\power^{\Y}$.}
\KwOutput{A classifier $\hat h\in\Y^\X$.}
Let $\A_{\oig}$ denote the one-inclusion algorithm defined in \cref{alg:oia}\;
Given $x\in\X$, the value of $\hat h(x)\in\Y$ is calculated as follows:
\begin{enumerate}[label=\roman*.]
    \item Let $\H(x,\mu)\gets\{h\in\H:h(x)\in\mu(x)\textup{ and }h(x_i)\in\mu(x_i),\ \forall i\in[n]\}$\;
    \item If $\H(x,\mu)=\emptyset$: let $\hat h(x)\gets y^\dagger$ for arbitrary fixed $y^\dagger\in\Y$ selected a priori;
    
    Else: let $\hat h(x)\gets \A_{\oig}(\bs,\H(x,\mu))(x)$;
\end{enumerate}
\end{algorithm}

Now, we are ready to prove \cref{prop:list_realizable_compression}. 
\begin{proof}[Proof of \cref{prop:list_realizable_compression}]
Let $\bt'=((x_1,y_1),\dots,(x_{n+1}.y_{n+1}))\in\Z^{n+1}$ be $\H_{\mu}$-realizable. Define $\bt=((x_i,y_i))_{i=1}^n$. 
Then, $\bt'$ is also $\H_{\mu,\bt_{\X}(x_{n+1})}$-realizable (note that $\bt_{\X}(x_{n+1})=\bt'_{\X}$) and thus, $\A(\bt'_{-i},\H_{\mu})(x_{i})=\A_{\oig}(\bt'_{-i},\H_{\mu,\bt'_\X})(x_{i})$ for all $i\in[n+1]$, where $\bt'_{-i}:=((x_j,y_j))_{j\in[n+1]\backslash\{i\}}$. 
By \cite[Lemma 17]{brukhim2022characterization}, since $\mu$ has size $p$ and $\d_\n(\H_{\mu,\bt'_\X})\le d_N$, we have
\begin{align*}
\sum_{i=1}^{n+1}\ind_{\A(\bt'_{-i},\H_{\mu})(x_{i})\neq y_i}=
\sum_{i=1}^{n+1}\ind_{\A_{\oig}(\bt'_{-i},\H_{\mu,\bt'_\X})(x_{i})\neq y_i}\le 20d_N\log(p).
\end{align*}
Any distribution $P$ over $\bt'$ is $\H_{\mu}$-realizable. Thus, for any $m\in\N$, sampling 
$$
T'=((X_1,Y_1),\dots,(X_{m+1},Y_{m+1}))\sim P^{m+1}
$$ 
and defining $T:=((X_1,Y_1),\dots,(X_{m},Y_{m}))$, we have that $T'$ is $\H_{\mu}$-realizable almost surely. 
It follows from the above inequality that
\begin{align*}
\E\left[\er_{P}(\A(T,\H_{\mu}))\right]
=&\P\left(\A(T;\H_{\mu})(X_{m+1})\neq Y_{m+1}\right)
\\=&
\E\left[\frac{1}{m+1}\sum_{i=1}^{m+1}\ind_{\A(T'_{-i},\H_{\mu})(X_i)\neq Y_i}\right]
\\ \le &
\frac{20d_N\log(p)}{m+1}.
\end{align*}
Setting $m=\lfloor60d_N\log(p)\rfloor$, for any distribution $P$ over $\bt'$, we have $
\E\left[\er_{P}(\A(T,\H_{\mu}))\right]\le\frac{1}{3}$, indicating that there exists some deterministic sequence $\br\in (\bt')^m$ such that $\er_P(\A(\br;\H_{\mu}))\le \frac{1}{3}$. 
Then, by \cref{lem:selection_margin}, there exist an integer $\ell=\Theta(\log n)$ and a selection scheme $(\kappa,\rho)$ which for $(\bt^{(1)},\dots,\bt^{(\ell)})\in (\bt')^{m\ell}$ selected by $\kappa$ from $\bt'$ with $\bt^{(i)}\in (\bt')^{m}$ for each $i\in[\ell]$, 
$$
\hat h=\rho((\bt^{(1)},\dots,\bt^{(\ell)}),\H,\mu):=\Maj(\A(\bt^{(1)},\H_\mu),\dots,\A(\bt^{(\ell)},\H_\mu))
$$
satisfies that $\hat h(x)=y$ for all $(x,y)\in \bt'$. 
As described above, the size of this selection scheme is $k(n)=m\ell=\Theta(d_N\log(p)\log(n))$. Furthermore, following \cref{alg:sample_comp_real} and \cref{alg:from_list}, we can define this selection scheme explicitly in \cref{alg:sample_comp_list}. It is clear that $\mathsf{LSCS}(\mu,\H)$ is an $l^\mu$-sample compression scheme for $\H$. 
\end{proof}

\begin{algorithm}[H]
\caption{$l^\mu$-sample compression scheme via the one-inclusion algorithm $\mathsf{LSCS}(\mu,\H)$} \label{alg:sample_comp_list}
\SetKwInput{KwInput}{Input}
\SetKwInput{KwOutput}{Output}
\KwInput{Menu $\mu:\X\to\power^{\Y}$ of size $p\in\N$, hypothesis class $\H\subseteq\Y^\X$ of Natarajan dimension $d_N\in\N$.}
\KwOutput{An $l^\mu$-sample compression scheme $(\kappa,\rho)$ for $\H$ of size $k(n)$.}
Let $\mathsf{LL}$ denote the algorithm defined in \cref{alg:from_list}\;
Let $m\gets \lfloor60d_N\log(p)\rfloor$ and define $k(n):=\lceil40\log(n+1)\rceil m$ for all $n\in\N$\;
Given $\bt=(z_1,\dots,z_n)\in\Z^n$ with $n\in\N$, $\rho(\bt)\in\Y^\X$ is calculated as follows:
\begin{enumerate}[label=\roman*.]
    \item Let $a\gets \lfloor n/m\rfloor$;
    \item If $a<1$: let $\rho(\bt)\gets h^\dagger$ for arbitrary fixed $h^\dagger\in\Y^\X$ selected a priori;

    Else: let $\bt^{(i)}\gets (z_{(i-1)m+1},\dots,z_{im})$ for each $i\in[a]$ and 
    $$\rho(\bt)\gets \Maj(\mathsf{LL}(\bt^{(1)},\H,\mu),\dots,\mathsf{LL}(\bt^{(a)},\H,\mu));
    $$
\end{enumerate}
Given $\bs\in\Z^n$ with $n\in\N$, $\kappa(\bs)$ is calculated as follows:
\begin{enumerate}[label=\roman*.]
    \item Let $\bs':=((x,y)\in\bs:y\in\mu(x))$;
    \item Let $\tilde h\in\argmax_{h\in\H}|\bs'(h)|$ and $\tilde \bs\gets \bs'(\tilde h)$;
    \item Find $\bt\in\tilde\bs^{k(n)}$ such that $\rho(\bt)$ realizes $\tilde\bs$ and let $\kappa(\bs)\gets \bt$. 
\end{enumerate}
\end{algorithm}

\section{Lower bound} \label{sec:lb}
\begin{proof}[Proof of the lower bound in \cref{thm:main}]
By \cite[Theorem 3.5]{daniely:15}, we have
\begin{align} \label{eq:lb1}
\M_{\H}^{\AG}(\eps,\delta)\ge \Omega((d_N+\log(1/\delta))/\eps^2).
\end{align}
Now we can assume that $d\ge18e$. 
By \cite[Theorem 2]{daniely2014optimal} and the relationship between transductive error rate and expected error rate, we have
\begin{align*}
\mu_\H(d)/d\ge \eps^{\RE}_\H(d-1)> 1/9e.
\end{align*}
Then, by the definition of $\mu_\H$ in \cref{def:deg}, there exists $\bx=(x_1,\dots,x_d)\in\X^d$ and $V\subseteq\H|_{\bx}$ such that $k:=|V|<\infty$ and $\avgdeg(\G(V))\ge d/9e$. 
Let $E$ denote the edge set of $G:=\G(V)$. Then, for any $i\in[d]$, we have
\begin{align*}
\frac{1}{k}\sum_{e\in\E:|e|\ge2,\dir(e)\neq i}|e|\ge\frac{1}{k}\sum_{v\in V}(\deg(v;G)-1)=\avgdeg(G)-1\ge d/9e-1\ge d/18e. 
\end{align*}
For any $v\in V$ and $i\in[d]$, define $b(v,i):=\ind\{\exists e\in E\textup{ s.t. }|e|\ge2,\ v\in e,\ \dir(e)=i\}$. 
Enumerate the vertices in $V$ as $v_1,\dots,v_k$ with $v_j=(y_{j,1},\dots,y_{j,d})\in\Y^d$. 
Sample $J\sim\unif([K])$. Then, we have 
\begin{align*}
\sum_{i=2}^d\E[b(v_J,i)]
=\frac{1}{k}\sum_{i=2}^d\sum_{j=1}^kb(v_j,i)
=\frac{1}{k}\sum_{e\in E:|e|\ge2,\dir(e)\neq 1}|e|\ge d/18e.
\end{align*}
For any $i\ge [d]$, $i_1,\dots,i_m\in[d]\backslash\{i\}$ with $m\in\N$, and $y\in\Y$, the definition of $b$ ensures that
\begin{align*}
\P(y_{J,i}\neq y\mid y_{J,i_1},\dots,y_{J,i_m},b(v_J,i)=1)\ge 1/2.
\end{align*}

For any $\epsilon\in(0,1/144e)$, consider the categorical distribution $Q_{\epsilon}$ over $[d]$ defined by $Q_{\epsilon}(\{1\})=1-144e\epsilon$ and $Q_{\epsilon}(\{i\})=\frac{144e\epsilon}{d-1}$ for $i\in[d]\backslash\{1\}$. 
Sample $(I_1,\dots,I_n,I)\sim(Q_\eps)^{n+1}$ with $n\in\N$ independently. Let $S:=((x_{I_1},y_{J,I_1}),\dots,(x_{I_n},y_{J,I_n}))$. 
For any multiclass learner $\A$ and $i\in[d]$, by the above inequality, we have
\begin{align*}
&\P(\A(S,\H)(x_i)\neq y_{J,i}\mid b(v_J,i)=1,I_{t}\neq i, \forall t\in[n])
\\ = &
\E[\P(\A(S,\H)(x_i)\neq y_{J,i}\mid \A, S, b(v_J,i)=1,I_{t}\neq i, \forall t\in[n])
\mid b(v_J,i)=1,I_{t}\neq i, \forall t\in[n]]
\\ \ge &
1/2.
\end{align*}
It follows that
\begin{align*}
&\P(\A(S,\H)(x_I)\neq y_{J,I}, I\neq 1)
\\ \ge &
\P(\A(S,\H)(x_I)\neq y_{J,I},\ I\neq 1,\ I_{t}\neq I, \forall t\in[n])
\\=&
\sum_{i=2}^d\P(\A(S,\H)(x_i)\neq y_{J,i},\ I=i,\ I_{t}\neq i, \forall t\in[n])
\\=&
\sum_{i=2}^dQ_\eps(\{i\})(1-Q_\eps(\{i\}))^n\P(\A(S,\H)(x_i)\neq y_{J,i}\mid I_{t}\neq i, \forall t\in[n])
\\ \ge &
Q_\eps(\{2\})(1-Q_\eps(\{2\}))^n\sum_{i=2}^d\P(b(v_J,i)=1)\cdot
\\&\quad\quad\quad\quad\quad\quad\quad\quad\quad\quad\quad\quad
\P(\A(S,\H)(x_i)\neq y_{J,i}\mid b(v_J,i)=1, I_{t}\neq i, \forall t\in[n])
\\ \ge &
\frac{Q_\eps(\{2\})(1-Q_\eps(\{2\}))^n}{2}\sum_{i=2}^d\E[b(v_J,i)]
\\ > &
4\eps\left(1-\frac{144e\eps}{d-1}\right)^n.
\end{align*}
Since $0<\frac{144e\eps}{d-1}<\frac{1}{3}$, we have $\log\left(1-\frac{144e\eps}{d-1}\right)>-\frac{288e\eps}{d-1}$ and $\frac{(d-1)\log(2)}{288e\epsilon}<\frac{\log(1/2)}{\log(1-144e\epsilon/(d-1))}$. 
Then, for any $n\le \frac{(d-1)\log(2)}{288e\epsilon}$, we have $\left(1-\frac{144e\epsilon}{d-1}\right)^n>
\left(1-\frac{144e\epsilon}{d-1}\right)^{\frac{\log(1/2)}{\log(1-144e\epsilon/(d-1))}}=\frac{1}{2}$ and
\begin{align*}
\P(\A(S,\H)(x_I)\neq y_{J,I}, I\neq 1)
> 2\eps. 
\end{align*}
By the probabilistic argument, there exists some $j\in[k]$ such that for $S_j:=((x_{I_t},y_{j,I_t})_{t=1}^n)$,
\begin{align*}
\P(\A(S_j,\H)(x_I)\neq y_{j,I}, I\neq 1)> 2\eps.
\end{align*}
Let $P$ denote the distribution over $\X\times\Y$ such that 
\begin{align*}
P(\{(x_1,y_{j,1})\})=Q_\eps(\{1\})=1-8\eps,\ 
P(\{(x_i,y_{j,i})\})=Q_\eps(\{i\})=8\eps/(d-1),\ \forall i\in[d]\backslash\{1\}.
\end{align*}
Then, we know that $(S_j,(x_I,y_{j,I}))\sim P^{n+1}$. 

Define a new algorithm $\A'$ by letting $\A'(\bs,\H)(x_1):=y_{j,1}$ and $\A'(\bs,\H)(x):=\A(\bs,\H)(x)$ for any $\bs\in\Z^*$ and $x\in\X\backslash\{x_1\}$. 
Then, we have
\begin{align}
\notag
\E[\er_P(\A'(S_j,\H))]=&\P(\A'(S_j,\H)(x_I)\neq y_{j,I},I\neq 1)
\\=&  \label{eq:eer_lb}
\P(\A(S_j,\H)(x_I)\neq y_{j,I},I\neq 1)
>2\eps
\end{align}
and
\begin{align*}
\er_P(\A'(S_j,\H))\le \P(I\neq 1)= 144e\eps.
\end{align*}
Suppose that 
\begin{align*}
\P(\er_P(\A(S_j,\H))>\eps)\le 1/144e.
\end{align*}
Since $\er_P(\A'(S_j,\H))\le \er_P(\A(S_j,\H))$, we also have $\P(\er_P(\A'(S_j,\H))>\eps)\le 1/144e$. 
It then follows that
\begin{align*}
\E[\er_P(\A'(S_j,\H))]\le
144e\eps\P(\er_P(\A'(S_j,\H))>\eps)
+\eps\le 2\eps,
\end{align*}
which contradicts \eqref{eq:eer_lb}. 
Thus, we must have 
\begin{align*}
\P(\er_P(\A(S_j,\H))>\eps)> 1/144e
\end{align*}
for $S_j\sim P^n$ if $n\le \frac{(d-1)\log_e(2)}{288e\epsilon}$. 
This proves that for $\eps\in(1/144e)$ and $\delta\le 1/144e$, 
\begin{align} \label{eq:lb2}
\M_{\H}^{\AG}(\eps,\delta)>\frac{(d-1)\log_e(2)}{288e\epsilon}=\Omega(d/\eps). 
\end{align}
By \eqref{eq:lb1} and \eqref{eq:lb2}, we have
\begin{align*}
\M_{\H}^{\AG}(\eps,\delta)\ge \Omega\left(\frac{d_N+\log(1/\delta)}{\eps^2}+\frac{d}{\eps}\right).
\end{align*}

\end{proof}


\section{Technical lemmas} \label{sec:lemmas}
\begin{lemma} \label{lem:cond_chernoff}
Let $(\F_i)_{i=0}^n$ ($n\in\N$) be a filtration and $(X_i)_{i\in[n]}\in[0,1]^n$ be a sequence of $(\F_i)_{i\in[n]}$-adapted random variables. Then for any $a,\delta\in(0,1]$ and $b>0$, we have 
\begin{align*}
\P\left(\max_{j\in[n]}\left(\sum_{i=1}^jX_i-(1+a)\sum_{i=1}^j\E[X_i\mid\F_{i-1}]\right)\le \frac{\log(1/\delta)}{a}\right)\ge 1-\delta
\end{align*}
and
\begin{align*}
\P\left(\min_{j\in[n]}\left(\sum_{i=1}^jX_i-(1-b)\sum_{i=1}^j\E[X_i\mid\F_{i-1}]\right)\ge -\frac{\log(1/\delta)}{2b}\right)\ge 1-\delta
\end{align*}
\end{lemma}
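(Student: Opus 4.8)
The plan is to prove both inequalities by the standard exponential-supermartingale (Freedman-type) argument, i.e.\ a conditional version of the multiplicative Chernoff bound. For the first inequality, fix the step size $\lambda := a \in (0,1]$ and consider the process
\[
M_j := \exp\!\Big(\lambda\textstyle\sum_{i=1}^j X_i - (e^\lambda-1)\sum_{i=1}^j\E[X_i\mid\F_{i-1}]\Big),\qquad j=0,1,\dots,n,
\]
with $M_0=1$. The one computation that matters is the chord bound: since $t\mapsto e^{\lambda t}$ is convex, $e^{\lambda t}\le 1+(e^\lambda-1)t$ for every $t\in[0,1]$; applying this with $t=X_i$ and taking conditional expectations gives $\E[e^{\lambda X_i}\mid\F_{i-1}]\le 1+(e^\lambda-1)\E[X_i\mid\F_{i-1}]\le\exp\!\big((e^\lambda-1)\E[X_i\mid\F_{i-1}]\big)$, so $(M_j)_{j=0}^n$ is a nonnegative supermartingale with $\E[M_0]=1$.

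Next I would invoke Doob's maximal inequality for nonnegative supermartingales to get $\P(\max_{0\le j\le n}M_j\ge 1/\delta)\le\delta$, and then translate the event. Unfolding $M_j\ge 1/\delta$ and dividing by $\lambda=a$, this event reads $\sum_{i=1}^j X_i-\frac{e^a-1}{a}\sum_{i=1}^j\E[X_i\mid\F_{i-1}]\ge\frac{\log(1/\delta)}{a}$. Using the elementary bound $\frac{e^a-1}{a}\le 1+a$ for $a\in(0,1]$ (equivalently $e^a\le 1+a+a^2$ there) together with $\E[X_i\mid\F_{i-1}]\ge 0$, the event $\big\{\sum_{i=1}^j X_i-(1+a)\sum_{i=1}^j\E[X_i\mid\F_{i-1}]\ge\frac{\log(1/\delta)}{a}\big\}$ is contained in $\{M_j\ge 1/\delta\}$; taking the union over $j$ (i.e.\ passing to the maximum, the $j=0$ term being harmless since $\log(1/\delta)\ge 0$) yields the first claim.

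The second inequality is symmetric: fix $\lambda:=2b>0$ and set $N_j:=\exp\!\big(-\lambda\sum_{i=1}^j X_i-(e^{-\lambda}-1)\sum_{i=1}^j\E[X_i\mid\F_{i-1}]\big)$, which is again a nonnegative supermartingale with $N_0=1$ via the chord bound $e^{-\lambda t}\le 1+(e^{-\lambda}-1)t$ on $[0,1]$. Doob's inequality gives $\P(\max_j N_j\ge 1/\delta)\le\delta$; unfolding $N_j\ge 1/\delta$ and dividing by $\lambda=2b$, this reads $\sum_{i=1}^j X_i-\frac{1-e^{-2b}}{2b}\sum_{i=1}^j\E[X_i\mid\F_{i-1}]\le-\frac{\log(1/\delta)}{2b}$, and the elementary bound $\frac{1-e^{-2b}}{2b}\ge 1-b$ (which follows from $e^{-u}\le 1-u+u^2/2$ for $u\ge0$) together with $\E[X_i\mid\F_{i-1}]\ge 0$ shows that the event in the statement lies inside $\{N_j\ge 1/\delta\}$, completing the proof.

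I do not expect a genuine obstacle here; the only place to be careful is the bookkeeping in the final translation step --- choosing $\lambda=a$ and $\lambda=2b$ precisely so that the coefficients $\frac{e^\lambda-1}{\lambda}$ and $\frac{1-e^{-\lambda}}{\lambda}$ are dominated by $1+a$ and bounded below by $1-b$ respectively (this is exactly where the asymmetric constants $1$ and $\tfrac12$ in the two bounds originate), and verifying that subtracting a nonnegative predictable sum times $1+a$ (resp.\ $1-b$) shifts the events in the favorable direction so that the tail bound transfers. Everything else is the textbook exponential-supermartingale computation.
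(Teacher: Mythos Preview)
Your proposal is correct and follows essentially the same approach as the paper: build an exponential supermartingale from the conditional-MGF bound and apply Doob's maximal inequality. The only cosmetic difference is that the paper bakes the target coefficients $(1+a)$ and $(1-b)$ directly into the supermartingale via the Taylor-type bounds $e^{a}\le 1+a+a^2$ and $e^{-2b}\le 1-2b+2b^2$, whereas you first use the chord bound to get a clean supermartingale and then translate the coefficient $\tfrac{e^\lambda-1}{\lambda}$ (resp.\ $\tfrac{1-e^{-\lambda}}{\lambda}$) afterward; both routes are equivalent.
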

\begin{proof}
Define $Z_j:=\E[X_j\mid\F_{j-1}]$ and $M_j:=\exp(a(\tsum_{i=1}^jX_i-(1+a)\tsum_{i=1}^jZ_i))$ for each $j\in[n]$. Define $M_0:=1$. 
Since $0\le a,X_j\le 1$ for each $j\in[n]$, we have $\exp(aX_j)\le 1+aX_j+a^2X_j^2\le 1+(a+a^2)X_j$ and
\begin{align*}
\E[\exp(aX_j)\mid\F_{j-1}]\le 1+(a+a^2)\E[X_j\mid\F_{j-1}]=1+(a+a^2)Z_j\le \exp(a(1+a)Z_j).
\end{align*}
Since $(X_j)_{j\in[n]}$ is $(\F_j)_{j\in[n]}$-adapted, we know that $(M_j)_{j=0}^n$ is $(\F_j)_{j=0}^n$-adapted and for any $j\in[n]$, 
\begin{align*}
\E[M_j\mid\F_{j-1}]=&
\E[\exp(a(\tsum_{i=1}^jX_i-(1+a)\tsum_{i=1}^jZ_i))\mid\F_{j-1}]
\\ = &
\exp(a(\tsum_{i=1}^{j-1}X_i-(1+a)\tsum_{i=1}^{j}Z_i))\E[\exp(aX_j)\mid\F_{j-1}]
\\ \le &
\exp(a(\tsum_{i=1}^{j-1}X_i-(1+a)\tsum_{i=1}^{j-1}Z_i))
=M_{j-1}.
\end{align*}
Therefore, $(M_j)$ is an $(\F_j)$-supermartingale. By Doob's maximal  inequality for supermartingales (see, e.g., \cite[Theorem 3.9]{lattimore2020bandit}), we have
\begin{align*}
\delta=\delta\E[M_0]\ge&\P\left(\max_{j\in[n]} M_j\ge 1/\delta\right)
=\P\left(\max_{j\in[n]}\left(\sum_{i=1}^jX_i-(1+a)\sum_{i=1}^jZ_i\right)\ge \frac{\log(1/\delta)}{a}\right).
\end{align*}

Following the same idea, we define $N_j:=\exp(-2b(\tsum_{i=1}^jX_i-(1-b)\tsum_{i=1}^jZ_i))$ for $j\in[n]$ and $N_0:=0$. 
Since $b\ge0$ and $X_j\in[0,1]$ for each $j\in[n]$, we have $\exp(-2bX_j)\le 1-2bX_j+2b^2X_j^2\le 1-2b(1-b)X_j$ and
\begin{align*}
\E[\exp(-2bX_j)\mid\F_{j-1}]\le 1-2b(1-b)\E[X_j\mid\F_{j-1}]=1-2b(1-b)Z_j\le \exp(-2b(1-b)Z_j).
\end{align*}
Since $(X_j)_{j\in[n]}$ is $(\F_j)_{j\in[n]}$-adapted, we know that $(N_j)_{j=0}^n$ is $(\F_j)_{j=0}^n$-adapted and for any $j\in[n]$, 
\begin{align*}
\E[N_j\mid\F_{j-1}]=&
\E[\exp(-2b(\tsum_{i=1}^jX_i-(1-b)\tsum_{i=1}^jZ_i))\mid\F_{j-1}]
\\ = &
\exp(-2b(\tsum_{i=1}^{j-1}X_i-(1-b)\tsum_{i=1}^{j}Z_i))\E[\exp(-2bX_j)\mid\F_{j-1}]
\\ \le &
\exp(-2b(\tsum_{i=1}^{j-1}X_i-(1-b)\tsum_{i=1}^{j-1}Z_i))
=N_{j-1}.
\end{align*}
Therefore, $(N_j)$ is an $(\F_j)$-supermartingale. By Doob's maximal  inequality for supermartingales, we have
\begin{align*}
\delta=\delta\E[N_0]\ge&\P\left(\max_{j\in[n]} N_j\ge 1/\delta\right)
=\P\left(\min_{j\in[n]}\left(\sum_{i=1}^jX_i-(1-b)\sum_{i=1}^jZ_i\right)\le -\frac{\log(1/\delta)}{2b}\right).
\end{align*}
\end{proof}

\begin{lemma} \label{lem:simp}
If $n_{\eps,\delta}$ satisfies \eqref{eq:ub_raw}, then it also satisfies 
\begin{align*}
n_{\eps,\delta}=O\bigg(&\frac{d_N\log^3(d_N/\eps)+\log(1/\delta)}{\eps^2}
+\frac{d\log^2(d/\eps)}{\eps}\bigg).
\end{align*}
If $d\le C(d_{DS})^{1+\alpha}\log^{\beta}(d_{DS})$ for constants $\alpha>0$, $\beta\ge0$, and $C\ge1$, \eqref{eq:ub_raw} also implies that
\begin{align*}
n_{\eps,\delta}=O\bigg(
\frac{d_N\log^3(1/\eps)+\log(1/\delta)}{\eps^2}
+\frac{(d_{DS})^{1+\alpha}\log^\beta(d_{DS})\log^2(d_{DS}/\eps)}{\eps}
\bigg).
\end{align*}
\end{lemma}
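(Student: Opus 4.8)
Both claims are purely inequalities between the right-hand side of \eqref{eq:ub_raw} and the stated simplified expressions: since $n_{\eps,\delta}=O(\text{RHS of }\eqref{eq:ub_raw})$, it suffices to bound that RHS. Write $L=\log(1/\delta)$, and recall the two inputs we may use: $d_N\le d_{DS}$ and $\Omega(d_{DS})\le d\le O(d_{DS}^{3/2}\log^2 d_{DS})$, so in particular $d_N\le O(d)$ and $1/\eps=O(d/\eps)\le(d+L)/\eps$. I would prove the first bound, and then derive the second from it by two substitutions. The easy pieces first: the summand $L/\eps^2$ of \eqref{eq:ub_raw} occurs verbatim in both targets; and for the middle summand $\tfrac{d}{\eps}\log^2\!\big(\tfrac{d+L}{\eps}\big)$ I would use $d+L\le(d+1)(L+1)$ and $1/\eps\ge1$ to get $\log\!\big(\tfrac{d+L}{\eps}\big)\le\log\!\big(\tfrac{2d}{\eps}\big)+\log\!\big(\tfrac{L+1}{\eps}\big)$, expand the square via $(a+b)^2\le 2a^2+2b^2$, bound the $\eps$-part by $\log^2(1/\eps)\le\log^2(d/\eps)$, and dispatch the remaining $\tfrac{d}{\eps}\log^2(L+1)$ by a one-line split on $L$ versus $d^2$ (if $L>d^2$ then $d<\sqrt L$ and $\sqrt L\log^2 L=O(L)$, so it folds into $L/\eps^2$; otherwise $\log(L+1)=O(\log(d/\eps))$).

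The heart is the first summand $A:=\tfrac{d_N}{\eps^2}\log\!\big(\tfrac{d+L}{\eps}\big)\log^2\!\big(\tfrac{d_N}{\eps}\log\tfrac{d+L}{\eps}\big)$. Here I would first collapse the nested logarithm: with $u=\tfrac{d+L}{\eps}>1$ and $v=\tfrac{d_N}{\eps}$, the bound $\log\log u<\log u$ and $(a+b)^2\le 2a^2+2b^2$ give $\log^2(v\log u)\le 2\log^2 u+2\log^2 v$, hence $A\le \tfrac{2d_N}{\eps^2}\log^3\!\big(\tfrac{d+L}{\eps}\big)+\tfrac{2d_N}{\eps^2}\log\!\big(\tfrac{d+L}{\eps}\big)\log^2\!\big(\tfrac{d_N}{\eps}\big)$. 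Splitting $\log\tfrac{d+L}{\eps}$ as above and using $(a+b)^3\le 4a^3+4b^3$, the quantity $A$ becomes an $O(1)$-sized combination of the four model terms $\tfrac{d_N}{\eps^2}$ times $\log^3\tfrac{2d}{\eps}$, $\log^3\tfrac{L+1}{\eps}$, $\log\tfrac{2d}{\eps}\log^2\tfrac{d_N}{\eps}$, and $\log\tfrac{L+1}{\eps}\log^2\tfrac{d_N}{\eps}$. Each is absorbed into $\tfrac{d_N\log^3(d_N/\eps)}{\eps^2}+\tfrac{d\log^2(d/\eps)}{\eps}+\tfrac{L}{\eps^2}$ by a nested case analysis whose pivots are: $d$ versus $(d_N/\eps)^2$ for the $d$-dependent model terms (when $d\le(d_N/\eps)^2$ every $\log\tfrac{2d}{\eps}$ collapses to $O(\log(d_N/\eps))$ and the term folds into $\tfrac{d_N\log^3(d_N/\eps)}{\eps^2}$; when $d>(d_N/\eps)^2$ one has $\tfrac{d_N}{\eps^2}<\tfrac{\sqrt d}{\eps}$, $\log(d_N/\eps)\le\log(d/\eps)$ and $\log(d/\eps)=O(\sqrt d)$, so the term folds into $\tfrac{d\log^2(d/\eps)}{\eps}$); and $d_N$ versus $\sqrt{L+1}$ for the $L$-dependent model terms (if $d_N\le\sqrt{L+1}$ then $\sqrt{L+1}\log^3(L+1)=O(L)$; if $d_N>\sqrt{L+1}$ then $\log(L+1)\le 2\log(d_N/\eps)$). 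All $\log^3(1/\eps)$ fragments that appear are harmless since $\log(1/\eps)\le\log(d_N/\eps)$. This establishes the first bound.

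For the second bound I would start from the first and substitute twice, under the hypothesis $d\le C d_{DS}^{1+\alpha}\log^\beta d_{DS}$. In the term $\tfrac{d\log^2(d/\eps)}{\eps}$ this gives $\tfrac{d}{\eps}\le \tfrac{Cd_{DS}^{1+\alpha}\log^\beta d_{DS}}{\eps}$ and $\log(d/\eps)=O(\log(d_{DS}/\eps))$ (because $\log d\le\log C+(1+\alpha+\beta)\log d_{DS}=O(\log d_{DS})$), so it is subsumed by the second target term. In the term $\tfrac{d_N\log^3(d_N/\eps)}{\eps^2}$, write $\log^3(d_N/\eps)=O(\log^3(1/\eps)+\log^3 d_N)$; the $\log^3(1/\eps)$ part is the first target term, and for $\tfrac{d_N\log^3 d_N}{\eps^2}$ I would set $K:=\lceil 3/\alpha\rceil$ and split on $d_N$ versus $(1/\eps)^{K}$: if $d_N\le(1/\eps)^{K}$ then $\log^3 d_N\le K^3\log^3(1/\eps)$, folding into the first target term; if $d_N>(1/\eps)^{K}$ then $1/\eps<d_N^{1/K}$, so $\tfrac{d_N\log^3 d_N}{\eps^2}<d_N^{1+2/K}\log^3 d_N=O(d_N^{1+3/K})=O(d_N^{1+\alpha})\le O(d_{DS}^{1+\alpha})$ (using $\log^3 d_N=O(d_N^{1/K})$ for the fixed constant $K$, $3/K\le\alpha$, and $d_N\le d_{DS}$), which is $O$ of the second target term since all its other factors are $\ge 1$.

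The step I expect to be genuinely fiddly is the nested case analysis for $A$: no single pivot works, one must simultaneously track the relative sizes of $d$, $d_N$, $L$ and $1/\eps$, and one must keep the $O$-constants absolute (some of them, like the one from $K=\lceil3/\alpha\rceil$, depend on $\alpha,\beta$, which is acceptable). Separately, boundary regimes where $\eps$ is bounded away from $0$ (so a logarithm may be non-positive or $O(1)$) or where $d_N,d_{DS},L$ are $O(1)$ need a quick dedicated check; these are routine but easy to forget. Everything else is bookkeeping with $\log(ab)\le\log a+\log b$, $\log(a+b)\le\log(2\max\{a,b\})$, $\log t\le t$, $\log\log t<\log t$, and "polynomial beats polylog''.
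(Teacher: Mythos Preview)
Your plan is correct and would go through; the case analyses you sketch all close (the one place where you were slightly terse is model term~4, $\tfrac{d_N}{\eps^2}\log\tfrac{L+1}{\eps}\log^2\tfrac{d_N}{\eps}$, which needs a sub-split on $\log(d_N/\eps)$ versus $\log(L+1)$ inside the $d_N\le\sqrt{L+1}$ branch, but this is routine and you flagged the general issue). However, your route is organized quite differently from the paper's. You first \emph{decompose} $\log\!\big(\tfrac{d+L}{\eps}\big)$ into a $d$-part and an $L$-part, reduce $A$ to an explicit finite list of ``model terms'', and then absorb each one via its own pivot ($d$ vs.\ $(d_N/\eps)^2$, and $d_N$ vs.\ $\sqrt{L+1}$). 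The paper instead works directly with the variables $a=d/\eps$, $b=L/\eps$, $c=d_N/\eps$ and repeatedly \emph{compares the summands against each other}: first whether the $a$-term is dominated by the $c$-term (which forces $\log a\le 3\log c$), then whether $\log b$ exceeds $c$, and so on, chaining these until the bound collapses. For the second claim you substitute into the first and pivot on $d_N$ vs.\ $(1/\eps)^{\lceil 3/\alpha\rceil}$; the paper instead pivots on whether the $d$-term dominates the $d_N$-term and, when it does not, extracts $\alpha\log d_{DS}=O(\log(1/\eps))$ directly. Your approach is more modular and mechanical (each model term is checked in isolation, so it is easier to audit piece by piece, at the cost of generating more subterms); the paper's is more compact but the chain of implications is tighter and harder to read. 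Neither relies on $d_N\le O(d)$ for the first claim, so your preliminary remark about that relation is unused there.
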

\begin{proof}
For notational convenience, define $a:=d/\eps$, $b:=\log(1/\delta)/\eps$,  $c:=d_N/\eps$, and 
$$
A:=c\log(a+b)\log^2(c\log(a+b))/\eps+a\log^2(a+b)+b/\eps.
$$
Then, \eqref{eq:ub_raw} can be written as $n_{\eps,\delta}=O(A)$. 
Without loss of generality, we can assume that $a,b,c>e$. 
If $a\log(a+b)\le c\log^2(c\log(a+b))/\eps$, we have $a\le (c/\eps)\frac{\log^2(c\log(a+b))}{\log(a+b)}$ and
\begin{align*}
\log(a)\le &\log(c/\eps)+2\log\log(c\log(a+b))-\log\log(a+b)
\\ \le & \log(c/\eps)+2\log\log(c)+2\log\log\log(a+b)-\log\log(a+b)
\\ \le &
2\log(d_N/\eps)+2\log\log(c)
\le 3\log(c). 
\end{align*}
It follows that $a+b\le c^3+b\le (c+b)^3$, $\log(a+b)\le 3\log(c+b)$, and
\begin{align*}
A\le 3c\log(c+b)\log^2(3c\log(c+b))/\eps+a\log^2(a+b)+b/\eps
\end{align*}
if $a\log(a+b)\le c\log^2(c\log(a+b))/\eps$. Therefore, we can conclude that
\begin{align} \label{eq:Aub1}
A\le &3c\log(c+b)\log^2(3c\log(c+b))/\eps+2a\log^2(a+b)+b/\eps.
\end{align}

If $\log(b)\ge c$, then $c\le b$, $\log\log(c+b)\le \log\log(2b)$, $3\log(b)\le 2b$, 
\begin{align*}
&\log^2(3c\log(c+b))\le (\log(3\log(b))+\log\log(2b))^2\le 4\log(3\log(b)), \textup{ and}\\
&c\log(c+b)\log^2(3c\log(c+b))\le 4\log(b)\log(2b)\log(3\log(b))\le 4\log^3(2b)\le 11b.
\end{align*}
If $\log(b)<c$, then $\log\log(c+b)\le \log(\log c+\log b)\le \log(2c)$, 
\begin{align*}
&\log^2(3c\log(c+b))\le (\log(3c)+\log(2c))^2\le 4\log^2(3c), \textup{ and}\\
&c\log(c+b)\log^2(3c\log(c+b))
\le 4c\log(c+b)\log^2(3c).
\end{align*}
If $4c\log(c+b)\log^2(3c)\ge b$, then we have
\begin{align*}
\log(4c)+\log\log(c+b)+2\log\log(3c)\ge\log(b).
\end{align*}
Since $\log\log(c+b)\le \log\log(c)+\log\log(b)\le \log(4c)/3+\log(b)/e$ and $\log\log(3c)\le \log(4c)/e$, the above inequalities also imply that
\begin{align*}
&(4/3+2/e)\log(4c)\ge(1-1/e)\log(b),\\
&\log(b)\le 3.3\log(4c),\\
&\log(c+b)\le \log(c+(4c)^{3.3})\le 4\log(3c),\textup{ and}\\
&4c\log(c+b)\log^2(3c)\le 16c\log^3(3c).
\end{align*}
Summarizing the above analysis, we have
\begin{align} \label{eq:Aub2}
c\log(c+b)\log^2(3c\log(c+b))\le \max\{11b,16c\log^3(3c)\}. 
\end{align}

If $2b\le a\eps\log^2(a+b)$, then $a\log^2(a+b)\ge 2b/\eps$, 
\begin{align*}
&\log(2b)\le \log(a\eps)+2\log\log(a+b)\le
(1+2/e)\log(a)+0.6\log(2b),\\
&\log(2b)\le 2.5(1+2/e)\log(a),\textup{ and}\\
&\log^2(a+b)\le (\log a+\log b)^2\le 29\log^2(a).
\end{align*}
It follows that
\begin{align} \label{eq:Aub3}
a\log^2(a+b)\le \max\{2b/\eps, 29a\log^2(a)\}.
\end{align}
By \eqref{eq:Aub1}, \eqref{eq:Aub2}, and \eqref{eq:Aub3}, we have
\begin{align*}
A=O((c\log^3(c)+b)/\eps+a\log^2(a))=O\bigg(&\frac{d_N\log^3(d_N/\eps)+\log(1/\delta)}{\eps^2}
+\frac{d\log^2(d/\eps)}{\eps}\bigg).
\end{align*}

Now, suppose that $d=C(d_{DS})^{1+\alpha}\log^{\beta}(d_{DS})$ for constants $\alpha>0$, $\beta\ge0$, and $C\ge 1$. 
If $a\log^2(a)\le c\log^3(c)/\eps$, we have $a/c\le (1/\eps)\frac{\log^3(c)}{\log^2(a)}$ and
\begin{align*}
\log(d/d_{N})=\log(a/c)\le &\log(1/\eps)+3\log\log(c)-2\log\log(a)
\\ \le &
\log(1/\eps)+3\log\log(d_N/\eps)
\le (1+3/e)\log(1/\eps)+3\log\log(d_N).
\end{align*}
Since $d=C(d_{DS})^{1+\alpha}\log^{\beta}(d_{DS})$, we have 
\begin{align*}
\alpha\log(d_{DS})\le \log(d/d_N)\le 2\log(1/\eps)+2\log\log(d_N)
\le (1+3/e)\log(1/\eps)+3\log\log(d_{DS}).
\end{align*}
By \cref{lem:logeineq}, if $d_{DS}\ge (4e/\alpha)^{8/\alpha}$, we have $\alpha\log(d_{DS})\ge4\log\log(d_{DS})$ and by the above inequality, 
\begin{align*}
\alpha \log(d_{DS})\le (1+3/e)\log(1/\eps)+(3\alpha/4)\log(d_{DS}).
\end{align*}
It follows that 
\begin{align*}
&\log(d_N)\le \log(d_{DS})\le (4(1+3/e)/\alpha)\log(1/\eps)\textup{ and}\\
&\log(d_N/\eps)\le (4(1+3/e)/\alpha+1)\log(1/\eps).
\end{align*}
In summary, we have 
\begin{align*}
c\log^3(c)/\eps\le \max\{a\log^2(a), (4(1+3/e)/\alpha+1)^3c\log^3(1/\eps)/\eps\}.
\end{align*}
Since $\alpha,\beta,C$ are constants, we immediately have
\begin{align*}
A= & O\bigg(\frac{d_N\log^3(1/\eps)+\log(1/\delta)}{\eps^2}
+\frac{d\log^2(d/\eps)}{\eps}\bigg)
\\=&
O\bigg(\frac{d_N\log^3(1/\eps)+\log(1/\delta)}{\eps^2}
+\frac{(d_{DS})^{1+\alpha}\log^\beta(d_{DS})\log^2(d_{DS}/\eps)}{\eps}\bigg).
\end{align*}

\end{proof}

\begin{lemma} \label{lem:logeineq}
If $x>0$ satisfies $x\le a\log(x/a)+b$ for some $a,b>0$, then, we have $x<2a+2b$. It follows that $x\ge2a\log(ea)+2b$ implies that $x>a\log(x)+b$. 
\end{lemma}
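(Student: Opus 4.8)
The plan is to reduce both statements to the elementary inequality $\log t < t/2$, which holds for every $t > 0$; one can see this because $t \mapsto t/2 - \log t$ is minimized at $t = 2$, where it takes the positive value $1 - \log 2$, or alternatively by writing $\log t = 2\log\sqrt{t} \le 2(\sqrt{t}-1) \le t/2$, the last step being $(\sqrt{t}-2)^2 \ge 0$.

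For the first assertion I would set $t := x/a > 0$, so that the hypothesis $x \le a\log(x/a) + b$ becomes $t \le \log t + b/a$. Plugging in $\log t < t/2$ yields $t \le \log t + b/a < t/2 + b/a$, hence $t/2 < b/a$, i.e.\ $x = at < 2b$; since $a > 0$ this certainly implies $x < 2a + 2b$ (in fact one gets the sharper bound $x < 2b$).

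For the second assertion I would argue by contraposition: suppose $x > 0$ satisfies $x \le a\log x + b$, and aim to show $x < 2a\log(ea) + 2b$. Using $a\log x = a\log(x/a) + a\log a$, the hypothesis becomes $x \le a\log(x/a) + b'$ with $b' := b + a\log a$, and since $2a\log(ea) + 2b = 2a(1 + \log a) + 2b = 2a + 2b'$, it suffices to prove $x < 2a + 2b'$. If $b' > 0$, this is exactly the first assertion applied with $b'$ in place of $b$. If $b' \le 0$, then $x \le a\log(x/a)$, i.e.\ $x/a \le \log(x/a)$, which is impossible since $\log s < s$ for all $s > 0$; so the hypothesis is vacuous and there is nothing to prove.

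The only point requiring a little care is precisely this last case distinction: when $a < 1$ the shifted constant $b' = b + a\log a$ need not be positive, so the first assertion cannot be invoked blindly, and the fix is the observation that $x \le a\log(x/a)$ has no positive solution. Apart from that, the argument is just the substitution $t = x/a$ together with one application of $\log t < t/2$, so I do not anticipate any substantive obstacle.
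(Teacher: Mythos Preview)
Your proof is correct and in fact slightly cleaner than the paper's. The paper argues by studying $f(x)=x-a\log(x/a)-b$: it observes $f'(x)=(x-a)/x$, so $f$ is increasing on $(a,\infty)$, and then checks directly that $f(2a+2b)>0$ via $(2-\log 2)a+b-a\log((a+b)/a)>(a+b)-a\log((a+b)/a)\ge 0$. This yields exactly the stated bound $x<2a+2b$. Your substitution $t=x/a$ together with the single inequality $\log t<t/2$ gives the sharper conclusion $x<2b$ in one line, with no monotonicity argument needed.

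For the second assertion the paper simply writes ``It follows that\ldots'' and does not spell out the reduction; your contrapositive with $b'=b+a\log a$ makes this explicit, and your case split catching the possibility $b'\le 0$ (when $a<1$) is a genuine point the paper glosses over. So your argument is both more self-contained and slightly stronger, at no extra cost.
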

\begin{proof}
Define $f(x):=x-a\log(x/a)-b$ for $x>0$. Then, we have $f'(x)=\frac{x-a}{x}$, which implies that $f$ decreases with $x$ for $x\in(0,a)$ and increases with $x$ for $x>a$. Since $2a+2b>a$, it suffices to prove that $f(2a+2b)> 0$. Indeed, 
\begin{align*}
f(2a+2b)=(2-\log2)a+b-a\log((a+b)/a)>a\left((a+b)/a-\log((a+b)/a)\right)\ge 0. 
\end{align*}
\end{proof}

\begin{lemma} \label{lem:log_ineq}
For any $a,b>0$, if $x\ge 200a\Log^2(a)+23b$, then we have $x\ge a\log^2(x)+b$, where $\Log:\R\to\R,\ t\mapsto\log(\max\{t,e\})$.
\end{lemma}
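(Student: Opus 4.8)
The plan is to prove the equivalent statement $\psi(x) := x - a\log^2 x - b \ge 0$ for every $x \ge x_0 := 200a\Log^2(a)+23b$. Two preliminary observations: since $\Log(a)\ge 1$ and $\Log(a)\ge\log a$ for all $a$, we have $x_0 \ge 200a$, and for $a \ge 1$ (the regime that actually arises in all applications, e.g.\ $a=\Theta(d/\eps)$) we get $x_0 > e^2$. I focus on $a \ge 1$; the remaining range of $a$ is inessential to the method. The argument has two parts: a monotonicity reduction to the single point $x_0$, and a boundary estimate at $x_0$.

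Monotonicity. We have $\psi'(x) = 1 - 2a\log(x)/x$, and $x\mapsto\log(x)/x$ is decreasing on $(e,\infty)$, so $\psi'$ is nondecreasing on $[x_0,\infty)$; hence $\psi$ is nondecreasing on $[x_0,\infty)$ as soon as $\psi'(x_0)\ge 0$, i.e.\ $x_0 \ge 2a\log x_0$. But this last inequality is implied by the boundary estimate: if $x_0 \ge a\log^2 x_0 + b$ then $x_0 \ge a\log^2 x_0$, and $x_0 > e^2$ gives $\log x_0 > 2$, so $a\log^2 x_0 \ge 2a\log x_0$. Thus it suffices to establish $\psi(x_0)\ge 0$, after which $\psi(x) \ge \psi(x_0)\ge 0$ for all $x\ge x_0$.

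Boundary estimate $\psi(x_0)\ge 0$. Since $x_0 - b = 200a\Log^2(a)+22b$, it suffices to bound $a\log^2 x_0 \le 200a\Log^2(a)+22b$, and I would split on which summand of $x_0$ dominates. If $200a\Log^2(a)\ge 23b$, then $x_0 \le 400a\Log^2(a)$, so $\log x_0 \le \log 400 + \log a + 2\log\Log a \le C_1\Log(a)$ for a small absolute constant $C_1$ (say $C_1 = 9$ with natural logs), using $\Log(a)\ge 1$ to absorb $\log 400$, $\log a\le\Log(a)$, and $\log\Log(a)\le\Log(a)$; then $a\log^2 x_0 \le C_1^2\, a\Log^2(a) \le 200a\Log^2(a)$, which suffices. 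If instead $23b > 200a\Log^2(a)$, then $a\Log^2(a)\le\tfrac{23}{200}b$ and $x_0\le 46b$; here I discard the $200a\Log^2(a)$ term and aim for $a\log^2 x_0 \le 22b$. If $a\le\sqrt b$ then $a\log^2 x_0 \le \sqrt b\,\log^2(46b)$, and since the hypothesis forces $b$ above an absolute constant, the elementary fact that $\log^2(46b)$ is of smaller order than $\sqrt b$ gives $\log^2(46b)\le 22\sqrt b$, hence $a\log^2 x_0\le 22b$. If $a>\sqrt b$ then $x_0 < 46a^2$, so $\log x_0 \le \log 46 + 2\log a \le C_2\Log(a)$ with $C_2$ small (say $6$), whence $a\log^2 x_0 \le C_2^2\, a\Log^2(a) \le C_2^2\cdot\tfrac{23}{200}b < 22b$ since $C_2^2\cdot\tfrac{23}{200}$ is well under $22$.

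The main obstacle is the boundary estimate, and in particular keeping the leading constant as small as $200$: a single crude bound such as $a \le x_0/200$ is hopelessly lossy exactly when $a$ is large, since it ignores the $\Log^2(a)$ factor; the case $23b > 200a\Log^2(a)$ must therefore be handled through the sharper inequality $a\Log^2(a)\le\tfrac{23}{200}b$, together with the further split on $a$ versus $\sqrt b$ so as to express $\log x_0$ in terms of whichever of $\Log(a)$ or $\log b$ is active. The monotonicity reduction is routine — the only care needed is to place $x_0$ past the point $e$ where $\log(x)/x$ turns decreasing, which the hypothesis guarantees for $a\ge 1$. A fully careful accounting of the numerical constants is tedious but presents no conceptual difficulty.
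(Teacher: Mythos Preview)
Your proof is correct for the range $a\ge 1$ you address and follows the same overall strategy as the paper: reduce to the single point $x_0$ via monotonicity of $\psi$, then verify $\psi(x_0)\ge 0$ by a case analysis on which summand of $x_0$ dominates. Your case decomposition is in fact a bit cleaner than the paper's: you split on $200a\Log^2(a)\gtrless 23b$ and then, in the second case, on $a\gtrless\sqrt b$, whereas the paper first splits on $a\le e$ versus $a>e$ and then makes comparable (and somewhat messier) subsplits. A minor structural difference is that the paper establishes the monotonicity step directly, by checking $x-2a\log x>0$ already at $x=200a\Log(a)$, while you deduce it as a corollary of the boundary estimate via $\log x_0>2$; both routes are fine.

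The only genuine gap is your stated restriction to $a\ge 1$. The lemma is asserted for all $a,b>0$, and the paper does treat $a\le e$ explicitly. In that regime $\Log(a)=1$, and if in addition $b$ is small then $x_0=200a+23b$ may drop below $e^2$, so your derivation of $\psi'(x_0)\ge 0$ from $\log x_0>2$ no longer applies as written. You are right that this case is not conceptually harder---the paper's handling of it is routine---but it is not automatic either, so a complete proof needs to cover it.
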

\begin{proof}
Define the function 
\begin{align*}
f(x):=x-a\log^2(x)+b,\ \forall x>0.
\end{align*}
Then, we have
\begin{align*}
f'(x)=\frac{x-2a\log x}{x},\ \forall x>0.
\end{align*}
Define the function
\begin{align*}
g(x):=x-2a\log x,\ \forall x>0.
\end{align*}
Then, we have
\begin{align*}
g'(x)=\frac{x-2a}{x},\ \forall x>0,
\end{align*}
which implies that $g(x)$ is an increasing function of $x$ for $x\ge 2a$. If follows that for any $x\ge 200a\Log(a)>2a$,
\begin{align*}
g(x)\ge g(200a\Log(a))=200a\Log(a)-2a(\log(a)+\log(\Log(a))+\log(200)). 
\end{align*}
If $a<e$, we have
\begin{align*}
g(200a\Log(a))=2(100-\log(a)-\log200)a>2(99-\log200)a>0.
\end{align*}
If $a\ge e$, we have
\begin{align*}
g(200a\Log(a))=2(99\log(a)-\log(\log(a))-\log200)a
\ge 2a(98-\log200)>0.
\end{align*}
Therefore, we have $f'(x)=g(x)/x>0$ for all $a>0$ and $x\ge 200a\Log(a)$, which implies that 
\begin{align*}
f(x)\ge f(200a\Log^2(a)+23b)
\end{align*}
for any $x\ge c:=200a\Log^2(a)+23b$. It follows that we only need to show that $c\ge a\log^2(c)+b$. 

Suppose that $a\le e$. Then, we have $c=200a+23b$,
\begin{align*}
\log(c)=\log(200a+23b)\le \log(2\max\{200a,23b\})\le \log2+\max\{\log(23b),1+\log200\},
\end{align*}
\begin{align*}
\log^2(c)\le 2\log^22+2\max\{\log^2(23b),(1+\log200)^2\}
< 81+4\log^2(23)+4\log^2(b),
\end{align*}
and
\begin{align*}
a\log^2(c)+b< 
121a+4a\log^2(b)+b.
\end{align*}
If $b\le e$, we have $a\log^2(c)+b<125a+b\le 200a+23b=c$. 
If $b> e$, we have $a\log^2(c)+b<121a+(4e+1)b< 200a+12b<c$. Thus, we have $a\log^2(c)+b<c$ for $a\le e$. 

Suppose that $a>e$. Then, we have $c=200a\log^2(a)+23b$, 
\begin{align*}
\log(c)=\log(200a\log^2(a)+23b)\le \log2+\max\{\log a+2\log\log(a)+\log200,\log b+\log23\}, 
\end{align*}
\begin{align*}
a\log^2(c)+b\le 
a+2a\max\{3\log^2(a)+12(\log\log(a))^2+3\log^2(200),2\log^2(b)+2\log^2(23)\}+b
\end{align*}
If $200a\log^2(a)\ge 23b$, we have
\begin{align*}
a\log^2(c)+b\le &
(1+6\log^2(200))a+6a(\log^2(a)+(2\log\log(a))^2)+b
\\ \le &
(1+6\log^2(200))a+12a\log^2(a)+b
\\ \le &
(13+6\log^2(200))a\log^2(a)+b
<c.
\end{align*}
On the other hand, if $23b> 200a\log^2(a)>200e$, we have $b>200e/23>23$, $b>0.746\log^3(b)$, and
\begin{align*}
a\log^2(c)+b\le
(4\log^2(23)+1)a+b+4a\log^2(b).
\end{align*}
When $a\le 5b/\log^2(b)$, we have
\begin{align*}
a\log^2(c)+b\le 
(4\log^2(23)+1)a+b+4a\log^2(b)
< 41a+21b<c.
\end{align*}
When $a>5b/\log^2(b)$, i.e., $b<\frac{a}{5}\log^2(b)$, we have $a>3.73\log(b)$, 
\begin{align*}
\log(b)<\log(a)+2\log(\log(b))-\log(5)
<3\log(a)-4.24,
\end{align*}
and
\begin{align*}
a\log^2(c)+b\le (4\log^2(23)+1)a+b+4a\log^2(b)< 
(41+36\log^2(a))a+b
<77a\log^2(a)+b<c.
\end{align*}

In conclusion, for all choices of $a,b>0$, we have $c\ge a\log^2(c)+b$, which implies that $x\ge a\log^2(x)+b$ for any $x\ge 200a\Log^2(a)+23b$. 
\end{proof}

\bibliographystyle{alpha}
\bibliography{ref}

\end{document}